\let\classAND\AND
\let\AND\relax
\let\AND\classAND
\def\eqref#1{equation~\ref{#1}}
\def\1{\bm{1}}
\def\rb{{\textnormal{b}}}
\def\rc{{\textnormal{c}}}
\def\rr{{\textnormal{r}}}
\def\rv{{\textnormal{v}}}
\def\rw{{\textnormal{w}}}
\def\rz{{\textnormal{z}}}
\def\rvg{{\mathbf{g}}}
\def\rvu{{\mathbf{i}}}
\def\rvu{{\mathbf{u}}}
\def\rvx{{\mathbf{x}}}
\def\rvy{{\mathbf{y}}}
\def\ervu{{\textnormal{u}}}
\def\vmu{{\bm{\mu}}}
\def\vg{{\bm{g}}}
\def\vk{{\bm{k}}}
\def\vu{{\bm{u}}}
\def\vv{{\bm{v}}}
\def\vx{{\bm{x}}}
\def\vy{{\bm{y}}}
\def\vz{{\bm{z}}}
\def\evu{{u}}
\def\evv{{v}}
\def\evz{{z}}
\def\mI{{\bm{I}}}
\def\mK{{\bm{K}}}
\def\mX{{\bm{X}}}
\DeclareMathAlphabet{\mathsfit}{\encodingdefault}{\sfdefault}{m}{sl}
\SetMathAlphabet{\mathsfit}{bold}{\encodingdefault}{\sfdefault}{bx}{n}
\def\sR{{\mathbb{R}}}
\def\sU{{\mathbb{U}}}
\def\sV{{\mathbb{V}}}
\def\sX{{\mathbb{X}}}
\def\sY{{\mathbb{Y}}}
\newcommand{\E}{\mathbb{E}}
\newcommand{\R}{\mathbb{R}}
\DeclareMathOperator*{\argmax}{arg\,max}
\def\sXd{{\sX_\mathrm{disc}}}
\def\sYd{{\sY_\mathrm{disc}}}
\def\sUmc{{\sU_\mathrm{MC}}}
\def\sVmc{{\sV_\mathrm{MC}}}
\def\sVmcbase{{\sV_\mathrm{MC}^\mathrm{base}}}
\let\oldforall\forall
\renewcommand{\forall}{\oldforall \, }
\let\oldexist\exists
\renewcommand{\exists}{\oldexist \: }
\newlength{\dhatheight}
\newcommand{\doublehat}[1]{%
    \settoheight{\dhatheight}{\ensuremath{\hat{#1}}}%
    \addtolength{\dhatheight}{-0.25ex}%
    \hat{\vphantom{\rule{1pt}{\dhatheight}}%
    \smash{\hat{#1}}}}
\theoremstyle{plain}
\newtheorem{theorem}{Theorem}[section]
\newtheorem{proposition}[theorem]{Proposition}
\newtheorem{lemma}[theorem]{Lemma}
\theoremstyle{definition}
\newtheorem{definition}[theorem]{Definition}
\theoremstyle{remark}
\newtheorem{remark}[theorem]{Remark}
\newtheorem*{claim}{Claim}
\title{Bayesian Optimization for\\ Non-Convex Two-Stage Stochastic Optimization Problems}
\author{\name Jack M. Buckingham \email jack.buckingham@warwick.ac.uk \\
      \addr EPSRC Centre for Doctoral Training in Mathematics for Real-World Systems\\
      Mathematics Institute\\
      University of Warwick, UK
      \AND
      \name Ivo Couckuyt \email ivo.couckuyt@ugent.be \\
      \addr Faculty of Engineering and Architecture\\
      Ghent University - imec, Belgium
      \AND
      \name Juergen Branke \email juergen.branke@wbs.ac.uk\\
      \addr Warwick Business School\\
      University of Warwick, UK}
\begin{document}

\maketitle

\begin{abstract}
Bayesian optimization is a sample-efficient method for solving expensive, black-box optimization problems. Stochastic programming concerns optimization under uncertainty where, typically, average performance is the quantity of interest.
In the first stage of a two-stage problem, here-and-now decisions must be made in the face of uncertainty, while in the second stage, wait-and-see decisions are made after the uncertainty has been resolved.
Many methods in stochastic programming assume that the objective is cheap to evaluate and linear or convex.
We apply Bayesian optimization to solve non-convex, two-stage stochastic programs which are black-box and expensive to evaluate as, for example, is often the case with simulation objectives.
We formulate a knowledge-gradient-based acquisition function to jointly optimize the first- and second-stage variables, establish a guarantee of asymptotic consistency, and provide a computationally efficient approximation.
We demonstrate comparable empirical results to an alternative we formulate with fewer approximations, which alternates its focus between the two variable types, and superior empirical results over the state of the art and the standard, na\"ive, two-step benchmark.
\end{abstract}

\section{Introduction}\label{sec:introduction}

Many optimization problems involve decisions that must be made with incomplete information. This uncertainty can be handled in several ways depending on the risk appetite of the decision maker, ranging from optimizing for the worst-case outcome, to optimizing for average performance. The Bayesian optimization community refers to all these approaches as \emph{robust} optimization problems; however, the operations research community mostly reserves the term robust optimization for worst-case settings. Optimization of the average case is the default for \emph{stochastic programming} \citep{birge2011stochprog,shapiro2021stochprog}.

The notion of a two-stage decision process was first introduced to stochastic programming by \citet{dantzig1955linear}, who considered linear programs. In contrast, we consider non-convex problems but the setting is the same.
We wish to maximize some objective in expectation over a stochastic environment. In the first stage of the optimization, we choose the \emph{here-and-now} variables or \emph{fixed design} before the uncertainty in the environment is resolved. In the second stage, we choose the \emph{wait-and-see} or \emph{adjustable variables} once the specific environment has been revealed. The aim is to learn both an optimal fixed design and a control policy mapping environments to adjustable variables to maximize the expected objective. This is illustrated in \cref{fig:example-problem}.

\begin{figure}
    \centering
    \includegraphics[width=\textwidth]{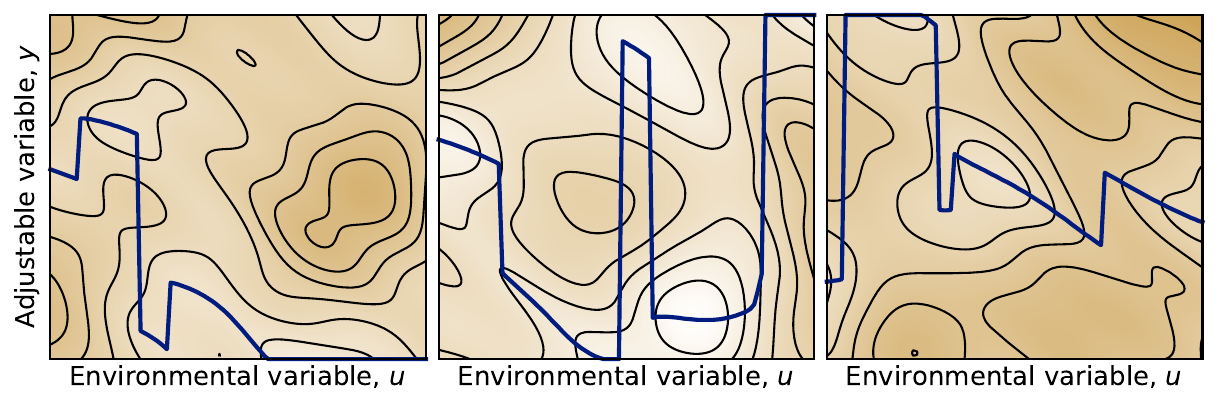}
    \caption{A contour plot for a non-convex two-stage stochastic optimization problem at three different values of the fixed design, \(\vx\). The solid blue line shows the optimal second-stage decision, \(\vy = \vg(\vu)\). The aim is to find the fixed design with the best objective value after taking the expectation over the environmental variable.}
    \label{fig:example-problem}
\end{figure}

In many cases, including cases from simulation optimization, the objective is a black box. That is, gradients are typically not available and the simulation is so complicated that there are no insights exploitable to make the optimization easier. The only available tool is to evaluate the objective at specific inputs.
Furthermore, these evaluations are often expensive. For example, they may require running time-consuming fluid dynamics or finite element analysis simulations, or be a physical experiment that not only takes time but consumes costly resources.
This variation has so far received relatively little attention in the literature and to the best of the authors' knowledge, \citep{xie2021globallocal} is the only other work to address it.

Examples of these problems arise throughout operations research and engineering, including in the design of wind farms \citep{chen2022windfarm} and the management of power stations \citep{phan2014renewable}.
Similar problems are popular in optimal control, specifically in control co-design, where the objective is to jointly optimize a fixed plant design and a control signal in the presence of a \emph{feedback loop}.
Example applications include hybrid electric vehicles \citep{yan2020hybridbus,xun2022hybridtruck}, batteries \citep{cui2021batteries} and wind turbines \citep{cui2021wind}.

Since evaluations of the objective are expensive, it pays to put more computational effort into reducing the number of evaluations required compared with a typical stochastic approximation approach with numerical gradients \citep[Section 8.2]{shapiro2021stochprog}.
Bayesian optimization is a sample-efficient, global optimization technique designed to tackle such problems. A probabilistic model of the objective -- usually a Gaussian process -- is maintained, and an acquisition function is used to measure the value of collecting a sample at a proposed input.

We can na\"ively apply Bayesian optimization to solve two-stage stochastic programs, by first optimizing the wait-and-see control policy for an arbitrary value of the fixed design, then switching to optimize the fixed design for the control policy found in the first phase. The two phases could be alternated until convergence. However, each phase requires a new initial sample, waiting for each step to converge is wasteful and slow, and the iteration is not guaranteed to converge to the global optimum. There is therefore potential to improve greatly upon this by jointly optimizing both sets of variables.

In this work, we formulate the knowledge gradient acquisition function for two-stage stochastic optimization problems without coupled constraints. We establish asymptotic consistency of the estimators of the maximum of the expected performance associated with the fixed design and control policy which would be recommended with this acquisition function at each iteration, under the assumption that the objective is drawn from a Gaussian process with known mean and covariance function. Further, we provide a computationally efficient approximation scheme to optimize this acquisition function and present an alternative algorithm that alternates its focus between the here-and-now and wait-and-see variables, making fewer approximations. We demonstrate empirically that both of these perform similarly on a suite of synthetic and real-world problems, with and without observation noise, and significantly outperform the standard `two-step' algorithm which alternates between fully optimizing the fixed design and fully optimizing the control policy.
Increasing the dimension and decreasing the intrinsic length scales of the fixed and adjustable variables makes different aspects of the problem harder and we investigate the effect of this on the three algorithms. Finally, we establish superior performance of the joint algorithm over the state of the art on a real-world supply chain example.

\subsection{Problem statement}\label{sec:problem-statement}

Let \(\sX \subset \R^{d_x}\), \(\sY \subset \R^{d_y}\) and \(\sU \subset \R^{d_u}\) denote the spaces of fixed, adjustable and environmental variables respectively. Let \(h: \sX \times \sY \times \sU \to \R\) be an unknown function, say given by a simulation, and suppose that we can make expensive observations of \(h\) at points \((\vx, \vy, \vu) \in \sX \times \sY \times \sU\) of our choosing, possibly corrupted by Gaussian noise,

\begin{equation}\label{eq:obs-model}
    \rv = h(\vx, \vy, \vu) + \varepsilon, \quad \varepsilon \sim \mathcal{N}(0, \sigma^2).
\end{equation}

This work tackles the problem of finding fixed parameters \(\vx \in \sX\) and a control policy \(\vg: \sU \to \sY\) for the adjustable parameters which maximize the expectation of \(h\) over the uncertain environment, \(\rvu\). That is, it solves
\begin{equation}\label{eq:problem-statement}
    \argmax_{\substack{\vx \in \sX \\ \vg: \sU \to \sY}} \E_\rvu \bigl[ h(\vx, \vg(\rvu), \rvu) \bigr].
\end{equation}
We assume here that the distribution of \(\rvu\) is known, say from historical observations, but we make no assumptions on what form that distribution takes.

Note that the problem can also be phrased classically as a stochastic bi-level optimization problem, by moving the optimization of \(\vg\) inside the expectation,
\begin{subequations}\label{eq:problem-statement-bilevel}
\begin{gather}
    \max_{\vx \in \sX} \E_\rvu \bigl[ h^*(\vx, \rvu) \bigr] \label{eq:problem-statement-bilevel-outer} \\
    \text{where}\quad \forall \vx \in \sX\, \forall \vu \in \sU \;
        h^*(\vx, \vu) = \max_{\vy \in \sY} h(\vx, \vy, \vu). \label{eq:problem-statement-bilevel-inner}
\end{gather}
\end{subequations}

Any constraints on \(\vx\) and \(\vy\) are assumed part of the definition of \(\sX\) and \(\sY\). Except for the supply chain problem in \cref{sec:experiments-supply-chain}, we do not consider coupled constraints where the space \(\sY\) of admissible \(\vy\) depends on \(\vx\), nor when either of \(\sX\) and \(\sY\) depend on \(\rvu\).

Throughout this work, we use an italic font, \(\vu\), to represent deterministic quantities and a straight font, \(\rvu\), for random vectors. The exception to this are Greek letters, such as the observation noise \(\varepsilon\), and the Gaussian process, \(f\), to be introduced later.

\subsection{Related work}

For worst-case robustness, the two-stage problem is known as \emph{adjustable robust optimization} \citep{bental2004adjustable, poursoltani2021adjustable}. Solution methods generally focus on finding explicit functional forms for the control policy (or `decision rule') and objectives are usually linear. \citet{yanikoglu2019adjustable} observe that the case of nonlinear objectives is under-studied.
A highly related concept is that of \emph{recoverable robustness} \cite{liebchen2009recoverablerobustness}, where the aim is to optimize a deterministic objective and find a computationally feasible `recovery algorithm' to modify the solution to ensure that scenario-dependent constraints are satisfied. The recovery algorithm takes the place of the control policy in adjustable robustness. Again, most solution methods target linear problems. A third concept is that of \emph{distributional robustness}, where the objective is now an expectation over the environmental variables, but the distribution of these variables is not well understood and the worst case over plausible distributions is considered. Again, the literature tends to focus on linear problems \citep{bertsimas2021samplerobust}.

The problem of \emph{Control Co-Design (CCD)} is very similar to two-stage stochastic optimization, and tackles non-convex problems. The role of the uncertain variables \(\rvu\) is replaced by a state vector evolving deterministically according to a difference or differential equation. The search over adjustable variables, \(\rvy\), is replaced with a search for an optimal control signal as a function of time, which can affect the state evolution. \citet{wauters2022comparative} apply Bayesian optimization in the outer loop of a nested optimization scheme to solve the deterministic CCD problem. \citet{azad2023uccdreview} review different ways of introducing uncertainty into the CCD problem, the closest of which to the problem tackled in this paper is \emph{stochastic in expectation, uncertain CCD (SE-UCCD)}. \citet{azad2022seuccd} use a nested formulation along with Monte Carlo simulation and generalized polynomial chaos expansions to solve an SE-UCCD problem. The authors note a nested scheme is necessary to decouple the optimization of the control variables for different realizations of the uncertainty.

Non-convex, two-stage stochastic optimization problems are a class of uncertain bi-level optimization where the inner and outer objectives are identical. Many more sources of uncertainty are possible in this setup beyond the uncertainty from \(\rvu\), arising when the first and second level decision makers do not know how each other will react \citep{beck2023uncertainbilevelreview}. Bayesian optimization with the Upper Confidence Bound (UCB) acquisition function has been applied to bi-level problems in \citep{fu2024bbo}, while a surrogate-assisted evolutionary algorithm was employed in \citep{sinha2022bilevel} for bi-level problems.

\citet{salomon2019activeRobustness} explore non-convex, two-stage optimization problems under uncertainty as \emph{active robust optimization}. Various risk measures are considered, including expectation, worst-case performance, value-at-risk, and target-based. Both single-objective and multi-objective problems are considered, and evolutionary algorithms are proposed for the multi-objective case.

Finally, two-stage stochastic optimization problems are closely related to standard stochastic optimization (the first-stage decision) and multi-task optimization (the second-stage decision), and Bayesian optimization has been widely applied in both cases.

Stochastic, or robust, Bayesian optimization aims to solve \(\max_{\vx \in \sX} \E_\rvu[h(\vx, \rvu)]\) and has received a lot of attention, with authors proposing acquisition strategies based on expected improvement \citep{williams2000sequential, groot2010robust} and knowledge gradient \citep{pearce2017robustkg, pearce2022crn, toscanopalmerin2022integrands} to name a few.
The related problem of robustness to perturbed or noisy inputs has also received much attention \citep{nogueira2016unscented, beland2017uncertain, oliveira2019uncertaininputs, frohlich2020robust, le2021robust, le2024robust}, as has the study of other risk measures such as value-at-risk \citep{cakmak2020riskmeasures,daulton2022robustmobo} and mean-variance trade-offs \citep{qing2022spectral}. Generally, it is assumed that the distribution of the environmental variables is known, but recent work exists to infer this distribution \citep{huang2024stochastic}.

In multi-task Bayesian optimization, the goal is to solve a finite set or continuum of optimization problems, \(\max_\vy h(\vy, \vu)\), parameterized by \(\vu\) as in \cref{eq:problem-statement-bilevel-inner}.
Acquisition functions based on entropy \citep{swersky2013multitask}, expected improvement \citep{tesch2011robust, ginsbourger2014multitask} and knowledge gradient \citep{pearce2018multitask,ding2022multitask} have been proposed previously.

The closest work to our own is that of \citet{xie2021globallocal} who tackle expensive, non-convex, two-stage stochastic optimization problems by adopting a nested approach. They use a Gaussian process surrogate model both on the inner level to optimize the control policy \(\vy = \vg(\vx)\) and on the outer level to optimize the fixed design \(\vx\). The problem setting differs slightly from the one considered in \cref{eq:problem-statement} in that \citet{xie2021globallocal} only aim to find the optimal first-stage decision \(\vx\), while we also seek to return an optimal control policy \(\vy = \vg(\vx)\). Nonetheless, the setting is sufficiently similar for us to provide a numerical comparison in \cref{sec:experiments}.

\section{Background} \label{sec:background}

In this section, we introduce the relevant background material from the field of Bayesian optimization for the case of single-stage, deterministic problems, \(h : \sX \to \sR\). These will be generalized to two-stage stochastic problems in \cref{sec:algo-acqf}.

\subsection{Gaussian processes}

A \emph{Gaussian process (GP)} on \(\sX \subset \R^d\) is a collection of random variables \((f(\vx) : \vx \in \sX)\) such that for every finite subset \(\{\vx^1, \dots, \vx^n\} \subset \sX\), the variables \(f(\vx^1), \dots, f(\vx^n)\) are jointly normal. Their mean and covariance are given by a mean function \(\mu : \sX \to \R\) and a covariance function, or kernel, \(k : \sX \times \sX \to [0, \infty)\). These functions completely determine the finite-dimensional distributions of the GP, and we write \(f \sim \mathcal{GP}(\mu, k)\).

Suppose we have made \(n\) noisy observations of the Gaussian process, \(v^1 = f(\vx^1) + \varepsilon^1, \dots, v^n = f(\vx^n) + \varepsilon^n\) for independent \(\varepsilon^1, \dots, \varepsilon^n \sim \mathcal{N}(0, \sigma^2)\).
Then the posterior distribution of \(f\) conditional on these observations is again a Gaussian process whose mean and covariance functions we shall denote by \(\mu^n\) and \(k^n\).

When used to model an unknown function, the prior mean and covariance functions of a GP should be chosen to fit the function being modeled. Popular choices for the covariance function include the squared exponential kernel and Mat\'ern family of kernels. These are parameterized by a length scale for each dimension and an output scale. The mean function is often chosen to be zero or a constant. The length scales, output scale, and constant mean, together with the variance of the observation noise are known as hyperparameters of the GP. They can be fitted by placing prior distributions on the hyperparameters and using maximum a posteriori (MAP) estimates. \citet{rasmussen2006gpml} provide an excellent introduction to Gaussian processes for machine learning.

\subsection{Bayesian optimization}

Bayesian optimization is a technique for efficiently solving expensive, black-box, global optimization problems, \(h : \sX \to \sR\). Initially, \(h\) is evaluated on a space-filling design such as a scrambled Sobol' sequence \citep{sobol1967sobolseq} or Latin hypercube (LHC) \citep{mckay2000lhc}. Afterward, additional samples are collected either one by one or in small batches, focusing on the more promising regions of the input space.

At each step, a Bayesian model is placed on the objective function and an acquisition function, \(\alpha : \sX \to \R\), is optimized to select the next sample location, \(\vx^{n+1} \in \sX\). Typically, it is assumed that \(\sX\) is compact and \(\alpha\) is continuous at all but finitely many locations, so that it has a well-defined maximum in \(\sX\).

The role of \(\alpha\) is to balance exploration of regions of the input space with high uncertainty, and exploitation of regions already known to contain good values.
Most commonly, the Bayesian model consists of a GP prior, along with an observation model with additive Gaussian noise such as \cref{eq:obs-model}.

When the evaluation budget is exhausted a recommendation is returned, for which common choices are the best input evaluated so far (a risk-averse approach) or the maximum of the posterior mean of the GP (a risk-neutral approach).

\Cref{alg:bo-basic} summarizes the Bayesian optimization algorithm to maximize a function \(h: \sX \to \sR\) using the knowledge gradient acquisition function. The review papers by \citet{shahriari2016bayesopt} and \citet{frazier2018tutorial}, and the book by \citet{garnett2023bayesian} all provide a good introduction to Bayesian optimization.

\subsection{Knowledge gradient}

A popular acquisition function capable of handling observation noise and partial information is the knowledge gradient (KG) \citep{frazier2009kg, scott2011ctskg}. The grounding principle is that at the end of the data collection process, the recommended solution will be the maximum of the posterior mean. Write \(\mu^n(\vx) = \E[f(\vx) \,|\, \mathcal{D}^n]\) for the posterior mean of the GP conditional on the \(n\) observations made during the optimization. The final recommendation is
\begin{equation}\label{eq:bg-knowledge-gradient-recommendation}
    \vx^{*n} \in \argmax_{\vx \in \sX} \mu^n(\vx).
\end{equation}

The knowledge gradient acquisition function is then constructed to be one-step Bayes-optimal. That is, maximizing it corresponds to maximizing the expected value at the recommended solution, \(\E[f(\rvx^{*(n+1)}) \,|\, \mathcal{D}^n]\), where here the expectation is taken over both \(f\) and the next observation. Let \(\rv^{n+1} = f(\vx^{n+1}) + \varepsilon^{n+1}\) be the random variable representing the next observation. The knowledge gradient is given by
\begin{equation}\label{eq:bg-knowledge-gradient}
    \alpha_\text{KG}^n(\vx) = \E_{\rv^{n+1}}\biggl[
        \max_{\rvx' \in \sX} \mu^{n+1}(\rvx')
        \,\bigg|\,
        \vx^{n+1} = \vx
    \biggr]
        - \max_{\vx' \in \sX} \mu^n(\vx').
\end{equation}
This is the expected change in the maximum of the posterior mean after making an additional observation at \(\vx\).

\begin{algorithm}
\caption{Bayesian optimization using knowledge gradient}
\label{alg:bo-basic}
\begin{algorithmic}[1]
    \REQUIRE Initial sample size \(n_0\), evaluation budget \(n_\mathrm{tot}\)
    \STATE Evaluate \(h\) at \(n_0\) points, chosen according to a space-filling design
    \FOR {\(n = n_0\) \TO \(n_\mathrm{tot} - 1\)}
        \STATE Fit hyperparameters of GP prior, \(f \sim \mathcal{GP}(\mu, k)\), using MAP estimates and all available data
        \STATE Optimize acquisition function, \(\vx^{n+1} \gets \argmax_{\vx} \alpha_\text{KG}^n(\vx)\) \label{alg-line:bo-basic-optimizeacqf}
        \STATE Evaluate expensive function, \(v^{n+1} \gets h(\vx^{n+1}) + \varepsilon^{n+1}\)
    \ENDFOR
    \STATE Fit hyperparameters of GP prior, \(f \sim \mathcal{GP}(\mu, k)\), using MAP estimates and all available data
    \STATE Compute recommendation \(\vx^{*n_\mathrm{tot}} \gets \argmax_{\vx} \mu^{n_\mathrm{tot}}(\vx)\)
    \ENSURE Recommendation, \(\vx^{*n_\mathrm{tot}}\)
\end{algorithmic}
\end{algorithm}

\subsection{Discrete approximation of knowledge gradient}\label{sec:bg-discretekg}

Optimizing the knowledge gradient on Line \ref{alg-line:bo-basic-optimizeacqf} of \cref{alg:bo-basic} is a non-trivial task \cite{ungredda2022efficient}.
One long-standing method is to replace the search space for the inner optimizations with a finite approximation, \(\sXd \subset \sX\). Then the outer expectation can either be computed analytically \citep[Algorithm 2]{frazier2009kg} or via a quasi-Monte Carlo approximation, which is the approach taken in this paper.

Write \(\mu^{n+1}(\vx'; \vx, v)\) for the posterior mean after \(n+1\) samples, where the \((n+1)\)\textsuperscript{th} sample is \(v\) and made at location \(\vx\).
Conditional on the \(n\) observations made so far, the \((n+1)\)\textsuperscript{th} observation, \(\rv^{n+1} = f(\vx) + \varepsilon^{n+1}\), is normally distributed, \(\rv^{n+1} \,|\, (\vx^1, v^1, \dots, \vx^n, v^n) \sim \mathcal{N}(\mu^n(\vx), k^n(\vx, \vx) + \sigma^2)\).
Let \(\tilde{v}^{n+1,1}, \dots, \tilde{v}^{n+1,N_v}\) be a quasi-Monte Carlo sample of \(\rv^{n+1}\) of size \(N_v\).
Then \(\alpha_\text{KG}^n(\vx)\) is approximated by
\begin{equation} \label{eq:bg-kg-discrete}
    \hat{\alpha}_\text{KG}^n(\vx)
    = \frac{1}{N_v} \sum_{i=1}^{N_v} \max_{\vx' \in \sXd} \mu^{n+1}(\vx' ; \vx, \tilde{v}^{n+1,i}) - \max_{\vx' \in \sXd} \mu^n(\vx').
\end{equation}

Using the reparameterization trick \citep{kingma2013vae, wilson2017reparam}, we can write
\begin{equation} \label{eq:reparameterization-trick}
    \rv^{n+1} = \mu^n(\vx) + \rz \sqrt{k^n(\vx, \vx) + \sigma^2}
\end{equation}
where \(\rz \sim \mathcal{N}(0, 1)\). Fixing \(N_v\) \emph{base samples}, \(\tilde{z}^1, \dots, \tilde{z}^{N_v}\), of \(\rz\) allows us to differentiate the samples \(\{v^{n+1,i}\}_i\) in \cref{eq:bg-kg-discrete} by the input \(\vx\). Therefore, we can optimize the knowledge gradient with a deterministic, gradient-based optimizer such as multi-start L-BFGS-B. A Monte Carlo approximation of an expression of the form \(\E_\rz[m(\vx, \rz)]\) is known as sample average approximation (SAA) \citep[Section 5.1]{shapiro2021stochprog}.

Rather than using independent samples for the base samples, \(\rz\), we instead use a scrambled Sobol' sequence to make \cref{eq:bg-kg-discrete} a quasi-Monte Carlo approximation. These base samples are randomly regenerated at each iteration of the Bayesian optimization loop, to avoid over-fitting to any particular set of samples.

\section{Knowledge gradient for two-stage problems}\label{sec:algo-acqf}

We return now to the setting of two-stage stochastic optimization problems, as set out in the problem statement in \cref{sec:problem-statement}.
We consistently use the following notation. A bar such as in \(\bar{\mu}(\vx, \vg)\) will be used to indicate functions or functionals formed by taking an expectation over the environmental variables \(\rvu\), a star such as in \(\vx^{*n}\) or \(\mu^{*n}\) will be used to indicate an optimal quantity (using data available at iteration \(n\)) and a hat such as in \(\hat{\alpha}_{\text{jKG}}\), \(\hat{\vx}^{*n}\) and \(\hat{\bar{\mu}}^{*n}\) will denote an approximation. Additionally, the specific symbol \(\tilde{\vx} = (\vx, \vy, \vu)\) will be used to compactly denote a point in \(\tilde{\sX} = \sX \times \sY \times \sU\). Throughout, we will also remind the reader in words of the meanings of the important quantities.

Expected improvement \citep{jones1998ego} is possibly the most common choice of acquisition function in Bayesian optimization. It compares the proposed sample location \((\vx, \vy, \vu)\) with the best-tested value so far. However, in two-stage stochastic programming we are searching for a fixed design \(\vx \in \sX\) and control policy \(\vg : \sU \to \sY\), which maximizes the quantity \(\bar{f}(\vx, \vg) = \E_\rvu[f(\vx, \vg(\rvu), \rvu)]\) as in \cref{eq:problem-statement}. Expected improvement does not readily generalize to this problem because we are making evaluations of \(f\) rather than \(\bar{f}\) - a problem of observations giving partial information. This problem also occurs in robust and multi-task Bayesian optimization, where variants of expected improvement have been proposed \citep{williams2000sequential,groot2010robust,tesch2011robust,ginsbourger2014multitask}. The knowledge gradient acquisition function has also been used to tackle problems of partial information in multi-fidelity and multi-objective optimization, the optimization of integral expressions, and the optimization of function networks \citep{daulton2023hvkg, buckingham2023latencies, toscanopalmerin2022integrands, buathong2023functionnetworks}, and is arguably a simpler way to handle partial information. Hence, we advocate the use of knowledge gradient for two-stage stochastic Bayesian optimization.

Given a Gaussian process model \(f \sim \mathcal{GP}(\mu, k)\) of the objective \(h\) and its expectation \(\bar{f}\) over a random \(\rvu\), write \(\bar{\mu}^n(\vx, \vg) = \E[\bar{f}(\vx, \vg) \,|\, \mathcal{D}^n]\) for the posterior mean of \(\bar{f}\) after \(n\) observations of \(f\). Then, by analogy with \cref{eq:bg-knowledge-gradient-recommendation}, after \(n\) observations we would recommend a fixed design and control strategy given by
\begin{equation}\label{eq:recommendation-conceptual}
    \vx^{*n},\, \vg^{*n} = \argmax_{\substack{\vx \in \sX \\ \vg: \sU \to \sY}} \bar{\mu}^n(\vx, \vg).
\end{equation}

Writing \(\tilde{\vx} = (\vx, \vy, \vu)\) for brevity, the equivalent of \cref{eq:bg-knowledge-gradient} gives what we call the \emph{joint knowledge gradient (jKG)},
\begin{equation}\label{eq:acqf-joint-kg-conceptual}
\alpha_{\text{jKG}}^n(\tilde{\vx}) = \E_{\rv^{n+1}} \Biggl[
        \underbrace{
            \max_{\substack{\rvx' \in \sX \\ \rvg': \sU \to \sY}} \, \bar{\mu}^{n+1}(\rvx', \rvg')
        }_{\substack{\text{Maximum of posterior mean} \\ \text{after next observation, } \rv^{n+1}}}
        \Bigg|\,
        \tilde{\vx}^{n+1} = \tilde{\vx}
        \Biggr]
        \quad-\!\!\!\!\!
        \underbrace{
            \max_{\substack{\vx' \in \sX \\ \vg':\sU \to \sY}} \, \bar{\mu}^n(\vx', \vg')
        }_{\substack{\text{Current maximum of posterior} \\ \text{mean (constant wrt } \tilde{\vx} \text{)}}}
\end{equation}
where now \(\rv^{n+1} = f(\vx, \vy, \vu) + \varepsilon^{n+1}\) is the next observation (a random variable).

As has been pointed out in previous works, the second term here is constant and can be neglected when optimizing the knowledge gradient. However, as proved in \cref{thm:joint-kg-nonnegative}, \(\alpha_\text{jKG}^n\) is non-negative, which can be used to improve the choice of initial locations when optimizing it in \cref{sec:acqf-joint-approximation}. Therefore, in our experiments, we do not neglect the constant second term.

\begin{proposition}\label{thm:joint-kg-nonnegative}
    The joint knowledge gradient is non-negative,
    \begin{equation*}
        \forall \tilde{\vx} \in \sX \times \sY \times \sU \quad \alpha_\text{jKG}^n(\tilde{\vx}) \geq 0.
    \end{equation*}
\end{proposition}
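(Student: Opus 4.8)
The plan is to derive the result from the martingale (tower) property of the posterior mean together with the elementary fact that a maximum dominates any one of its arguments. The constant second term $\max_{\vx', \vg'} \bar{\mu}^n(\vx', \vg')$ acts as a baseline, and I will show that the first term, the expected maximum of the posterior mean after the next observation, can never fall below it. Crucially, the second term need not be touched at all; the inequality comes entirely from bounding the first term below by the second.

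First I would record the key identity that, for any \emph{fixed} pair $(\vx', \vg')$, the posterior mean of $\bar{f}(\vx', \vg')$ is a martingale in the number of observations. Conditioning on the data $\mathcal{D}^n$ and on the next sample location $\tilde{\vx}^{n+1} = \tilde{\vx}$, and averaging over the random next observation $\rv^{n+1}$,
\[
    \E_{\rv^{n+1}}\bigl[ \bar{\mu}^{n+1}(\vx', \vg') \,\big|\, \tilde{\vx}^{n+1} = \tilde{\vx} \bigr] = \bar{\mu}^n(\vx', \vg').
\]
This is the law of iterated expectations: since $\bar{\mu}^{n+1}(\vx', \vg') = \E[\bar{f}(\vx', \vg') \mid \mathcal{D}^{n+1}]$ and the data $\mathcal{D}^n$ are contained in $\mathcal{D}^{n+1}$, averaging out the one extra observation recovers $\E[\bar{f}(\vx', \vg') \mid \mathcal{D}^n] = \bar{\mu}^n(\vx', \vg')$.

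Next I would use that, for every realization of $\rv^{n+1}$ and every fixed $(\vx', \vg')$, the maximum over all designs and policies dominates the value at $(\vx', \vg')$, namely $\max_{\rvx', \rvg'} \bar{\mu}^{n+1}(\rvx', \rvg') \geq \bar{\mu}^{n+1}(\vx', \vg')$. Taking the conditional expectation over $\rv^{n+1}$ of both sides and applying the martingale identity to the right-hand side gives
\[
    \E_{\rv^{n+1}}\Bigl[ \max_{\rvx', \rvg'} \bar{\mu}^{n+1}(\rvx', \rvg') \,\Big|\, \tilde{\vx}^{n+1} = \tilde{\vx} \Bigr] \geq \bar{\mu}^n(\vx', \vg').
\]
The left-hand side does not depend on $(\vx', \vg')$, so taking the supremum over $(\vx', \vg')$ on the right yields $\E_{\rv^{n+1}}[\max_{\rvx', \rvg'} \bar{\mu}^{n+1}(\rvx', \rvg')] \geq \max_{\vx', \vg'} \bar{\mu}^n(\vx', \vg')$, which rearranges to exactly $\alpha_\text{jKG}^n(\tilde{\vx}) \geq 0$.

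The main obstacle is not the inequality chain, which is routine, but justifying that $\max_{\rvx', \rvg'} \bar{\mu}^{n+1}(\rvx', \rvg')$ is a well-defined, measurable and integrable random variable, so that the conditional expectation and its interchange with the pointwise bound are legitimate; the inner optimization ranges over the infinite-dimensional space of control policies $\vg : \sU \to \sY$. I would handle this either by invoking the standing assumption that the relevant maxima are attained (as already used for the recommendation in \cref{eq:recommendation-conceptual}), or by replacing every $\max$ with $\sup$, in which case the pointwise bound and the passage to the supremum hold verbatim without needing maximizers to exist. Integrability needed for the martingale step follows from the Gaussian posterior having finite mean.
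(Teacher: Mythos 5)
Your proposal is correct and follows essentially the same route as the paper's proof: a pointwise bound of the maximum by the value at an arbitrary fixed pair \((\vx', \vg')\), monotonicity of conditional expectation combined with the tower property \(\E[\bar{\mu}^{n+1}(\vx', \vg') \,|\, \mathcal{F}^n] = \bar{\mu}^n(\vx', \vg')\), and finally passing to the supremum over the free pair. Your added care about measurability and integrability of the inner maximum is appropriate and matches how the paper handles it elsewhere (attainment of the bounds and finiteness of moments are established in separate lemmas in the appendix).
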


This result is proved in \cref{thm:app-jointkg-nonneg} in \cref{app:theoretical-results}.

\subsection{Asymptotic consistency of the recommendations as estimators of the supremum} \label{sec:acqf-consistency}

It is desirable to know that when using the joint knowledge gradient as an acquisition function, the value, \(\bar{h}(\vx^{*n}, \vg^{*n}) = \E_\rvu[h(\vx^{*n}, \vg^{*n}(\rvu), \rvu)]\), of the solution recommended after \(n\) samples converges to the true maximum, \(\bar{h}(\vx^{*}, \vg^{*}) = \E_\rvu[h(\vx^{*}, \vg^{*}(\rvu), \rvu)]\), as the number of samples tends to infinity.
\citet{bect2019supermartingale} established almost sure convergence with knowledge gradient in the vanilla Bayesian optimization case assuming that the hyperparameters of the Gaussian process surrogate model remain fixed.
With the same assumption on the hyperparameters, \citet{toscanopalmerin2022integrands} showed convergence of the expected values in the risk-neutral, single-stage, stochastic setting if the environmental variable space \(\sU\) is finite.
The following consistency result extends that found in \citep{bect2019supermartingale} to the case of two-stage stochastic optimization. The space \(\sU\) need not be finite.

\begin{restatable}{theorem}{jointkgconsistency}\label{thm:jointkg-consistency}
    Suppose that the objective, \(h\), is a sample from a Gaussian process on \(\sX \times \sY \times \sU\) with continuous sample paths, and that \(\sX, \sY, \sU\) are compact subsets of \(\R^{d_x}, \R^{d_y}, \R^{d_u}\), respectively.
    Assume further that the surrogate \(f \sim \mathcal{GP}(\mu, k)\) has the same hyperparameters as \(h\) so that we may identify \(f = h\).
    Then, viewing the recommendations \(\rvx^{*n}\) and \(\rvg^{*n}\) as random elements depending on the initial sample, the observation noise, and the function \(f\), we have
    \begin{equation*}
        \bar{f}(\rvx^{*n}, \rvg^{*n}) \to \sup_{\substack{\rvx \in \sX \\ \rvg : \sU \to \sY}} \bar{f}(\rvx, \rvg) \qquad\text{as}\qquad n \to \infty
    \end{equation*}
    almost surely and in mean.
\end{restatable}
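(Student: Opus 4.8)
The plan is to mimic the supermartingale argument of \citet{bect2019supermartingale}, but first to reduce the policy optimization to a pointwise one so that the tracked quantity is an ordinary maximum of a posterior mean. The key preliminary observation is that, for the posterior mean, the optimization over control policies decouples across environments: for each fixed design $\vx$,
\begin{equation*}
    \max_{\vg:\sU\to\sY}\bar{\mu}^n(\vx,\vg)
    = \max_{\vg:\sU\to\sY}\E_\rvu\bigl[\mu^n(\vx,\vg(\rvu),\rvu)\bigr]
    = \E_\rvu\Bigl[\max_{\vy\in\sY}\mu^n(\vx,\vy,\rvu)\Bigr],
\end{equation*}
where the last equality uses that $\mu^n(\vx,\cdot,\vu)$ is continuous on the compact set $\sY$, so that a measurable maximizing selection $\vu\mapsto\vg(\vu)$ exists (Kuratowski--Ryll-Nardzewski). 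Writing $\mu^{*n}=\max_\vx\E_\rvu[\max_\vy\mu^n(\vx,\vy,\rvu)]$ for the maximal posterior mean, this is exactly the constant second term of the joint knowledge gradient, and by construction $\bar{\mu}^n(\rvx^{*n},\rvg^{*n})=\mu^{*n}$.

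I would then set up the submartingale. By \cref{thm:joint-kg-nonnegative} the conditional one-step increase satisfies $\E[\mu^{*(n+1)}\mid\mathcal{D}^n,\tilde{\vx}^{n+1}]-\mu^{*n}=\alpha^n_{\text{jKG}}(\tilde{\vx}^{n+1})\geq 0$, so sampling at the jKG maximizer makes $(\mu^{*n})_n$ a submartingale whose conditional increments equal $\max_{\tilde{\vx}}\alpha^n_{\text{jKG}}(\tilde{\vx})$. Continuity of the sample paths on the compact set $\tilde{\sX}$ makes $\sup_{\tilde{\vx}}|f(\tilde{\vx})|$ integrable, which bounds $(\mu^{*n})_n$ in $L^1$; hence it converges almost surely and in $L^1$ to a limit $\mu^{*\infty}$. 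Applying the Doob decomposition, the predictable increasing part converges almost surely, so the conditional increments are almost surely summable and $\max_{\tilde{\vx}}\alpha^n_{\text{jKG}}(\tilde{\vx})\to 0$ almost surely.

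The main obstacle is converting this decay of the knowledge gradient into consistency of the recommendation, and it is here that allowing a continuous $\sU$ bites. I would lower-bound $\alpha^n_{\text{jKG}}(\vx,\vy,\vu)$ by the gain from a single suboptimal update of the recommended design--policy pair, in which the policy is allowed to react to the new observation only on a neighborhood of $\vu$ while the design stays at $\vx$. Using the joint Gaussianity of the updated posterior means, this yields a bound of expected-improvement type, essentially proportional to the posterior standard deviation at $(\vx,\vy,\vu)$ weighted by the $\rvu$-probability of a neighborhood of $\vu$ on which the kernel still transmits the observation. Crucially, because the sample paths are continuous and the kernel smooth, a single observation reduces uncertainty on a positive-measure neighborhood, so the bound is strictly positive even though any single environment has zero probability; this is precisely what replaces the finiteness of $\sU$ assumed by \citet{toscanopalmerin2022integrands}. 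Feeding $\max_{\tilde{\vx}}\alpha^n_{\text{jKG}}\to 0$ into this bound forces the posterior variance to vanish at every design--environment pair that remains asymptotically promising, and a compactness and covering argument then upgrades this to convergence of the posterior mean to $\bar{f}$ uniformly over the relevant region.

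Finally, because the surrogate is correctly specified ($f=h$), vanishing posterior variance where it matters pins down the limit: $\mu^{*\infty}=\sup_{\vx,\vg}\bar{f}(\vx,\vg)$, and the true value $\bar{f}(\rvx^{*n},\rvg^{*n})$ at the recommendation is squeezed against $\mu^{*n}$ to share this limit, giving almost sure convergence. Convergence in mean then follows from dominated convergence, since $|\bar{f}(\rvx^{*n},\rvg^{*n})|\leq\sup_{\tilde{\vx}}|f(\tilde{\vx})|$ and the dominating variable is integrable by continuity of $f$ on the compact $\tilde{\sX}$. I expect the delicate step to be the uniform-over-region control of the posterior mean from pointwise variance decay, which needs the covering argument together with equicontinuity of the posterior means.
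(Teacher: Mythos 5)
Your first two steps track the paper's proof closely: the decoupling of the policy maximization via a measurable selection argument matches \cref{thm:app-mubar-attains-bound}, and your submartingale for the maximal posterior mean \(\mu^{*n}\) is the mirror image of the paper's argument in \cref{thm:app-joint-kg-to-zero}, which telescopes the residual uncertainty \(H^n = \E[\max \bar{f} \,|\, \mathcal{F}^n] - \mu^{*n}\); both routes deliver \(\sup_{\tilde{\vx}} \alpha_\text{jKG}^n(\tilde{\vx}) \to 0\) almost surely. One small caveat there: continuity of the sample paths on a compact set gives \(\sup_{\tilde{\vx}}|f(\tilde{\vx})| < \infty\) almost surely, but integrability requires a Gaussian supremum bound (the paper invokes \citep[Equation~2.34]{azais2009levelsets} in \cref{thm:app-supf-finite-moments}); this is standard but should be cited, not deduced from continuity alone.

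The genuine gap is your third step: the claim that vanishing knowledge gradient ``forces the posterior variance to vanish at every design--environment pair that remains asymptotically promising.'' This implication is false, because the knowledge gradient is blind to uncertainty that is common to all candidate solutions. Concretely, take a kernel with a constant component, e.g.\ \(k(\tilde{\vx}, \tilde{\vx}') = c + k_0(\tilde{\vx}, \tilde{\vx}')\) with \(c > 0\), or in the extreme case \(k \equiv c\): with observation noise the posterior variance is bounded away from zero forever, yet \(\alpha_\text{jKG}^n \equiv 0\) in the extreme case since every realization of the next observation shifts all candidate values by the same amount and cannot change the argmax. Your EI-type lower bound cannot rescue this: since the same observation updates both the incumbent \((\rvx^{*n}, \rvg^{*n})\) and the candidate, the gain it certifies is proportional to the \emph{difference} of averaged covariances \(\E_{\rvu'}[k^n(\tilde{\rvx}', \tilde{\vx})] - \E_{\rvu'}[k^n(\tilde{\rvx}^{*n}, \tilde{\vx})]\), not to the posterior standard deviation. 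Indeed, that covariance-difference statement is exactly what the paper extracts from \(\alpha_\text{jKG}^n \to 0\) (via the Gaussian law of the one-step update and Markov's inequality in \cref{thm:app-kbar-constant}), concluding only that \(\bar{k}^\infty\) is almost surely equal to a \emph{possibly nonzero} random constant \(\rc_1\) --- the paper explicitly remarks that \(\rc_1 = 0\) is neither established nor needed. Consistency then follows because \(\bar{f} - \bar{\mu}^\infty\) is an almost-sure constant \(\rc_2\) (law of total variance, \cref{thm:app-fbar-minus-mubar-const-pointwise}, upgraded to a sample-path statement over countable dense designs and simple policies in \cref{thm:app-fbar-minus-mubar-const-simultaneous}), and constants cancel when comparing the recommendation against the optimum. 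Note also that the uniform convergence \(\bar{\mu}^n \to \bar{\mu}^\infty\) your final squeeze requires is obtained in the paper from martingale convergence in the Banach space \(\mathcal{C}(\tilde{\sX})\) (\cref{thm:app-mu-k-uniform-convergence,thm:app-mubar-kbar-uniform-convergence}), independently of any variance decay; your proposed derivation of uniform control ``from pointwise variance decay'' inherits the flaw above and would need to be replaced wholesale by the constant-difference argument.
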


The proof of this theorem can be found in \cref{app:theoretical-results} and is based on the work by \citet{bect2019supermartingale} and \cite{buckingham2023latencies}.
It begins by showing that the joint knowledge gradient converges almost surely, uniformly to zero. Further, the posterior mean and covariance of \(\bar{f}\) are shown to converge uniformly to continuous limits by observing that the posterior mean and covariance of \(f\) are martingales in Banach spaces. These two facts are combined to show that the uniform limit, \(\bar{\mu}^\infty\), of the posterior means, \(\bar{\mu}^n\), has sample paths that are almost surely equal to the sample paths of \(\bar{f}\) (up to a constant). Whence, convergence of the values of the recommendations can be established.

\subsection{Efficient computation and optimization}\label{sec:acqf-joint-approximation}

Performing optimizations over a function space is challenging. By making use of the bi-level formulation introduced in \cref{eq:problem-statement-bilevel}, we obtain more tractable expressions.
Indeed, by exchanging the order of \(\E_{\rv^{n+1}}\) and \(\E_\rvu\), and moving the maximization over \(\vg\) inside the expectation over \(\rvu\) in \cref{eq:recommendation-conceptual}, we obtain
\begin{subequations}\label{eq:recommendation-bilevel}
\begin{gather}
    \vx^{*n} \in \argmax_{\vx \in \sX} \E_\rvu \Bigl[ \max_{\rvy \in \sY} \mu^n(\vx, \rvy, \rvu) \Bigr] \label{eq:recommendation-bilevel-x} \\
    \text{s.t.}\qquad \forall \vu \in \sU \quad \vg^{*n}(\vu) \in \argmax_{\vy \in \sY} \mu^n(\vx^{*n}, \vy, \vu), \label{eq:recommendation-bilevel-g}
\end{gather}
\end{subequations}
where \(\mu^n(\vx, \vy, \vu) = \E[f(\vx, \vy, \vu) \,|\, \mathcal{D}^n]\) is the posterior mean of the Gaussian process model.

Performing the same manipulation to \cref{eq:acqf-joint-kg-conceptual} gives
\begin{equation}\label{eq:acqf-joint-kg-bilevel}
    \alpha_{\text{jKG}}^n(\tilde{\vx}) = \E_{\rv^{n+1}} \Biggl[\;
        \underbrace{
            \max_{\rvx' \in \sX} \, \E_{\rvu'} \! \biggl[ \max_{\rvy' \in \sY} \mu^{n+1}(\rvx', \rvy', \rvu') \biggr]
        }_{\substack{\text{Maximum of posterior mean} \\ \text{after next observation, } \rv^{n+1}}}
        \Bigg|\,
        \tilde{\vx}^{n+1} = \tilde{\vx}
        \Biggr]
        - \underbrace{
            \max_{\vx' \in \sX} \, \E_{\rvu'} \! \biggl[ \max_{\rvy' \in \sY} \mu^n(\vx', \rvy', \rvu') \biggr]
        }_{\substack{\text{Current maximum of posterior} \\ \text{mean (constant wrt } \tilde{\vx} \text{)}}}.
\end{equation}

Running a nested optimization is computationally expensive and slow. Instead, we propose a computational strategy based on successive discretization and quasi-Monte Carlo (qMC) approximation similar to that in \cref{sec:bg-discretekg}.
Specifically, we take discrete approximations \(\sXd\) and \(\sYd\) in the inner maximizations over \(\sX\) and \(\sY\), and qMC approximations \(\sUmc\) and \(\sVmc\) for the expectations over \(\rvu' \in \sU\) and \(\rv^{n+1} = f(\tilde{\vx}) + \varepsilon^{n+1} \in \R\).
The samples of \(\rv^{n+1}\) are taken conditional on the \(n\) observations made so far.
To differentiate the samples \(v \in \sVmc\) by the proposed next query location \(\tilde{\vx}\), we use the reparameterization trick as described in \cref{eq:reparameterization-trick} to write each \(v = v(\tilde{\vx}, z)\) as a function of \(\tilde{\vx}\) and a corresponding base sample \(z \in \sVmcbase\), where \(\sVmcbase\) is a Sobol' qMC approximation of a standard normal variable.
Analogously to \cref{eq:bg-kg-discrete}, write \(\mu^{n+1}(\vx', \vy', \vu' ; \tilde{\vx}, v(\tilde{\vx}, z))\) for the posterior mean of \(f(\vx', \vy', \vu')\) after \(n+1\) samples, where the \((n+1)\)\textsuperscript{th} sample is \(v(\tilde{\vx}, z)\) and made at location \(\tilde{\vx}\). Then, writing \(N_v = |\sVmc| = |\sVmcbase|\) and \(N_u = |\sUmc|\) for the sizes of the qMC approximation sets, this gives the approximation
\begin{multline}\label{eq:acqf-joint-kg-approx}
    \hat{\alpha}_{\text{jKG}}^n(\tilde{\vx})
    = \frac{1}{N_v} \sum_{z \in \sVmcbase} \max_{\vx' \in \sXd} \frac{1}{N_u} \sum_{\vu' \in \sUmc} \max_{\vy' \in \sYd} \mu^{n+1}(\vx', \vy', \vu' ; \tilde{\vx}, v(\tilde{\vx}, z)) \\
    - \underbrace{\max_{\vx' \in \sXd} \frac{1}{N_u} \sum_{\vu' \in \sUmc} \max_{\vy' \in \sYd} \mu^n(\vx', \vy', \vu')}_{\text{constant wrt } \tilde{\vx}}.
\end{multline}

As written, this is a SAA over both the environmental variable \(\rvu\) and the base samples for the next observation \(\rv^{n+1}\). It is differentiable by the proposed query location \(\tilde{\vx}\) and thus we can optimize \(\alpha_\text{jKG}^n(\tilde{\vx})\) using a gradient-based optimizer such as multi-start L-BFGS-B.

In order to speed up the optimization of the acquisition function, it is important to cache some covariances that are not cached by default in GPyTorch at the time of writing. See \cref{app:caching} for details on the caching and \cref{app:optimization-of-acqf} for a description of the choice of starts for the multi-start L-BFGS-B.

The discretizations \(\sXd\) and \(\sYd\), and qMC approximations \(\sUmc\) and \(\sVmcbase\) are chosen randomly and independently at each iteration to avoid over-fitting to a particular discretization. Samples from a random Latin hypercube (LHC) are used for \(\sXd\) and \(\sYd\), while samples of a scrambled Sobol' sequence are used for \(\sUmc\) and \(\sVmcbase\).

When the evaluation budget is exhausted, a final recommendation must be given. This is computed by generating a further Sobol' sample \(\sU_{\mathrm{MC},\text{rec}}\) and optimizing
\begin{equation}\label{eq:recommendation-bilevel-x-approx}
    \hat{\vx}^{*n} \in \argmax_{\vx \in \sX} \sum_{\vu \in \sU_{\mathrm{MC}, \text{rec}}} \max_{\vy \in \sY} \mu^{n}(\vx, \vy, \vu).
\end{equation}
The optimal control policy is the function \(\vg^{*n}(\vu) \approx \hat{\vg}^{*n}(\vu) \in \argmax_{\vy} \mu^n(\hat{\vx}^{*n}, \vy, \vu)\).
The optimization of \(\hat{\vx}^{*n}\) is done with multi-start L-BFGS-B in a fashion similar to the one-shot knowledge gradient \citep{balandat2020botorch}, while the optimization within \(\hat{\vg}^*\) is done using single-start L-BFGS-B. Care must be taken over the choice of starts for the optimization of \(\hat{\vx}^{*n}\) since many local optima exist in the landscape. See \cref{app:optimization-of-recommendation} for full details.

\cref{alg:bo-codesign-joint-kg} shows the full Bayesian optimization loop for the joint knowledge gradient acquisition function. Hyperparameters are fitted at each step using maximum a posteriori (MAP) estimates.
Further details are given in \cref{app:further-exp-details-gpsurrogate}.

\begin{algorithm}
\caption{Bayesian optimization for two-stage stochastic programs using the joint knowledge gradient}
\label{alg:bo-codesign-joint-kg}
\begin{algorithmic}[1]
    \REQUIRE Initial sample size \(n_0\), evaluation budget \(n_\mathrm{tot}\), discretization sizes \(N_x\) and \(N_y\), qMC sample sizes \(N_u\), \(N_v\) and \(N_{u,\text{rec}}\) (powers of two)
    \STATE Evaluate \(h\) at \(n_0\) points, chosen according to a scrambled Sobol' sequence on \(\sX \times \sY \times \sU\)
    \FOR {\(n = n_0\) \TO \(n_\mathrm{tot} - 1\)}
        \STATE Fit hyperparameters of GP prior, \(f \sim \mathcal{GP}(\mu, k)\), using MAP estimates
        \STATE Generate \(\sXd \gets \mathrm{LHC}(N_x)\), \(\sYd \gets \mathrm{LHC}(N_y)\), \(\sUmc \gets \mathrm{Sobol}(N_u)\), \(\sVmcbase \gets \mathrm{Sobol}(N_v)\)
        \STATE \(\vx^{n+1}, \vy^{n+1}, \vu^{n+1} \gets \argmax_{\vx, \vy, \vu} \hat{\alpha}_\text{jKG}^n(\vx, \vy, \vu;\, \sXd, \sYd, \sUmc, \sVmcbase)\)
        \STATE Evaluate \(h(\vx^{n+1}, \vy^{n+1}, \vu^{n+1})  + \varepsilon^{n+1}\) (expensive)
    \ENDFOR
    \STATE Fit hyperparameters of GP prior, \(f \sim \mathcal{GP}(\mu, k)\), using MAP estimates
    \STATE Generate \(\sU_{\mathrm{MC}, \text{rec}} \gets \mathrm{Sobol}(N_{u,\text{rec}})\)
    \STATE Optimize \(\hat{\vx}^{*n_\mathrm{tot}} \in \argmax_\vx \frac{1}{N_{u,\text{rec}}} \sum_{\vu \in \sU_{\mathrm{MC}, \text{rec}}} \max_{\vy} \mu^{n_\mathrm{tot}}(\vx, \vy, \vu)\)
    \STATE Let \(\hat{\vg}^{*n_\mathrm{tot}} : \sU \to \sY\) given by \(\hat{\vg}^{*n_\mathrm{tot}}(\vu) \in \argmax_{\vy} \mu^{n_\mathrm{tot}}(\hat{\vx}^{*n_\mathrm{tot}}, \vy, \vu)\)
    \ENSURE \(\hat{\vx}^{*n_\mathrm{tot}}, \hat{\vg}^{*n_\mathrm{tot}}\)
\end{algorithmic}
\end{algorithm}

\subsection{An alternative, alternating policy}\label{sec:acqf-alternating}

The optimization of the joint knowledge gradient policy requires several layers of approximation. Previous works on robust \citep{pearce2017robustkg, toscanopalmerin2022integrands} and multi-task \citep{pearce2018multitask} knowledge gradient, which used a discrete approximation for the inner maximization, have avoided the need for the quasi-Monte Carlo (qMC) approximation of the expectation over the next observation.
The tricks used cannot be extended to our formulation of the joint KG acquisition function. However, we can implement it in an alternating strategy switching between a pair of acquisition functions that aim to improve the fixed design, \(\vx\), and adjustable variables, \(\vy\), separately. Both acquisition functions use a GP model of the objective as a function of the whole space \(\sX \times \sY \times \sU\), and rely on the knowledge gradient principle.
\Cref{alg:bo-codesign-alternating-kg} outlines the algorithm, and the two acquisition functions are explained in detail in the following two sections.

\begin{algorithm}
\caption{Bayesian optimization for two-stage stochastic programs using the alternating knowledge gradient}
\label{alg:bo-codesign-alternating-kg}
\begin{algorithmic}[1]
    \REQUIRE Initial sample size \(n_0\), evaluation budget \(n_\mathrm{tot}\), discretization sizes \(N_x\) and \(N_y\), qMC sample sizes \(N_u\) and \(N_{u,\text{rec}}\) (powers of two)
    \STATE Evaluate \(h\) at \(n_0\) points, chosen according to a scrambled Sobol' sequence on \(\sX \times \sY \times \sU\)
    \FOR {\(n = n_0\) \TO \(n_\mathrm{tot} - 1\)}
        \STATE Fit hyperparameters of GP prior, \(f \sim \mathcal{GP}(\mu, k)\), using MAP estimates
        \STATE Generate \(\sXd \gets \mathrm{LHC}(N_x)\), \(\sYd \gets \mathrm{LHC}(N_y)\), \(\sUmc \gets \mathrm{Sobol}(N_u)\)
        \IF {\(n - n_0\) even}
            \STATE \(\vx^{n+1}, \vy^{n+1}, \vu^{n+1} \gets \argmax_{\vx, \vy, \vu} \hat{\alpha}_\text{aKG-fix}^n(\vx, \vy, \vu;\, \sXd, \sYd, \sUmc)\)
        \ELSE
            \STATE \(\vx^{n+1}, \vy^{n+1}, \vu^{n+1} \gets \argmax_{\vx, \vy, \vu} \hat{\alpha}_\text{aKG-adj}^n(\vx, \vy, \vu;\, \sXd, \sYd, \sUmc)\)
        \ENDIF
        \STATE Evaluate \(h(\vx^{n+1}, \vy^{n+1}, \vu^{n+1})\) (expensive)
    \ENDFOR
    \STATE Fit hyperparameters of GP prior, \(f \sim \mathcal{GP}(\mu, k)\), using MAP estimates
    \STATE Generate \(\sU_{\mathrm{MC}, \text{rec}} \gets \mathrm{Sobol}(N_{u,\text{rec}})\)
    \STATE Optimize \(\hat{\vx}^{*n_\mathrm{tot}} \in \argmax_\vx \frac{1}{N_{u,\text{rec}}} \sum_{\vu \in \sU_{\mathrm{MC}, \text{rec}}} \max_{\vy} \mu^{n_\mathrm{tot}}(\vx, \vy, \vu)\)
    \STATE Let \(\hat{\vg}^{*n_\mathrm{tot}} : \sU \to \sY\) given by \(\hat{\vg}^{*n_\mathrm{tot}}(\vu) \in \argmax_{\vy} \mu^{n_\mathrm{tot}}(\hat{\vx}^{*n_\mathrm{tot}}, \vy, \vu)\)
    \ENSURE \(\hat{\vx}^{*n_\mathrm{tot}}, \hat{\vg}^{*n_\mathrm{tot}}\)
\end{algorithmic}
\end{algorithm}

\subsubsection{Improving the fixed design}

The acquisition function designed to improve the fixed design asks `if we must choose the control policy \(\vg\) now but can make one more observation before choosing the fixed design \(\vx\), how much will the next sample improve the maximum of the posterior mean, in expectation?'. While this is a less natural question than the one that motivates the joint strategy, it effectively reduces our problem to a standard stochastic optimization problem, for which we can apply similar techniques to existing robust knowledge gradient formulations using discretizations \citep{pearce2017robustkg, toscanopalmerin2022integrands}. Note that when we recommend the final solution, we still use \cref{eq:recommendation-bilevel,eq:recommendation-bilevel-x-approx}, making use of all information available.

Recall the definitions of the recommendations \(\vx^{*n}, \vg^{*n}\) in \cref{eq:recommendation-bilevel}. The acquisition function is obtained from \cref{eq:acqf-joint-kg-conceptual} and \cref{eq:acqf-joint-kg-bilevel} by replacing the maximizations over \(\vg'\) with \(\vg^{*n}\) -- the optimal policy before the next observation,
\begin{subequations}\label{eq:acqf-alternating-kg-fixed}
\begin{alignat}{2}
    \alpha_\text{aKG-fix}^n(\tilde{\vx}) &= \E_{\rv^{n+1}}\biggl[\;
            \max_{\rvx' \in \sX} \bar{\mu}^{n+1}(\rvx', \vg^{*n})
        \,\bigg|\,
            \tilde{\vx}^{n+1} = \tilde{\vx}
        \biggr]
        &&-
        \max_{\vx' \in \sX} \bar{\mu}^n(\vx', \vg^{*n}) \\
    &= \E_{\rv^{n+1}}\Biggl[\;
    \underbrace{
        \max_{\rvx' \in \sX} \,\E_{\rvu'}\!\biggl[
            \mu^{n+1}(\rvx', \vg^{*n}(\rvu'), \rvu')
        \biggr]
    }_{\substack{
        \text{Maximum of posterior mean} \\
        \text{after next observation, } \rv^{n+1}, \\
        \text{if we can only change } \rvx' = \rvx^{*(n+1)}
    }}
    \,\Bigg|\,
        \tilde{\vx}^{n+1} = \tilde{\vx}
    \Biggr]
    &&- 
    \underbrace{
        \max_{\vx' \in \sX} \,\E_{\rvu'}\!\biggl[\mu^n(\vx', \vg^{*n}(\rvu'), \rvu')\biggr]
    }_{\substack{
        \text{Current maximum of posterior mean} \\
        \text{(constant wrt } \tilde{\vx} \text{)}
    }}.
\end{alignat}
\end{subequations}
This is approximated by
\begin{subequations}
\begin{align}
    \doublehat{\vx}^{*n} &\in \argmax_{\vx' \in \sXd} \frac{1}{N_u} \sum_{\vu' \in \sUmc} \max_{\vy' \in \sYd} \mu^n(\vx', \vy', \vu'), \label{eq:acqf-alternating-kg-x-approx} \\
    \forall \vu' \in \sUmc \quad \doublehat{\vg}^{*n}(\vu') &\in \argmax_{\vy' \in \sYd} \mu^n(\doublehat{\vx}^{*n}, \vy', \vu'), \label{eq:acqf-alternating-kg-g-approx} \\
    \begin{split}
    \hat{\alpha}_\text{aKG-fix}^n(\tilde{\vx}) &= \E_{\rv^{n+1}}\Biggl[\;
        \max_{\rvx' \in \sXd} \underbrace{\frac{1}{N_u} \sum_{\vu' \in \sUmc} \mu^{n+1}(\rvx', \doublehat{\vg}^{*n}(\vu'), \vu')}_{\text{Varies linearly with } \rv^{n+1}}
    \,\Bigg|\,
        \tilde{\vx}^{n+1} = \tilde{\vx}
    \,\Biggr] \\
    &\qquad\qquad\qquad\qquad\qquad\qquad\qquad\qquad - \frac{1}{N_u} \sum_{\vu' \in \sUmc} \mu^n(\vx', \doublehat{\vg}^{*n}(\vu'), \vu'). \label{eq:acqf-alternating-kg-fixed-approx}
    \end{split}
\end{align}
\end{subequations}

We observe that the posterior mean \(\mu^{n+1}(\cdot)\) varies linearly with \(\rv^{n+1}\) in \cref{eq:acqf-alternating-kg-fixed-approx}. Therefore, the expectation over \(\rv^{n+1}\) can be computed exactly using Algorithm~2 in \citep{frazier2009kg}, see \citep{pearce2017robustkg,toscanopalmerin2022integrands}. The resulting expression is deterministic and differentiable with respect to \(\tilde{\vx}\), so can be optimized using a gradient-based optimizer such as multi-start L-BFGS-B.

\subsubsection{Improving the adjustable variables}

Complementing this, the acquisition function designed to improve the adjustable variables asks the question `if we must choose the fixed design \(\vx\) now but can make one more observation before before choosing the control policy \(\vg\), how much will the next sample improve the maximum of the posterior mean, in expectation?'. By pretending that the fixed design \(\vx\) cannot be changed, the problem is reduced to a multi-task optimization problem and we can apply the multi-task knowledge gradient strategy proposed in \citep{pearce2018multitask}.

The acquisition function is obtained from \cref{eq:acqf-joint-kg-conceptual} and \cref{eq:acqf-joint-kg-bilevel} by replacing the maximizations over \(\vx'\) with \(\vx^{*n}\) from \cref{eq:recommendation-bilevel-x},
\begin{subequations}\label{eq:acqf-alternating-kg-adjustable}
\begin{alignat}{2}
    \alpha_\text{aKG-adj}^n(\tilde{\vx}) &= \E_{\rv^{n+1}} \biggl[
      \max_{\rvg': \sU \to \sY} \bar{\mu}^{n+1}(\vx^{*n}, \rvg')
      \,\bigg|\, \tilde{\vx}^{n+1} = \tilde{\vx}
      \biggr]
      &&-
      \max_{\vg': \sU \to \sY} \bar{\mu}^n(\vx^{*n}, \vg') \\
    &= \E_{\rv^{n+1}} \Biggl[\;
        \underbrace{
            \E_{\rvu'}\!\biggl[ \max_{\rvy' \in \sY} \mu^{n+1}(\vx^{*n}, \rvy', \rvu') \biggr]
        }_{\substack{
            \text{Maximum of posterior mean} \\
            \text{after next observation, } \rv^{n+1}, \\
            \text{if we can only change } \rvy' = \rvg^{*(n+1)}(\rvu')
        }}
    \,\Bigg|\,
        \tilde{\vx}^{n+1} = \tilde{\vx}
    \Biggr]
    &&-
    \underbrace{
        \E_{\rvu'}\!\biggl[ \max_{\rvy' \in \sY} \mu^n(\vx^{*n}, \rvy', \rvu') \biggr]
    }_{\substack{
        \text{Current maximum of posterior} \\
        \text{mean (constant wrt } \vx, \vy \text{ and } \vu \text{)}
    }}.
\end{alignat}
\end{subequations}
It is approximated by
\begin{multline}\label{eq:acqf-alternating-kg-adjustable-approx}
    \hat{\alpha}_\text{aKG-adj}^n(\tilde{\vx}) = \frac{1}{N_u} \sum_{\vu' \in \sUmc} \ \E_{\rv^{n+1}}\Biggl[ \max_{\rvy' \in \sYd} \!\! \underbrace{\mu^{n+1}(\doublehat{\vx}^{*n}, \rvy', \vu')}_{\text{Varies linearly with } \rv^{n+1}}
    \,\Bigg|\,
    \tilde{\vx}^{n+1} = \tilde{\vx}
    \,\Biggr] \\
    - \frac{1}{N_u} \sum_{\vu' \in \sUmc} \max_{\vy' \in \sYd} \mu^n(\doublehat{\vx}^{*n}, \vy', \vu')
\end{multline}
where \(\doublehat{\vx}^{*n}\) is given by \cref{eq:acqf-alternating-kg-x-approx}.
Again, the expectation over \(\rv^{n+1}\) can be computed exactly, the result is differentiable with respect to \(\tilde{\vx}\), and the acquisition function can be optimized using a gradient-based optimizer such as multi-start L-BFGS-B.

\subsection{The two-step incumbent method}\label{sec:acqf-twostep}
The standard approach of a particular industry partner is to design the fixed design \(\vx\) and controller \(\vg\) separately. That is, engineers must alternate between the following two steps:
\begin{enumerate}
    \item Fix the fixed design, \(\vx\), and optimize the control strategy, \(\vg\);
    \item Fix the control strategy, \(\vg\), and optimize the fixed design, \(\vx\).
\end{enumerate}
This is inefficient because there is no immediate convergence criterion to determine when to switch steps, and because the GP model must `forget' past data each time it does switch steps. For this reason, we consider a benchmark algorithm that only executes each step once and consider that future iterations of the steps would happen beyond the evaluation budget.

As in the previous sections, we employ the knowledge gradient acquisition function. In the first step, an \(\vx_\text{step-1} \in \sX\) is chosen arbitrarily, the restriction of \(h\) to \(\{\vx_\text{step-1}\} \times \sY \times \sU\) is modeled using a Gaussian process \(f_\text{step-1} : \sY \times \sU \to \R\) and the knowledge gradient (named REVI in \citep{pearce2018multitask}) is maximized to select the \(\vy\) and \(\vu\) for the next observation. At the end of the first step, an optimal policy \(\vg^{*n_{\mathrm{tot},1}}\) is extracted. In the second step, the restriction of \(h\) to \(\{(\vx, \vy, \vu) \in \sX \times \sY \times \sU : \vy = \vg^{*n_{\mathrm{tot},1}}(\vu)\}\) is modeled with a Gaussian process \(f_\text{step-2} : \sX \times \sU \to \R\) and the knowledge gradient is maximized to select the \(\vx\) and \(\vu\) for the next observation. At the end of the second step, an optimal fixed design \(\vx^{*n_{\mathrm{tot},2}}\) is extracted.

The two acquisition functions are
\begin{subequations} \label{eq:acqf-twostep-kg}
\begin{align}
    \alpha_\text{2sKG-1}^n(\vy, \vu) &= \E_{\rv_\text{step-1}^{n+1}} \Biggl[\;
        \E_{\rvu'} \biggl[ \max_{\rvy' \in \sY} \mu_\text{step-1}^{n+1}(\rvy', \rvu') \biggr]
    \,\Bigg|\,
        (\vy_\text{step-1}^{n+1}, \vu_\text{step-1}^{n+1}) = (\vy, \vu)
    \Biggr]
    -
    \E_{\rvu'} \biggl[ \max_{\rvy' \in \sY} \mu_\text{step-1}^n(\rvy', \rvu') \biggr], \\
    \alpha_\text{2sKG-2}^n(\vx, \vu) &= \E_{\rv_\text{step-2}^{n+1}} \Biggl[\;
        \max_{\rvx' \in \sX} \E_{\rvu'} \bigl[\mu_\text{step-2}^{n+1}(\rvx', \rvu')\bigr]
    \,\Bigg|\,
        (\vx_\text{step-2}^{n+1}, \vu_\text{step-2}^{n+1}) = (\vx, \vu)
    \Biggr]
    -
    \max_{\vx' \in \sX} \E_{\rvu'} \bigl[\mu_\text{step-2}^n(\vx', \rvu')\bigr].
\end{align}
\end{subequations}
Here, the hypothesized next observation is \(\rv_\text{step-1}^{n+1} = f_\text{step-1}(\vy_\text{step-1}^{n+1}, \vu_\text{step-1}^{n+1}) + \varepsilon_\text{step-1}^{n+1}\) in step one and \(\rv_\text{step-2}^{n+1} = f_\text{step-2}(\vx_\text{step-2}^{n+1}, \vu_\text{step-2}^{n+1}) + \varepsilon_\text{step-2}^{n+1}\) in step two, where \(\varepsilon_\text{step-1}^{n+1}, \varepsilon_\text{step-2}^{n+1} \sim \mathcal{N}(0, \sigma^2)\) represent observation noise.

These are approximated similarly to the alternating algorithm from \cref{sec:acqf-alternating}.
\begin{subequations} \label{eq:acqf-twostep-kg-approx}
\begin{align}
\begin{split}
    \hat{\alpha}_\text{2sKG-1}^n(\vy, \vu) &= \frac{1}{N_u} \sum_{\vu'\in \sUmc}
    \E_{\rv_\text{step-1}^{n+1}} \Biggl[\;
    \max_{\rvy' \in \sYd} \mu_\text{step-1}^{n+1}(\rvy', \vu')
    \,\Bigg|\,
        (\vy_\text{step-1}^{n+1}, \vu_\text{step-1}^{n+1}) = (\vy, \vu)
    \Biggr] \\
    &\qquad\qquad\qquad\qquad\qquad\qquad\qquad\qquad\qquad - \frac{1}{N_u} \sum_{\vu' \in \sUmc} \max_{\vy' \in \sYd} \mu_\text{step-1}^n(\vy', \vu'),
\end{split}\\
\begin{split}
    \hat{\alpha}_\text{2sKG-2}^n(\vx, \vu) &= \E_{\rv_\text{step-2}^{n+1}} \Biggl[\;
        \max_{\rvx' \in \sXd} \frac{1}{N_u}\sum_{\vu' \in \sUmc} \mu_\text{step-2}^{n+1}(\rvx', \vu')
    \,\Bigg|\,
        (\vx_\text{step-2}^{n+1}, \vu_\text{step-2}^{n+1}) = (\vx, \vu)
    \Biggr] \\
    &\qquad\qquad\qquad\qquad\qquad\qquad\qquad\qquad\qquad - \max_{\vx', \in \sXd} \frac{1}{N_u} \sum_{\vu' \in \sUmc} \mu_\text{step-2}^n(\vx', \vu').
\end{split}
\end{align}
\end{subequations}
As with the alternating algorithm, the expectations over \(\rv_\text{step-1}^{n+1}\) and \(\rv_\text{step-2}^{n+1}\) can be computed exactly using Algorithm~2 in \citep{frazier2009kg}, the result is deterministic and differentiable, and can be optimized using a gradient-based optimizer such as multi-start L-BFGS-B.

\begin{algorithm}
\caption{Bayesian optimization for two-stage stochastic programs using a two-step knowledge gradient}
\label{alg:bo-twostep}
\begin{algorithmic}[1]
    \REQUIRE Initial sample sizes \(n_1, n_2\) for each step, evaluation budget \(n_{\mathrm{tot},1}, n_{\mathrm{tot},2}\) for each step, discretization sizes \(N_x\) and \(N_y\), qMC sample sizes \(N_u\) and \(N_{u,\mathrm{rec}}\)
    \item[]
    \STATE\COMMENT{Step one (fix \(\vx\) and optimize \(\vg\))}
    \STATE Choose \(\vx_\text{step-1} \in \sX\) arbitrarily
    \STATE Evaluate \(h\) at \(n_1\) points, chosen by combining \(\vx_\text{step-1}\) with a scrambled Sobol' sequence on \(\sY \times \sU\)
    \FOR {\(n=n_1\) \TO \(n_{\mathrm{tot},1}-1\)}
        \STATE Fit MAP hyperparameters of GP prior, \(f_\text{step-1} \sim \mathcal{GP}\), on functions \(\sY \times \sU \to \R\) 
        \STATE Generate \(\sYd \gets \mathrm{LHC}(N_y)\), \(\sUmc \gets \mathrm{Sobol}(N_u)\)
        \STATE \(\vy_\text{step-1}^{n+1}, \vu_\text{step-1}^{n+1} \gets \argmax_{\vy, \vu} \alpha_\text{2sKG-1}^n(\vy, \vu;\, \sYd, \sUmc)\)
        \STATE Evaluate \(h(\vx_\text{step-1}, \vy_\text{step-1}^{n+1}, \vu_\text{step-1}^{n+1})\) (expensive)
    \ENDFOR
    \STATE Fit MAP hyperparameters of GP prior, \(f_\text{step-1} \sim \mathcal{GP}\), on functions \(\sY \times \sU \to \R\) 
    \STATE Let \(\hat{\vg}^{*n_{\mathrm{tot},1}}: \sU \to \sY\) given by \(\hat{\vg}^{*n_{\mathrm{tot},1}}(\vu) \in \argmax_{\vy} \mu_\text{step-1}^{n_{\mathrm{tot},1}}(\vy, \vu)\)
    \item[]
    \STATE\COMMENT{Step two (fix \(\vg\) and optimize \(\vx\))}
    \STATE Evaluate \(h\) at \(n_2\) points, chosen using scrambled Sobol' sequence on \(\sX \times \sU\) augmented by \(\vy\)-values given by \(\hat{\vg}^{*n_{\mathrm{tot},1}}(\cdot)\)
    \FOR {\(n=n_2\) \TO \(n_{\mathrm{tot},2}-1\)}
        \STATE Fit MAP hyperparameters of GP prior, \(f_\text{step-2} \sim \mathcal{GP}\), on functions \(\sX \times \sU \to \R\) 
        \STATE Generate \(\sXd \gets \mathrm{LHC}(N_x)\), \(\sUmc \gets \mathrm{Sobol}(N_u)\)
        \STATE \(\vx_\text{step-2}^{n+1}, \vu_\text{step-2}^{n+1} \gets \argmax_{\vx, \vu} \alpha_\text{2sKG-2}^n(\vx, \vu;\, \sXd, \sUmc)\)
        \STATE Evaluate \(h\bigl(\vx_\text{step-2}^{n+1},\, \hat{\vg}^{*n_{\mathrm{tot},1}}(\vu_\text{step-2}^{n+1}),\, \vu_\text{step-2}^{n+1} \bigr)\) (expensive)
    \ENDFOR
    \STATE Fit MAP hyperparameters of GP prior, \(f_\text{step-2} \sim \mathcal{GP}\), on functions \(\sX \times \sU \to \R\) 
    \STATE Generate \(\sU_{\mathrm{MC}, \text{rec}} \gets \mathrm{Sobol}(N_{u,\mathrm{rec}})\)
    \STATE Optimize \(\hat{\vx}^{*n_{\mathrm{tot},2}} \in \argmax_\vx \frac{1}{N_{u,\mathrm{rec}}} \sum_{\vu \in \sU_{\mathrm{MC}, \text{rec}}} \mu_\text{step-2}^{n_{\mathrm{tot},2}}(\vx, \vu)\)
    \item[]
    \ENSURE \(\hat{\vx}^{*n_{\mathrm{tot},2}}\), \(\hat{\vg}^{*n_{\mathrm{tot},1}}\)
\end{algorithmic}
\end{algorithm}

\section{Experiments} \label{sec:experiments}

To demonstrate the value of jointly optimizing the fixed design and adjustable variables, experiments are run on a variety of synthetic and real-world examples. Using the synthetic test problems sampled from Gaussian processes we investigate the effects of dimension and length scales, and demonstrate the ability of the algorithms to cope with observation noise. This setting allows us to isolate these effects without potential interference from model mismatch. The examples of optical table design and supply-chain optimization validate the algorithms in real-world optimization landscapes, and allow us to compare to the current state of the art from \citep{xie2021globallocal}.

In all experiments, the initial sample is a scrambled Sobol' sequence, whose size depends on the problem dimension and is detailed separately in \cref{app:further-exp-details-params}.

The main algorithm in this paper, presented in \cref{alg:bo-codesign-joint-kg,eq:acqf-joint-kg-approx}, jointly optimizes the fixed design \(\vx\) and control strategy \(\vg\) and we refer to it as \emph{joint KG (jKG)}. We refer to the alternating alternative from \cref{sec:acqf-alternating} as \emph{alternating KG (aKG)} and the two-step incumbent from \cref{sec:acqf-twostep} as \emph{two-step KG (2sKG)}.
In addition to comparing the three knowledge gradient algorithms, we also compare with a joint and a two-step random sampling policy, which we call \emph{joint RS (jRS)} and \emph{two-step RS (2sRS)} respectively. These policies use the same models and recommendation procedures as the joint and two-step knowledge gradient but recommend samples according to a scrambled Sobol' sequence.
The fixed design \(\vx_\text{step-1}\) used in the first step of the two-step algorithms is in the center of the space \(\sX\).

\subsection{Performance metric}

We measure performance using \emph{simple regret}, also known as \emph{opportunity cost}, which is the expected difference in the value of the solution we would recommend after \(n\) samples and the best possible value for that problem. As in previous sections, we write \(\vx^*\) and \(\vg^*\) for the optimal fixed design and control policy, and \(\rvx^{*n}\) and \(\rvg^{*n}\) for the recommendations after step \(n\). The recommendations \(\rvx^{*n}\) and \(\rvg^{*n}\) depend on the initial data and the observation noise, and so are random. The simple regret is given by

\begin{equation}\label{eq:simple-regret}
    r^n = \E_\rvu[h(\vx^*, \vg^*(\rvu), \rvu)] - \E_{\rvx^{*n}, \rvg^{*n}}\bigl[\E_\rvu[h(\rvx^{*n}, \rvg^{*n}(\rvu), \rvu]\bigr].
\end{equation}

For synthetic functions sampled from a GP, we take expectation over the test problem as well and call the result \emph{expected simple regret}, \(\E_h[\rr^n]\).

We estimate the simple regret by repeating experiments \(M=100\) times with different random seeds and taking the mean. We use a scrambled Sobol' sample of size \(N_{u,\text{rec}} = 128\) for the quasi-Monte Carlo approximation over \(\rvu\). Writing \(\vx^{*n,i}, \vg^{*n,i}\) for the recommendations in the \(i\)\textsuperscript{th} experiment after \(n\) data points have been collected, and \(\vu^1, \dots, \vu^{N_{u,\text{rec}}}\) for the qMC sample of \(\rvu\), the approximate simple regret is
\begin{equation}\label{eq:simple-regret-approx}
    \hat{r}^n = \frac{1}{M N_{u,\text{rec}}} \sum_{i=1}^M \sum_{j=1}^{N_{u,\text{rec}}} \biggl(
        h(\vx^*, \vg^{*}(\vu^j), \vu^j) - h(\vx^{*n,i}, \vg^{*n,i}(\vu^j), \vu^j)
    \biggr).
\end{equation}
For the synthetic functions sampled from a GP, each of the \(M\) experiments is run with a different sample \(h^i\) from the test problem GP.

In the results that follow, we plot this estimate as a function of \(n\) for each algorithm and indicate the region of two standard errors on either side of the mean. By the Central Limit Theorem, this will approximate a 95\% confidence interval.

\subsection{Synthetic examples: random GP samples} \label{sec:experiments-gp}

The synthetic examples are generated by sampling a GP. By using GP samples, we remove model-mismatch issues from the picture, such as landscapes with heterogeneous length scales. For each family of problems, characterized by dimension and length scale, 100 different test problems are generated by taking 100 independent samples of the GP. The samples are generated by randomly sampling the weights and basis vectors for 1024 random Fourier features as in \citep{rahimi2007rff}, using the implementation in BoTorch \citep{balandat2020botorch}.
Note that while the GP test problems are sampled from a known distribution, we still fit the hyperparameters of the surrogate GP to the available observations at each iteration using MAP estimates, as described in \cref{app:further-exp-details-gpsurrogate}.

\Cref{fig:results-dimensions} shows results for test problems generated from six-dimensional GPs, with different distributions of the dimensions between the variable types. Joint KG and alternating KG perform well in all cases, with the joint version marginally outperforming the alternating one in some places and both significantly outperforming the random and two-step benchmarks.

The two-step KG performs at its best when the higher dimensionality is in the adjustable variable space. This is because it optimizes the adjustable variables first, which is the harder part of the problem. It makes almost no improvements in the first step when the higher dimensionality is in the fixed design space.

\begin{figure}[htbp]
    \centering
    \includegraphics[width=\textwidth]{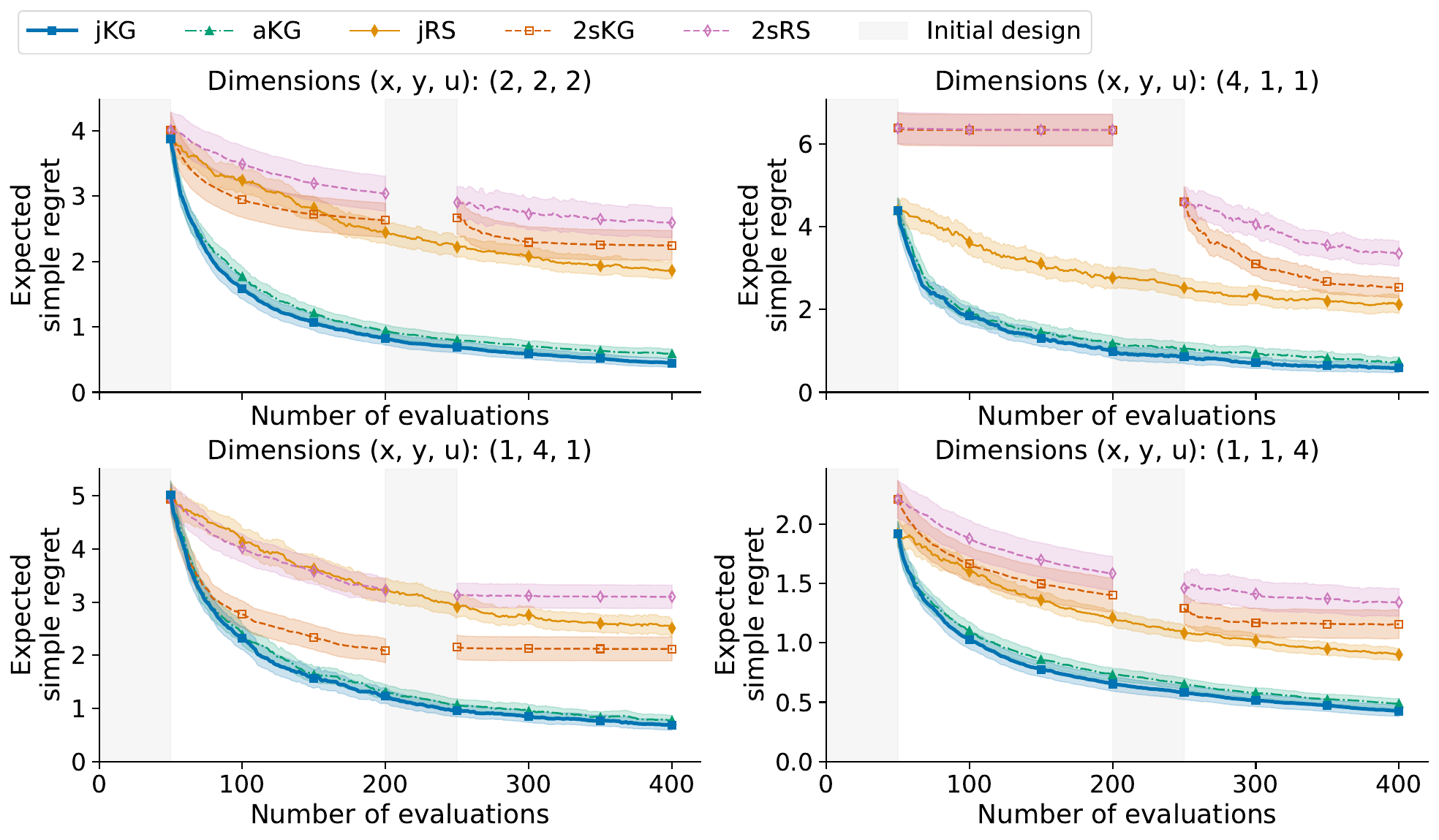}
    \caption{Evolution of the expected simple regret for GP sampled problems of different dimensions, with gray shading showing the evaluations attributed to the initial design. Note that the joint and alternating algorithms only require the first initial design phase, while the two-step algorithms require both. The colored shaded regions around each line indicate two standard errors either side of the mean (an approximate 95\% confidence interval). Joint KG and alternating KG perform well in all cases, with joint KG approaching zero regret slightly faster than alternating KG. Both algorithms outperform the two-step and random sampling benchmark algorithms. The two-step algorithms do particularly badly when the \((\vx,\vy,\vu)\)-dimension is \((4,1,1)\) since the first step only optimizes the adjustable variables while the bulk of the optimization problem is in the fixed design space.}
    \label{fig:results-dimensions}
\end{figure}

We see a similar pattern when examining problems of different length scales. \Cref{fig:results-lengthscales} shows results for test problems generated from three-dimensional GPs -- an \((\vx, \vy, \vu)\)-dimension of \((1, 1, 1)\) -- where one of the variables has a short length scale and the other two have a very long length scale. Again, the joint and alternating KG variants are consistently the best. The two-step algorithms are particularly poor when the short length scale is in the \(\vx\)-dimension, since the fixed design is not optimized until the second step.

\begin{figure}[htbp]
    \centering
    \includegraphics[width=\textwidth]{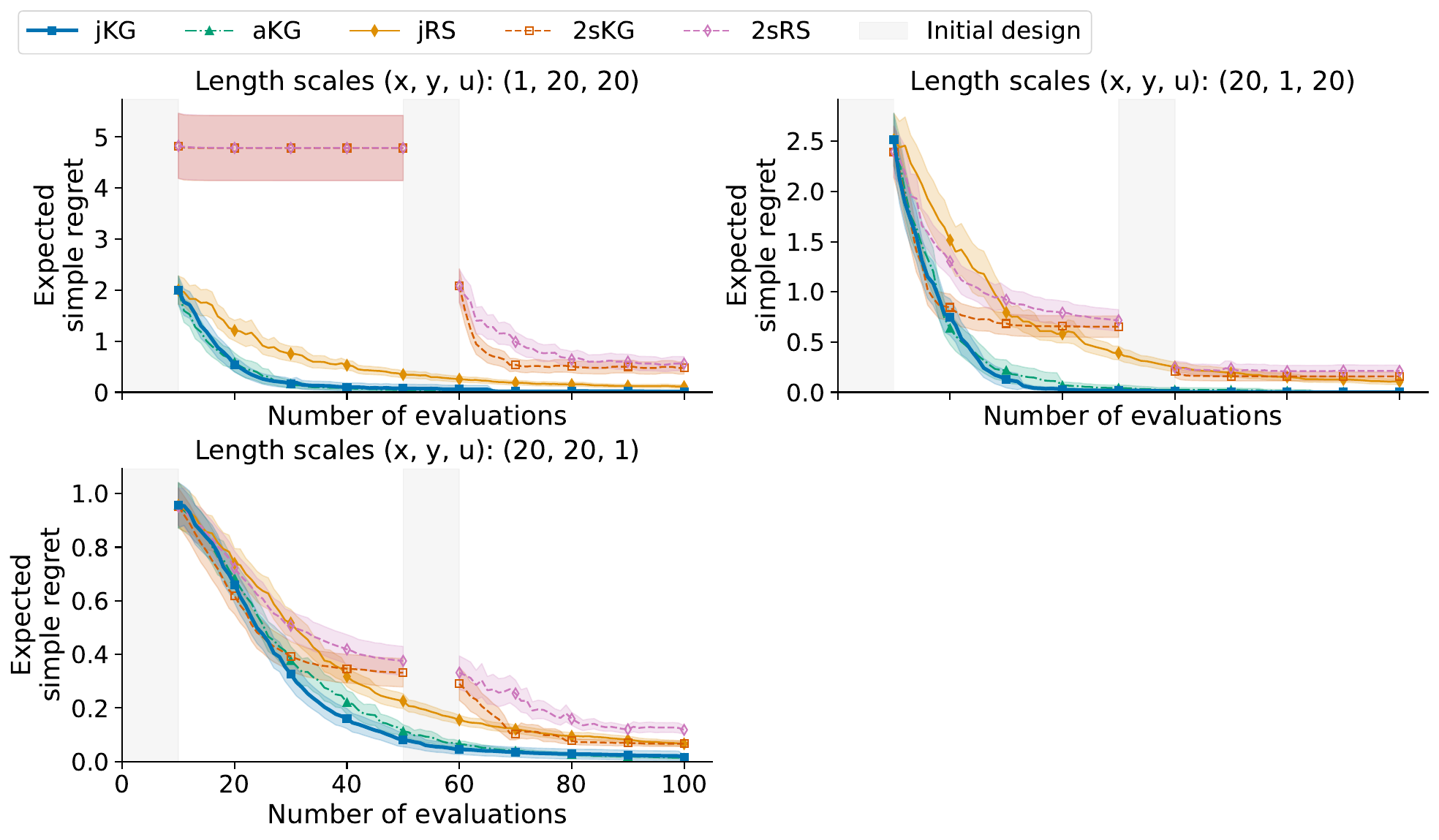}
    \caption{Evolution of the expected simple regret for GP sampled problems of different length scales, with gray shading showing the evaluations attributed to the initial design. Note that the joint and alternating algorithms only require the first initial design phase, while the two-step algorithms require both. The colored shaded regions around each line indicate two standard errors either side of the mean (an approximate 95\% confidence interval). Joint KG and alternating KG are consistently the best performers, and the two-step algorithms perform particularly poorly when the short length scale is in the \(\vx\)-dimension.}
    \label{fig:results-lengthscales}
\end{figure}

One motivation for using knowledge gradient is that it naturally handles observation noise. \Cref{fig:results-noisy} shows results for GP generated test problems of \((\vx, \vy, \vu)\)-dimension \((2, 2, 2)\) with additive Gaussian observation noise. It confirms that the algorithms are not adversely affected by the inclusion of observation noise and that joint and alternating KG continue to significantly outperform the Sobol' and two-step benchmark algorithms.

\begin{figure}[htbp]
    \centering
    \includegraphics[width=0.5\textwidth]{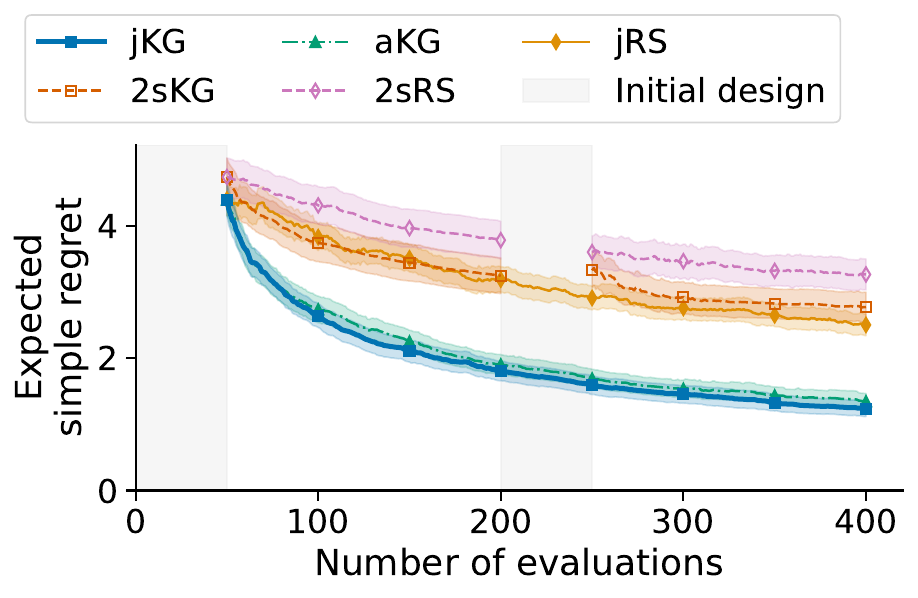}
    \caption{Evolution of the expected simple regret for GP sampled problems with additive Gaussian observation noise of standard deviation \(\sigma=2\). The problems have \((\vx, \vy, \vu)\)-dimension \((2, 2, 2)\) and come from the same distribution as those which appear in \cref{fig:results-dimensions}. Gray shading shows the evaluations attributed to the initial design. Note that the joint and alternating algorithms only require the first initial design phase, while the two-step algorithms require both. The colored shaded regions around each line indicate two standard errors either side of the mean (an approximate 95\% confidence interval). While the performance of all algorithms gets worse with increased levels of observation noise, the joint and alternating KG continue to outperform the Sobol' sequence and two-step benchmarks.}
    \label{fig:results-noisy}
\end{figure}

\subsection{Design of an optical table} \label{sec:experiments-optical-table}

The optimization landscapes found in real-world problems are unlikely to be seen as samples from a stationary GP. To confirm that the joint and alternating KG continue to be the best performers in the presence of model-mismatch, we consider the design of an optical table.
This example is a simplification of the example in \citep{salomon2019activeRobustness}.
An optical table is used in optics experiments to reduce the amplitude of vibrations propagated from the environment to the equipment.

We model the table as a rectangle of mass \(m_1 = 200\mathrm{kg}\), supported symmetrically by four springs with spring constant \(k\) -- one at each corner of the table. A damper with adjustable coefficient \(c\) is located centrally, and the equipment of mass \(m_2 = 20\mathrm{kg}\) is placed with its center of mass in the center. It is assumed that the equipment is symmetric so that rotational vibration can be neglected. The floor is modeled to vibrate with simple harmonic motion at uncertain frequency \(\omega\).
\cref{fig:opticaltable-diagram} contains a diagram of the set-up.

The system satisfies an ordinary differential equation, and it can be shown (see \cref{app:further-exp-details-optical-table}) that the steady-state solution is another simple harmonic motion with the same frequency but a different amplitude and phase. The objective is to minimize the ratio of the amplitudes of the vibrations of the table and the floor.

For the stochastic optimization problem, the spring constants \(k\) are taken as the fixed design, the vibration frequency of the floor \(\omega\) is the uncertain environmental variable, and the damping coefficient \(c\)  is the adjustable variable.
For the GP modeling, we approximate the negative logarithm of the amplitude ratio, \(-\log_{10}(B/A)\), in place of the amplitude ratio as a function of \(k\), \(c\) and \(\log_{10}(\omega)\).

\Cref{fig:opticaltable-regret} shows that joint KG and alternating KG approach zero much faster than the Sobol' and two-step algorithms. The two-step KG is initially the fastest decreasing because, in the first step, it is solving the simpler problem of finding the best damping coefficient \(c\) for a fixed spring constant \(k\). However, because it is not known a priori for how long to run each step, and because a new set of initial samples must be collected each time the algorithm switches steps, it cannot maintain this fast convergence rate.

While this problem is simple enough to have an analytic solution, it is illustrative of more complicated problems which lack symmetry and perfect components and thus where time-consuming physical experiments are to be run in the lab.

\begin{figure}[htbp]
\centering
\begin{subfigure}[b]{0.49\linewidth}
    \centering
    \def\svgwidth{\linewidth}
    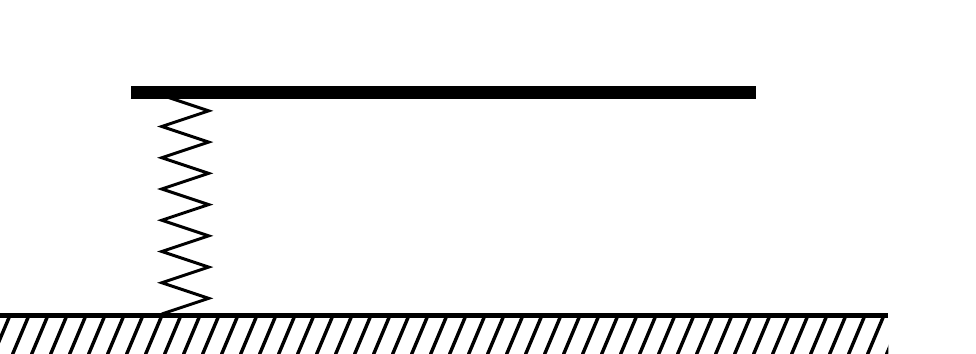
    \caption{Schematic diagram}
    \label{fig:opticaltable-diagram}
\end{subfigure}
\begin{subfigure}[b]{0.49\linewidth}
    \centering
    \includegraphics[width=\linewidth]{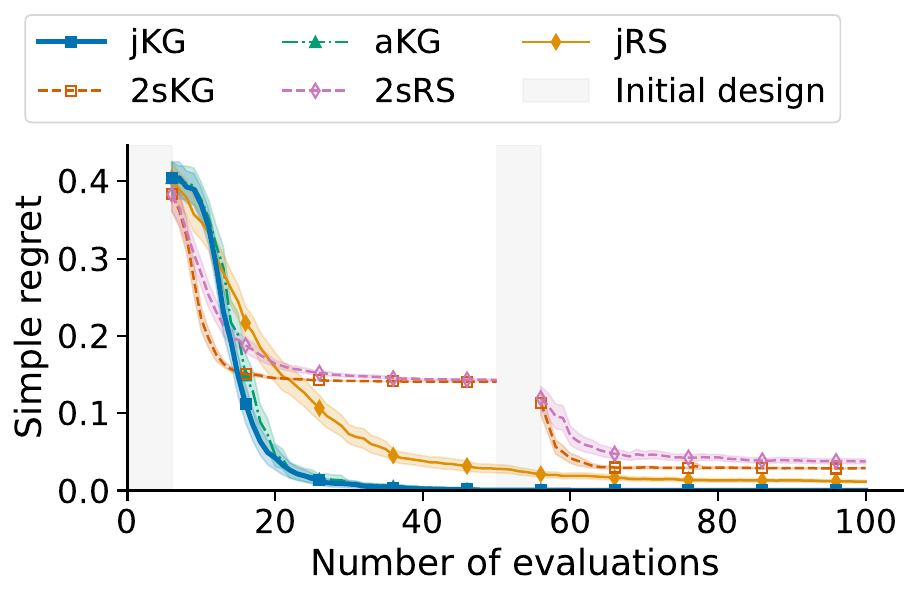}
    \caption{Simple regret}
    \label{fig:opticaltable-regret}
\end{subfigure}
\caption{Schematic diagram and evolution of the simple regret for the optical table experiment. Gray shading shows the evaluations attributed to the initial design. Note that the joint and alternating algorithms only require the first initial design phase, while the two-step algorithms require both. The colored shaded regions around each line indicate two standard errors either side of the mean (an approximate 95\% confidence interval). The joint and alternating KG algorithms are the fastest to converge to zero regret.}
\label{fig:opticaltable}
\end{figure}

\subsection{Supply chain optimization} \label{sec:experiments-supply-chain}
Supply chain management problems are a common application of stochastic programs in operations research. In this example, taken from \citep{xie2021globallocal}, we compare with the state-of-the-art algorithm proposed in that work.

The supply chain in question concerns a simplified bio-pharmaceutical process in which the production of a clinical product consumes raw materials, including soy and a raw chemical material. Orders for soy have a long lead time and must be made before uncertainty on the week-to-week demand is resolved. The lead time for the raw chemical is much shorter and can be made after we have a good prediction of the demand. The initial soy order \(x\) is the fixed design, the daily production quantity \(y_1\) along with the \((s, S)\) ordering policy for the raw chemical form the adjustable variables $\vy$ and the demand for the next four weeks are the uncertain environmental variables $\rvu$.
The objective is to minimize the costs that arise from the initial soy order, orders of the raw chemical product, the storage cost of surplus product, and the large cost to subcontract to fulfill unmet demand. These are 10 per unit of soy, 5 per unit of raw chemical, 5 per unit of manufactured product stored until the next week, and 100 per unit of manufactured product which must be subcontracted. It takes a single unit of soy and a single unit of the raw chemical to make one unit of the manufactured product.
\Cref{tab:supply-chain-variables} lists the different variables in the supply chain problem and \cref{alg:supply-chain} details the simulation.

The problem is constrained so that the daily production cannot exceed $1/20$ of the soy quantity. Further, the \((s, S)\) ordering policy for the raw chemical is constrained such that \(S > s\). When modeling the objective with a GP, we model \((s, S-s)\) rather than \((s, S)\) to convert the constraint \(s < S\) into a box constraint. The fixed design and adjustable variables are further constrained to the finite sets specified in \cref{tab:supply-chain-variables}. To handle the constraint \(y_1 \leq x/20\), the acquisition function is optimized using multi-start constrained SLSQP, and the result is then rounded to the nearest feasible value. As in \citep{xie2021globallocal}, when generating a recommended fixed design \(x^{*n}\) as well as when evaluating the recommended control policy \(\vg^{*n}\), an exhaustive search is performed over all possible \(x\) and \(\vy = (y_1, s, S)\). The marginal distribution for the environmental variables in the initial sample is normal instead of uniform to reflect that \(\rvu\) has a normal distribution.
Full details are provided in \cref{app:further-exp-details-supply-chain}.

\begin{table}[htbp]
    \centering
    \caption{Variables used in the supply chain management problem}
    \label{tab:supply-chain-variables}
    \begin{tabular}{cccl}
    \toprule
        Variable & Symbol & Type & Feasible values / Distribution\\
    \midrule
         Initial soy & \(x\) & fixed design & \(x \in \{0, 20, \dots, 5000\}\) \\
         Daily production & \(y_1\) & adjustable & \(y_1 \in [0, x/20] \cap \mathbb{Z}\) \\
         Raw chemical ordering policy & \((s, S)\) & adjustable & \(s, S \in \{100, 200, 300, 400, 500\}\) s.t. \(s < S\) \\
         Weekly demand & \(u_1, u_2, u_3, u_4\) & environmental & \(u_1, \dots, u_4 \sim \mathcal{N}(150, 10^2)\) \\
    \bottomrule
    \end{tabular}

\end{table}

\begin{algorithm}
\caption{Supply chain management simulation}
\label{alg:supply-chain}
\begin{algorithmic}[1]
    \REQUIRE Soy order quantity~\(x\), target daily production level~\(y_1\), raw chemical ordering policy parameters~\((s, S)\), weekly demand~\(u_1, u_2, u_3, u_4\)
    \STATE\COMMENT{Other variables: raw chemical inventory \(r\), clinical product inventory \(w\), cost objective \(C\)}
    \STATE \(r \gets 100\), \(C \gets 10 x\), \(w \gets 0\)
    \FOR {\(i = 1,2,3,4\)}
        \STATE\COMMENT{For each week}
        \FOR {\(j = 1,2,3,4,5\)}
            \STATE\COMMENT{For each working day}
            \STATE\COMMENT{Top up the raw chemical inventory}
            \IF {\(r < s\)}
                \STATE \(C \gets C + 5 (S - r)\)
                \STATE \(r \gets S\)
            \ENDIF
            \STATE\COMMENT{Produce clinical product}
            \STATE \(w_\text{new} \gets \min(y_1, x, r)\)
            \STATE \(x \gets x - w_\text{new}\), \(r \gets r - w_\text{new}\), \(w \gets w + w_\text{new}\)
        \ENDFOR
        \STATE\COMMENT{Calculate costs for unmet demand / surplus product}
        \IF {\(w \geq u_i\)}
            \STATE \(w \gets w - u_i\)
            \STATE \(C \gets C + 5 * w\)
        \ELSE
            \STATE \(C \gets C + 100 * (u_i - w)\)
            \STATE \(w \gets 0\)
        \ENDIF
    \ENDFOR
    \ENSURE Cost, \(C\)
\end{algorithmic}
\end{algorithm}

The objective is to minimize costs in the supply chain and the average cost over 100 repetitions is presented in \cref{fig:results-supply-chain}. \citet{xie2021globallocal} only seek to recommend the best fixed design \(x^{*n}\), and measure its performance using the best possible \(\vy\) for \(x^{*n}\) (found by exhaustive search). For a fair comparison, the dotted lines show the same metric for our algorithms. The solid lines show our usual metric, which measures the combination of the recommended fixed design \(x^{*n}\) and control policy \(\vg^{*n}\). The values for the algorithm of \citep{xie2021globallocal} are taken directly from their paper.

The joint knowledge gradient attains the best value found by \citep{xie2021globallocal} (after 2000 evaluations) after just 80 evaluations using the optimal control policy (their metric) and after 140 evaluations when using the recommended control policy. We note that even a random (Sobol') sampling strategy beats the best value found in \citep{xie2021globallocal} after 480 evaluations which shows the benefit of modeling the whole design space with a single GP. In \citep{xie2021globallocal}, expensive function evaluations made on the inner loop can only be used to inform the surrogate model used for that specific \(x\), while a joint model can be informed by evaluations at all nearby locations.

\begin{figure}[htbp]
    \centering
    \includegraphics[width=\linewidth]{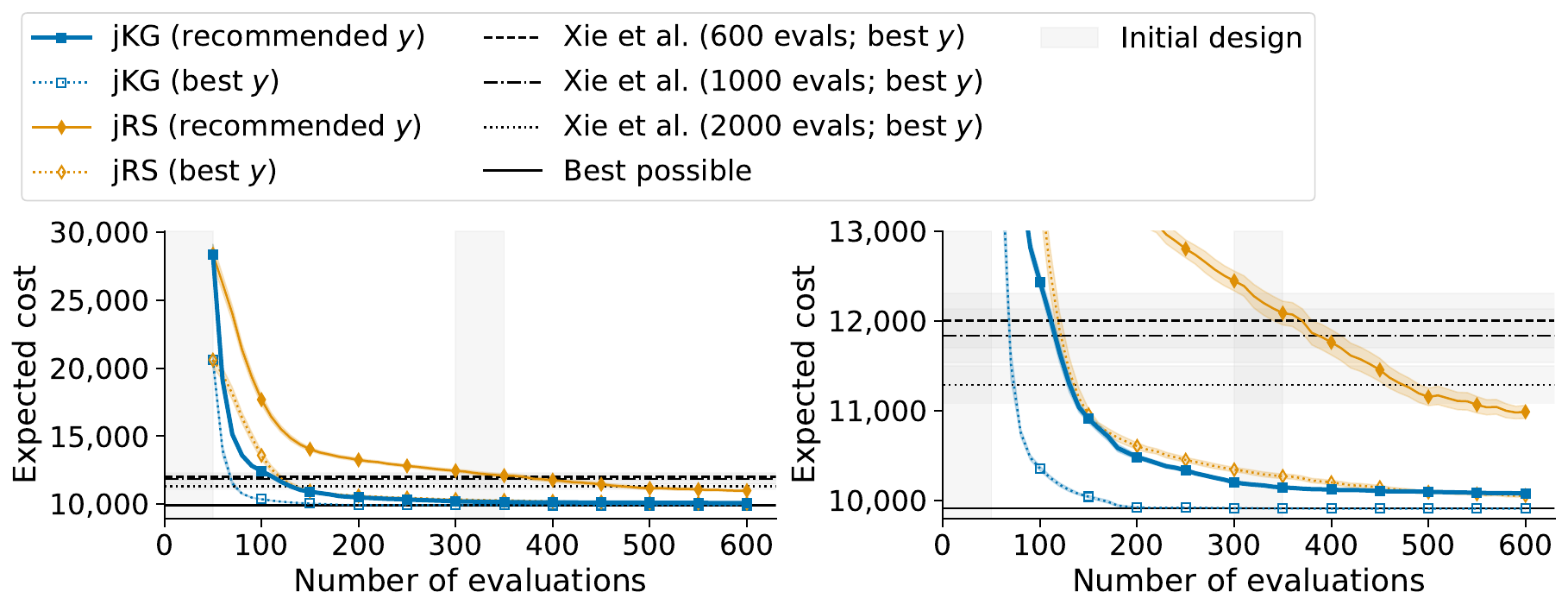}
    \caption{Evolution of the supply chain costs over 100 repetitions of the experiment. Solid lines show the cost associated with the recommended fixed design \(\vx\) and control policy \(\vy = \vg(\vx)\) as in the previous problems. Dotted lines show the cost associated with the recommended fixed design \(\vx\) if the optimal \(\vy\)-values can be chosen later. This is the metric used in \citep{xie2021globallocal}, which we include to facilitate a fair comparison. The right-hand figure shows a magnified view of the left-hand figure. Gray shading shows the evaluations attributed to the initial design. Note that the joint and alternating algorithms only require the first initial design phase, while the two-step algorithms require both. The colored shaded regions around each line indicate two standard errors either side of the mean (an approximate 95\% confidence interval).
    }
    \label{fig:results-supply-chain}
\end{figure}

\subsection{The cost of optimizing the acquisition function} \label{sec:experiments-timings}

The time required to optimize the acquisition function must be negligible compared with the time or cost required to evaluate the expensive objective. As an example, on a (2,2,2)-dimensional test problem, running with 6 CPU cores on an Intel Xeon Platinum 826 2.9GHz processor, the median time to optimize the joint KG acquisition function was just 13.5 seconds.

\Cref{fig:timings-dims-222} shows the median wall clock time for a single optimization of the acquisition function for each of the three knowledge gradient variants on the (2,2,2)-dimensional GP test problems.
The average is taken over the 100 repeats and 350 iterations (or 300 iterations for the case of the two-step KG).

Thanks to the Monte-Carlo approximation of the outer expectation in \cref{eq:acqf-joint-kg-approx}, the joint KG algorithm is immediately parallelized when implemented in PyTorch. However, the exact expectation in the alternating (\cref{eq:acqf-alternating-kg-adjustable-approx,eq:acqf-alternating-kg-fixed-approx}) and two-step KG (\cref{eq:acqf-twostep-kg-approx}) require more work to be parallelized, which is beyond the scope of this paper.

\begin{figure}[htbp]
    \centering
    \includegraphics[width=0.5\textwidth]{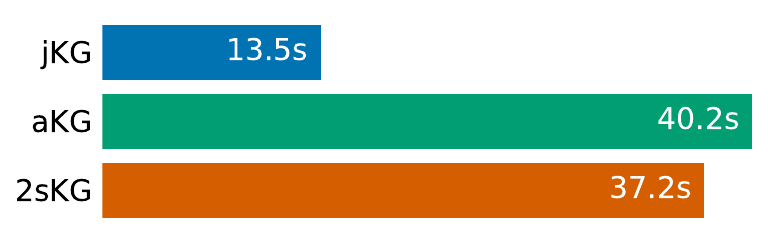}
    \caption{Median wall times required to optimize the acquisition functions on the (2,2,2)-dimensional GP sampled test problems without observation noise using 6 CPU cores.}
    \label{fig:timings-dims-222}
\end{figure}

\section{Discussion}

The results in \cref{sec:experiments} show the importance of jointly optimizing the fixed design and control strategy for the adjustable variables.
The joint knowledge gradient is simple, principled, and computationally feasible, with a theoretical guarantee of convergence and superior empirical results to the current state of the art \citep{xie2021globallocal}.
Furthermore, it provides not only the recommended fixed design but also a recommendation for the control policy for the adjustable variables.
The alternating knowledge gradient provides a natural alternative and demonstrates that the Monte Carlo approximation of the expectation over the hypothesized next observation does not adversely affect the results.

There are many possible future research directions. Stochastic programming typically involves explicit constraints, and while the algorithm can support optimizing over constrained spaces \(\sX\) and \(\sY\), it would be interesting to extend it to handle black-box constraints as in \citep{ungredda2024constrained}. Similarly, the algorithms could be extended to cover standard problems in stochastic programming such as multi-stage decisions, and different risk measures such as value-at-risk.

Finally, another limitation of the algorithm is its extensibility to higher dimensions. For example, in the wind farm layout problem \citep{chen2022windfarm}, there are typically hundreds of wind turbines leading to hundreds of parameters.
High-dimensional Bayesian optimization is an active area of research \citep{binois2022highdimensional} and recent ideas such as local search methods could be incorporated to tackle higher-dimensional two-stage stochastic programs.

\subsubsection*{Acknowledgments}
The authors would like to thank Andrew Nugent for his verification of the proofs in \cref{app:theoretical-results} and his suggestions for improving the clarity of our explanations.
We thank Wei Xie and Hua Zheng for providing their source code.

The first author was supported by the Engineering and Physical Sciences Research Council through the Mathematics of Systems II Centre for Doctoral Training at the University of Warwick (reference EP/S022244/1).
The second author was supported by the Flemish Government under the Flanders Artificial Intelligence Research program.

\subsubsection*{Data Availability Statement}
This work is entirely theoretical, there is no data underpinning this publication. 

\bibliography{main}
\bibliographystyle{tmlr}

\appendix
\section*{Appendix}
\section{Theoretical Results}\label{app:theoretical-results}
In this section, we establish the asymptotic consistency of the value associated with the recommendations of the joint knowledge gradient acquisition function presented in \cref{sec:algo-acqf} when viewed as an estimator for \(\sup_{\vx, \vg} \bar{h}(\vx, \vg)\). This was stated in the main text as \cref{thm:jointkg-consistency}.

\jointkgconsistency*

As noted in \cref{sec:acqf-consistency}, rather than proving consistency for a specific objective function \(h\), we instead prove it almost surely and in mean when \(h\) is a draw from a Gaussian process with the same distribution as the surrogate model \(f\).
Therefore, to simplify notation, we identify \(h=f\).

We also assume that the hyperparameters of the distribution of \(f\) remain fixed throughout the optimization, and are not refitted every iteration as described in \cref{alg:bo-basic} and \cref{alg:bo-codesign-joint-kg}.
This is the standard assumption in existing proofs of the asymptotic consistency of knowledge gradient acquisition functions \citep{bect2019supermartingale,toscanopalmerin2022integrands,buckingham2023latencies}.
However, it is important to note that it is likely that the equivalent theorem does not hold when the hyperparameters are fitted after each new observation.
Indeed, \citet{bull2011convergence} show that for standard GP priors with parameters sequentially re-estimated from the data, there exist smooth objectives \(f\) on which Bayesian Optimization using the expected improvement acquisition function does not converge.
Therefore, for the case of hyperparameters estimated from the data, we rely on experimental evidence that this does not happen in practice.

The proof of consistency is based on the work by \citep{bect2019supermartingale} and \citep{buckingham2023latencies}, but avoids the use of random measures which is found in \citep{bect2019supermartingale}.
It begins by showing that the joint knowledge gradient converges almost surely, uniformly to zero in \cref{thm:app-joint-kg-to-zero}. Further, the posterior mean and covariance of \(\bar{f}\) are shown to converge uniformly to continuous limits in \cref{thm:app-mubar-kbar-uniform-convergence} by observing that the posterior mean and covariance of \(f\) are martingales in Banach spaces. These two facts are combined to show in \cref{thm:app-fbar-minus-mubar-const-simultaneous} that the uniform limit, \(\bar{\mu}^\infty\), of the posterior means of \(\bar{f}\) has sample paths which are almost surely equal to the sample paths of \(\bar{f}\) (up to a constant). Whence, convergence of the values of the recommendations can be established in \cref{thm:app-jointkg-consistent}.

\subsection{Notation, statistical model and assumptions}
We assume that the spaces \(\sX \subset \R^{d_x}\), \(\sY \subset \R^{d_y}\) and \(\sU \subset \R^{d_u}\) are compact and that the objective \(f \sim \mathcal{GP}(\mu, k)\) is a Gaussian process with index set \(\tilde{\sX} = \sX \times \sY \times \sU\) defined on a probability space \((\Omega, \mathcal{F}, \mathbb{P})\).
We further assume that the prior mean function, \(\mu\), and covariance function, \(k\), are continuous, and that there exists a version of \(f\) with continuous sample paths.
A sufficient condition for this is that the mean function \(\mu\) is continuous and that the kernel satisfies \citep[Theorem~1.4.1]{adler2009randomfieldsgeometry}
\begin{multline}
    \exists 0 < c < \infty \text{ and } \exists \beta, \delta > 0 \text{ such that } \forall \tilde{\vx}, \tilde{\vx}' \in \tilde{\sX} \text{ with } \|\tilde{\vx} - \tilde{\vx}'\|_2 < \delta, \\
    \E[ |f(\tilde{\vx}) - f(\tilde{\vx}')|^2 ] = k(\tilde{\vx}, \tilde{\vx}) + k(\tilde{\vx}', \tilde{\vx}') - 2 k(\tilde{\vx}, \tilde{\vx}') \leq \frac{c}{\bigl| \log \|\tilde{\vx} - \tilde{\vx}' \|_2 \bigr|^{1+\beta}}.
\end{multline}

As in the main text, let \(\tilde{\rvx}^1, \tilde{\rvx}^2, \dots \in \tilde{\sX}\) denote the (data dependent) design points with \(\tilde{\rvx}^n = (\rvx^n, \rvy^n, \rvu^n)\) for each \(n\). The first \(n_0\) of these are selected according to a (possibly random) initial design such as a Latin Hypercube or scrambled Sobol' sequence, and the remainder are selected sequentially based on the observations made up to that point.
Let \(\rv^1, \rv^2, \dots \in \R\) be the corresponding (noisy) observations given by
\begin{equation}
    \forall n = 1,2,\dots \qquad \rv^n = f(\rvx^n, \rvy^n, \rvu^n) + \varepsilon^n \qquad\text{where}\qquad \forall n\quad \varepsilon^n \sim \mathcal{N}(0, \sigma^2) \;\text{i.i.d},
\end{equation}
where the \(\varepsilon^1, \varepsilon^2, \dots\) are also independent of \(f\).
This is \cref{eq:obs-model} from the main text, restated here for convenience.
Note that now we view the design points \(\tilde{\rvx}^1, \tilde{\rvx}^2\) as random vectors, since they depend on the random initial design and on the observation noise.
For each \(n\), let \(\mathcal{F}^n = \sigma(\tilde{\rvx}^1, \rv^1, \dots, \tilde{\rvx}^n, \rv^n)\) be the sigma-algebra generated by the first \(n\) design points and observations, so that \((\mathcal{F}^n)_{n=0}^\infty\) forms a filtration of \(\mathcal{F}\).

Our objective is not \(f\) but the expectation of \(f\) over the environmental variables \(\rvu\),
\begin{equation} \label{eq:app-objective}
    \bar{f}(\vx, \vg) = \E_\rvu[f(\vx, \vg(\rvu), \rvu)].
\end{equation}
Note here that \(\rvu\) and \(f\) are independent and so formally this is the expectation of \(f(\vx, \vg(\rvu), \rvu)\) conditional on the GP \(f\).

It will be useful later when talking about the posterior mean of \(\bar{f}\) to be able to freely interchange the order of integration over \(f\) and \(\rvu\). This is possible by Fubini's Theorem / the law of total expectation, provided that the expectation \(\E[f(\vx, \vg(\rvu), \rvu)]\) over both jointly is well defined.

\begin{lemma}
    We can exchange the order of integration over \(f\) and \(\rvu\) when calculating the posterior mean of \(\bar{f}\). That is,
    \begin{equation*}
        \bar{\mu}^n(\vx, \vg)
        = \E\bigl[\,\bar{f}(\vx, \vg) \,\big|\, \mathcal{F}^n \bigr]
        = \E_\rvu\bigl[ \mu^n(\vx, \vg(\rvu), \rvu) \bigr].
    \end{equation*}
\end{lemma}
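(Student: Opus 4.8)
The plan is to invoke the conditional form of Fubini's theorem (equivalently, the law of total expectation), after first securing the integrability that licenses the interchange. Unfolding the definitions, the posterior mean is
\[
    \bar{\mu}^n(\vx, \vg)
    = \E\bigl[\,\bar{f}(\vx, \vg) \,\big|\, \mathcal{F}^n \bigr]
    = \E\bigl[\,\E_\rvu[f(\vx, \vg(\rvu), \rvu)] \,\big|\, \mathcal{F}^n \bigr],
\]
and the goal is to move the inner expectation over \(\rvu\) to the outside, turning this into \(\E_\rvu\bigl[\E[f(\vx,\vg(\rvu),\rvu)\mid\mathcal{F}^n]\bigr] = \E_\rvu[\mu^n(\vx,\vg(\rvu),\rvu)]\). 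I would treat \(\E_\rvu\) as integration against the fixed law \(\mathbb{P}_\rvu\) on \(\sU\): for each fixed \(u\in\sU\) the quantity \(f(\vx,\vg(u),u)\) is an ordinary random variable (a function of \(f\)) whose conditional expectation is, by definition, \(\mu^n(\vx,\vg(u),u)\). Since \(\rvu\) used to define \(\bar{f}\) is independent of \(f\), the noise, and hence of the data filtration \(\mathcal{F}^n\), the conditional Fubini theorem then allows the identity \(\E[f(\vx,\vg(u),u)\mid\mathcal{F}^n]=\mu^n(\vx,\vg(u),u)\) to be integrated over \(u\) and the order of the two integrations to be swapped.

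Before that I would record the two routine measurability points. Because \(f\) has continuous sample paths it is jointly measurable on \(\Omega\times\tilde{\sX}\), and assuming the policy \(\vg\) is measurable, the map \((\omega,u)\mapsto f(\omega;\vx,\vg(u),u)\) is measurable; hence both \(\bar{f}(\vx,\vg)\) and \(u\mapsto\mu^n(\vx,\vg(u),u)\) are well-defined, and the \(\rvu\)-integrals appearing above make sense.

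The only genuine obstacle is the integrability hypothesis of Fubini, namely that the integrand is absolutely integrable over the joint law. Here I would dominate the integrand uniformly in \(u\) by the supremum of \(|f|\) over the compact index set,
\[
    \bigl| f(\vx, \vg(u), u) \bigr| \;\le\; \sup_{\tilde{\vx} \in \tilde{\sX}} |f(\tilde{\vx})| \;=:\; \|f\|_\infty ,
\]
which holds pointwise for every \(u\) since \((\vx,\vg(u),u)\in\tilde{\sX}\). It then suffices to show \(\E[\|f\|_\infty]<\infty\). Sample-path continuity on the compact set \(\tilde{\sX}\) gives \(\|f\|_\infty<\infty\) almost surely; writing \(f=\mu+(f-\mu)\) with \(\mu\) continuous (so \(\sup|\mu|<\infty\)) and \(f-\mu\) a centred Gaussian process, the finiteness of \(\E[\sup|f-\mu|]\) follows from the classical fact that an almost surely bounded Gaussian process has a supremum with finite expectation (indeed finite exponential moments), via Fernique's theorem or the Borell--TIS inequality. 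Because this dominating bound is free of \(u\) and of the conditioning, it simultaneously yields \(\E\bigl[\E_\rvu|f(\vx,\vg(\rvu),\rvu)|\bigr]\le\E[\|f\|_\infty]<\infty\), exactly the integrability required.

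With integrability in hand the conclusion is immediate: absolute integrability permits the exchange of the two integrations, and independence of \(\rvu\) from \(\mathcal{F}^n\) identifies the conditional expectation of the \(\rvu\)-average with the \(\rvu\)-average of the pointwise posterior means, giving \(\bar{\mu}^n(\vx,\vg)=\E_\rvu[\mu^n(\vx,\vg(\rvu),\rvu)]\) as claimed. I expect the Gaussian-supremum integrability to be the crux of the argument, with every step after it being standard measure-theoretic bookkeeping.
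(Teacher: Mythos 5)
Your proposal is correct and follows essentially the same route as the paper: both reduce the interchange to Fubini's theorem (law of total expectation) and settle the integrability hypothesis by dominating the integrand with the supremum of \(|f|\) over the compact set \(\tilde{\sX}\), whose expectation is finite by the classical bounded-Gaussian-supremum fact — the paper cites \citep[Theorem~2.9]{azais2009levelsets} and bounds the positive and negative parts via \(\sup f_0\) and \(\inf f_0\) after centering, while you invoke the same fact through Fernique/Borell--TIS and bound \(\|f\|_\infty\) directly, a cosmetic difference only. Your explicit attention to joint measurability of \((\omega, \vu) \mapsto f(\omega; \vx, \vg(\vu), \vu)\) is a point the paper leaves implicit, but it does not change the argument.
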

\begin{proof}
    As observed, in order to apply Fubini's theorem, it is sufficient to check that \(\E[(f(\vx, \vg(\rvu), \rvu))_+] < \infty\) and \(\E[(f(\vx, \vg(\rvu), \rvu))_-] < \infty\), where \((\cdot)_+\) and \((\cdot)_-\) denote the positive and negative parts of the random variable, and the expectation is taken jointly over both \(f\) and \(\rvu\).
    This follows immediately from \citep[Theorem~2.9]{azais2009levelsets}.
    Indeed, write \(f = \mu + f_0\) with \(f_0\) a zero mean GP and \(\mu\) deterministic.
    Then \(f_0\) has almost surely continuous sample paths and \(\tilde{\sX}\) is compact, so \(\sup_{\tilde{\vx}} f_0(\tilde{\vx}) < \infty\) almost surely, and so the theorem gives \(\E|\sup_{\tilde{\vx}} f_0(\tilde{\vx})| < \infty\).
    Since \(f_0\) and \(-f_0\) have the same distribution, we also have \(\E|\inf_{\tilde{\vx}} f_0(\tilde{\vx})| < \infty\) and therefore
    \begin{alignat*}{4}
        \E[(f(\vx, \vg(\rvu), \rvu))_+]
        &\leq \E\biggl[\Bigl|\sup_{\tilde{\vx}} f(\tilde{\vx})\Bigr|\biggr]
        &&\leq \Bigl|\sup_{\tilde{\vx}} \mu(\tilde{\vx})\Bigr| &&+ \E\biggl[\Bigl|\sup_{\tilde{\vx}} f_0(\tilde{\vx})\Bigr|\biggr]
        &&< \infty, \\
        \E[(f(\vx, \vg(\rvu), \rvu))_-]
        &\leq \E\biggl[\Bigl|\inf_{\tilde{\vx}} f(\tilde{\vx})\Bigr|\biggr]
        &&\leq \Bigl|\inf_{\tilde{\vx}} \mu(\tilde{\vx})\Bigr| &&+ \E\biggl[\Bigl|\inf_{\tilde{\vx}} f_0(\tilde{\vx})\Bigr|\biggr]
        &&< \infty,
    \end{alignat*}
    as required.
\end{proof}

The sample paths of \(f\) are continuous and \(\tilde{\sX}\) is compact, so \(f\) is bounded and attains its bounds almost surely. It turns out that the objective \(\bar{f}\) also attains its bounds, so our optimization problem is well-posed.

\begin{lemma} \label{thm:app-fbar-attains-bound}
    The objective \(\bar{f}\) is bounded and attains its bounds almost surely.
    Furthermore, there exists a choice \(\rvx \in \sX, \rvg: \sU \to \sY\) maximizing (resp. minimizing) \(\bar{f}\) such that \(\rvg\) has Borel-measurable sample paths.
\end{lemma}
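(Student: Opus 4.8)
The plan is to reduce the joint optimization over $(\vx,\vg)$ to the bi-level formulation of \cref{eq:problem-statement-bilevel} and to argue path-by-path on the almost-sure event $\Omega_0$ on which $f$ has continuous sample paths. Boundedness is immediate: on $\Omega_0$ the path $f$ is continuous on the compact set $\tilde{\sX}$, hence bounded, say $m \le f \le M$; since $\bar f(\vx,\vg) = \E_\rvu[f(\vx,\vg(\rvu),\rvu)]$ is an average of values of $f$, we obtain $m \le \bar f(\vx,\vg) \le M$ uniformly over all $\vx \in \sX$ and all Borel-measurable $\vg:\sU\to\sY$. The substantive part is that these bounds are attained.

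For the maximization I would introduce the inner value function $f^*(\vx,\vu) = \max_{\vy\in\sY} f(\vx,\vy,\vu)$, which exists for every $(\vx,\vu)$ by continuity of $f$ and compactness of $\sY$. First I would show $f^*$ is continuous on $\sX\times\sU$: since $f$ is uniformly continuous on the compact set $\tilde{\sX}$, an elementary $\epsilon$-$\delta$ argument (equivalently Berge's Maximum Theorem applied to the constant, compact-valued constraint correspondence $\vu\mapsto\sY$) yields joint continuity of the maximized value. Consequently $\vx \mapsto \E_\rvu[f^*(\vx,\rvu)]$ is continuous, since the integrand is continuous in $\vx$ and dominated by the constant $\max(|m|,|M|)$, so dominated convergence applies; therefore it attains its maximum over the compact set $\sX$ at some $\vx^* \in \sX$.

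Next I would establish that $\E_\rvu[f^*(\vx^*,\rvu)]$ equals the supremum of $\bar f$. For any admissible $(\vx,\vg)$ we have $f(\vx,\vg(\vu),\vu) \le f^*(\vx,\vu)$ pointwise, so $\bar f(\vx,\vg) \le \E_\rvu[f^*(\vx,\rvu)] \le \E_\rvu[f^*(\vx^*,\rvu)]$. It remains to exhibit a Borel-measurable policy $\vg^*:\sU\to\sY$ with $f(\vx^*,\vg^*(\vu),\vu) = f^*(\vx^*,\vu)$ for all $\vu$, for then $\bar f(\vx^*,\vg^*) = \E_\rvu[f^*(\vx^*,\rvu)]$ realizes the supremum. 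This measurable selection is the main obstacle: the correspondence $\vu \mapsto \{\vy\in\sY : f(\vx^*,\vy,\vu) = f^*(\vx^*,\vu)\}$ is nonempty and compact-valued, but a selector need not be continuous. I would invoke the Measurable Maximum Theorem (a consequence of the Kuratowski--Ryll-Nardzewski measurable selection theorem): since $(\vy,\vu)\mapsto f(\vx^*,\vy,\vu)$ is a Carath\'eodory function and the constraint correspondence $\vu \mapsto \sY$ is measurable with compact values, the $\argmax$ correspondence admits a Borel-measurable selector $\vg^*$. This $\vg^*$ has Borel-measurable sample paths as required, and makes $f(\vx^*,\vg^*(\rvu),\rvu)$ measurable so that $\bar f(\vx^*,\vg^*)$ is well defined.

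Finally, the minimization claim follows by the identical argument applied to $-f$ (equivalently, replacing every $\max$ by $\min$ and every $\argmax$ by $\argmin$), noting that $-f$ is again a Gaussian process with continuous sample paths. All of the above is carried out on the fixed almost-sure event $\Omega_0$, so the conclusions --- boundedness, attainment of both bounds, and existence of a Borel-measurable optimal policy --- hold almost surely.
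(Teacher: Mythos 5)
Your proof is correct and follows essentially the same route as the paper's: reduce to the bi-level form, use uniform continuity of \(f\) on the compact \(\tilde{\sX}\) to get continuity of the inner value function \((\vx,\vu) \mapsto \max_{\vy} f(\vx,\vy,\vu)\) and of its \(\rvu\)-average, attain the maximum over the compact \(\sX\), and then handle the policy via a measurable selection theorem. The only cosmetic differences are that you make the pointwise domination \(\bar f(\vx,\vg) \le \E_\rvu[f^*(\vx,\rvu)]\) explicit (the paper leaves it as ``by construction'') and cite the Measurable Maximum Theorem / Kuratowski--Ryll-Nardzewski where the paper cites Proposition~7.33 of \citet{bertsekas1996soc} --- equivalent tools for the same selection step.
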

\begin{proof}
    We prove the result for the upper bound, but the equivalent for the lower bound is analogous.

    The sample paths of \(f\) are continuous and \(\tilde{\sX}\) is compact, so they are uniformly continuous.
    Therefore, the sample paths of the map \((\vx, \vu) \mapsto \max_\rvy f(\vx, \rvy, \vu)\) are also uniformly continuous.
    Hence, the map \(\vx \mapsto \int \max_\rvy f(\vx, \rvy, \vu) \;\mathbb{P}(\mathrm{d}\vu)\) also has (uniformly) continuous sample paths.
    Therefore, since \(\sX\) is compact, this map is bounded and attains its bounds, so let
    \begin{equation*}
        \rvx^* \in \argmax_{\rvx \in \sX} \E_\rvu \Bigl[ \max_\rvy f(\rvx, \rvy, \vu) \Bigr].
    \end{equation*}
    Now, for each \(\vu \in \sU\), the map \(\vy \mapsto f(\rvx^*, \vy, \vu)\) has continuous sample paths, and \(\sY\) is compact, so it is bounded and attains its bounds.
    Therefore, we can define a map \(\rvg^* : \sU \to \sY\) by
    \begin{equation*}
        \rvg^*(\vu) \in \argmax_{\rvy \in \sY} f(\rvx^*, \rvy, \vu).
    \end{equation*}
    By construction, the pair \(\rvx^*, \rvg^*\) are an upper bound for \(\bar{f}\).

    It remains to be shown that \(\rvg^*\) can be chosen to have measurable sample paths with respect to the Borel sigma-algebras on \(\sU\) and \(\sY\).
    This is the content of Proposition 7.33 in \citep{bertsekas1996soc}.
\end{proof}

In order to argue rigorously about the knowledge gradient, we must make the formula in \cref{eq:acqf-joint-kg-conceptual} concrete.
Let \(\rv_{\tilde{\vx}} = f(\tilde{\vx}) + \varepsilon\) with \(\varepsilon \sim \mathcal{N}(0, \sigma^2)\) represent a possible noisy observation at \(\tilde{\vx} \in \tilde{\sX}\).
For each \(n\), let \(\bar{\mu}^{n+1}(\vx', \vg';\, \tilde{\vx}, \rv_{\tilde{x}}) = \E[\bar{f}(\vx', \vg') \,|\, \mathcal{F}^n, \rv_{\tilde{\vx}}]\) be the expectation of \(\bar{f}\) conditional on the \(n\) observations so far, and the proposed new observation \(\rv_{\tilde{\vx}}\) taken at \(\tilde{\vx}\).
Note that \(\bar{\mu}^{n+1}(\vx', \vg') = \bar{\mu}^{n+1}(\vx', \vg';\, \tilde{\rvx}^{n+1}, \rv^{n+1})\), so this is consistent with our notation from the main text.
With this notation, \cref{eq:acqf-joint-kg-conceptual} becomes
\begin{equation} \label{eq:app-acqf-joint-kg-conceptual}
    \alpha_\text{jKG}^n(\tilde{\vx}) =
    \E\Biggl[\, \max_{\substack{\rvx' \in \sX \\ \rvg': \sU \to \sY }} \bar{\mu}^{n+1}(\rvx', \rvg';\, \tilde{\vx}, \rv_{\tilde{\vx}}) \,\Bigg|\, \mathcal{F}^n \Biggr]
    - \max_{\substack{\rvx' \in \sX \\ \rvg': \sU \to \sY }} \bar{\mu}^{n}(\rvx', \rvg').
\end{equation}
We are justified in using a maximum instead of a supremum here because the posterior means \(\bar{\mu}^n\) are all bounded and attain their bounds.
\begin{lemma} \label{thm:app-mubar-attains-bound}
    For all \(n\), the posterior mean \(\bar{\mu}^n\) is bounded and attains its bounds.
    Furthermore, there exists a choice \(\rvx^{*n} \in \sX\) and \(\rvg^{*n}: \sU \to \sY\) maximizing (resp. minimizing) \(\bar{\mu}^\infty\) such that \(\rvg^{*n}\) has Borel-measurable sample paths.
\end{lemma}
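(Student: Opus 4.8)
The plan is to mirror the proof of \cref{thm:app-fbar-attains-bound} almost verbatim, replacing the sample path \(f\) by the posterior mean \(\mu^n\). The only genuinely new ingredient is to establish that \(\mu^n\) is continuous on \(\tilde{\sX}\); unlike for \(f\), this is not assumed, but it follows from the closed form \(\mu^n(\tilde{\vx}) = \mu(\tilde{\vx}) + \sum_i w_i\, k(\tilde{\vx}, \tilde{\rvx}^i)\) for data-dependent weights \(w_i\), together with continuity of the prior mean \(\mu\) and of each map \(k(\cdot, \tilde{\rvx}^i)\). Hence \(\mu^n\) is continuous for every realization of the data. (I read the \(\bar{\mu}^\infty\) in the statement as a typo for \(\bar{\mu}^n\), since \(\bar{\mu}^\infty\) is only defined later via the convergence results.)

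First I would fix a realization of the data and observe that, since \(\tilde{\sX}\) is compact, \(\mu^n\) is uniformly continuous. Consequently the map \((\vx, \vu) \mapsto \max_{\vy \in \sY} \mu^n(\vx, \vy, \vu)\) is also uniformly continuous (a maximum over the compact set \(\sY\) of a uniformly continuous function), and taking the expectation over \(\rvu\) preserves continuity. Thus \(\vx \mapsto \E_\rvu[\max_{\vy} \mu^n(\vx, \vy, \rvu)]\) is continuous on the compact set \(\sX\), so it attains its maximum at some \(\rvx^{*n}\). For this fixed \(\rvx^{*n}\), the map \(\vy \mapsto \mu^n(\rvx^{*n}, \vy, \vu)\) is continuous on the compact set \(\sY\) for each \(\vu\), so a pointwise maximizer exists.

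Next I would invoke the interchange lemma, \(\bar{\mu}^n(\vx, \vg) = \E_\rvu[\mu^n(\vx, \vg(\rvu), \rvu)]\). Since there are no coupled constraints (the feasible set for \(\vy\) is \(\sY\) independently of \(\vu\)), maximizing over control policies is bounded above by pointwise maximization, giving \(\sup_{\vx, \vg} \bar{\mu}^n(\vx, \vg) \leq \max_\vx \E_\rvu[\max_{\vy} \mu^n(\vx, \vy, \rvu)]\). Equality, and hence attainment of the bound by an admissible pair \((\rvx^{*n}, \rvg^{*n})\), will follow once I exhibit a measurable policy realizing the pointwise maxima at \(\rvx^{*n}\).

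Finally, that measurable policy is produced exactly as in \cref{thm:app-fbar-attains-bound}: the existence of a Borel-measurable selection \(\vu \mapsto \rvg^{*n}(\vu) \in \argmax_{\vy \in \sY} \mu^n(\rvx^{*n}, \vy, \vu)\) follows from the measurable maximum theorem, Proposition~7.33 in \citep{bertsekas1996soc}. I expect this measurable-selection step to be the only delicate point, since the continuity and compactness arguments are routine; it is also what both secures the interchange above and guarantees \(\rvg^{*n}\) is an admissible control policy. However, it reduces to a direct citation applied to the jointly continuous function \(\mu^n\) over the fixed compact set \(\sY\), so no new difficulty arises beyond checking the hypotheses of that proposition, which hold here. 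The minimization case is entirely symmetric.
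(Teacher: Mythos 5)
Your proof is correct and takes essentially the same route as the paper: the paper's own proof simply observes that \(\mu^n\) is a stochastic process with continuous sample paths and then repeats the argument of \cref{thm:app-fbar-attains-bound} (uniform continuity on the compact \(\tilde{\sX}\), attainment of the outer maximum, pointwise maximization over \(\sY\), and the Borel-measurable selection via Proposition~7.33 of \citep{bertsekas1996soc}), exactly the steps you spell out, with your explicit closed-form justification of continuity and the interchange/upper-bound step being elaborations the paper leaves implicit. Your reading of \(\bar{\mu}^\infty\) as a typo for \(\bar{\mu}^n\) is also consistent with how the lemma is stated and used.
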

\begin{proof}
    Let \(n \in \mathbb{N}_0\). Then \(\mu^n\) is a stochastic process with continuous sample paths and the result follows by the same argument as \cref{thm:app-fbar-attains-bound}.
\end{proof}
This result also means that our recommendations \(\rvx^{*n}, \rvg^{*n} \in \argmax_{\rvx, \rvg} \bar{\mu}^n(\rvx, \rvg)\) from \cref{eq:recommendation-conceptual} in the main text are well defined.

So far, we have not specified the mechanism by which the design points \(\tilde{\rvx}^1, \tilde{\rvx}^2, \dots\) are chosen.
Intuitively, we wish to choose them to maximize the joint knowledge gradient in \cref{eq:app-acqf-joint-kg-conceptual}.
However, in the case \(\sigma = 0\) of noiseless observations, the knowledge gradient is not continuous at the observations and, therefore, it is not obvious that it attains its supremum (even though \(\tilde{\sX}\) is compact).
Furthermore, in practice, we can only hope to maximize the knowledge gradient approximately.
To that end, we assume that for each \(n \geq n_0\), we optimize \(\alpha_\text{jKG}^n\) to within some small \(\eta^n\) of the optimum,
\begin{equation} \label{eq:app-maximize-acqf}
    \alpha_\text{jKG}^n(\tilde{\rvx}^{n+1}) > \sup_{\tilde{\rvx}' \in \tilde{\sX}} \alpha_\text{jKG}^n(\tilde{\rvx}') - \eta^n,
\end{equation}
where \((\eta^n)_{n=n_0}^\infty\) is a positive sequence with \(\eta^n \to 0\) as \(n \to \infty\).
Further, we assume that each \(\tilde{\rvx}^{n+1}\) is \(\mathcal{F}^n\)-measurable. That is, that the optimization is deterministic given the first \(n\) observations.

We conclude this section with a short proof that the joint knowledge gradient is non-negative.
This was stated as \cref{thm:joint-kg-nonnegative} in the main text.
\begin{proposition}[\cref{thm:joint-kg-nonnegative} from the main text] \label{thm:app-jointkg-nonneg}
    The joint knowledge gradient is almost surely non-negative,
    \begin{equation*}
        \forall n \in \mathbb{N}_0\; \forall \tilde{\vx} \in \tilde{\sX} \quad \alpha_\text{jKG}^n(\tilde{\vx}) \geq 0 \qquad \text{a.s.}
    \end{equation*}
\end{proposition}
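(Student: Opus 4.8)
The plan is to prove non-negativity by a martingale (tower-property) argument. The key observation is that, for each \emph{fixed} pair $(\vx', \vg')$, the posterior mean of $\bar{f}$ evaluated there forms a martingale with respect to the filtration $(\mathcal{F}^n)$: incorporating one hypothetical further observation and then averaging over it recovers the current posterior mean. Granting this, the inequality $\alpha_\text{jKG}^n(\tilde{\vx}) \geq 0$ follows from the elementary fact that averaging a maximum is at least the maximum of the averages, the latter of which equals the current maximum $\max_{\vx', \vg'} \bar{\mu}^n(\vx', \vg')$.

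First I would record the martingale identity. By definition $\bar{\mu}^{n+1}(\vx', \vg';\, \tilde{\vx}, \rv_{\tilde{\vx}}) = \E[\bar{f}(\vx', \vg') \mid \mathcal{F}^n, \rv_{\tilde{\vx}}]$, and since $\mathcal{F}^n \subseteq \sigma(\mathcal{F}^n, \rv_{\tilde{\vx}})$, the tower property of conditional expectation yields, for every fixed $(\vx', \vg')$,
\begin{equation*}
    \E\bigl[ \bar{\mu}^{n+1}(\vx', \vg';\, \tilde{\vx}, \rv_{\tilde{\vx}}) \,\big|\, \mathcal{F}^n \bigr]
    = \E\bigl[ \bar{f}(\vx', \vg') \,\big|\, \mathcal{F}^n \bigr]
    = \bar{\mu}^n(\vx', \vg').
\end{equation*}

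Next I would exploit that the maximum over $\vx' \in \sX$, $\vg': \sU \to \sY$ dominates any single term. For an arbitrary fixed pair $(\vx'_0, \vg'_0)$ one has the pointwise bound $\max_{\vx', \vg'} \bar{\mu}^{n+1}(\vx', \vg';\, \tilde{\vx}, \rv_{\tilde{\vx}}) \geq \bar{\mu}^{n+1}(\vx'_0, \vg'_0;\, \tilde{\vx}, \rv_{\tilde{\vx}})$. Taking the conditional expectation given $\mathcal{F}^n$, which preserves the inequality by monotonicity, and substituting the martingale identity gives
\begin{equation*}
    \E\Bigl[ \max_{\vx', \vg'} \bar{\mu}^{n+1}(\vx', \vg';\, \tilde{\vx}, \rv_{\tilde{\vx}}) \,\Big|\, \mathcal{F}^n \Bigr]
    \geq \bar{\mu}^n(\vx'_0, \vg'_0).
\end{equation*}
Because this holds for \emph{every} $(\vx'_0, \vg'_0)$, I would then take the maximum over $(\vx'_0, \vg'_0)$ on the right-hand side; comparing the result with \cref{eq:app-acqf-joint-kg-conceptual} is precisely the assertion $\alpha_\text{jKG}^n(\tilde{\vx}) \geq 0$.

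The main obstacle is not the inequality but the measurability and integrability required to make every quantity well-defined, given that one of the optimization variables is the policy $\vg': \sU \to \sY$ ranging over an infinite-dimensional space. I would invoke \cref{thm:app-mubar-attains-bound} to guarantee that these maxima are attained and admit Borel-measurable optimizing selections, so that $\max_{\vx', \vg'} \bar{\mu}^{n+1}(\vx', \vg';\, \tilde{\vx}, \rv_{\tilde{\vx}})$ is a genuine random variable to which conditional expectation applies. The necessary integrability follows from the uniform bounds on $f$ established in the proof of \cref{thm:app-fbar-attains-bound}, since each posterior mean is sandwiched between the integrable supremum and infimum of $f$. With these points in place, the tower property and monotonicity of conditional expectation apply without difficulty, completing the argument.
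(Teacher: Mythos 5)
Your proposal is correct and follows essentially the same route as the paper's proof: the tower property applied to $\bar{\mu}^{n+1}(\cdot;\,\tilde{\vx}, \rv_{\tilde{\vx}})$, monotonicity of conditional expectation against the pointwise bound by the maximum, and finally maximizing over the fixed pair $(\vx'', \vg'')$. Your additional remarks on measurability and integrability are consistent with how the paper handles these issues via \cref{thm:app-mubar-attains-bound} and the moment bounds of \cref{thm:app-supf-finite-moments}, so nothing further is needed.
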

\begin{proof}
    Let \(\tilde{\vx} \in \tilde{\sX}\) and \(n \in \mathbb{N}_0\).
    Then, for all \(\vx'' \in \sX\) and \(\vg'' : \sU \to \sY\),
    \begin{equation*}
        \max_{\rvx', \rvg'} \E\bigl[\bar{f}(\rvx', \rvg') \,\big|\, \mathcal{F}^n, \rv_{\tilde{\vx}}\bigr]
        \geq
        \E\bigl[\bar{f}(\vx'', \vg'') \,\big|\, \mathcal{F}^n, \rv_{\tilde{\vx}}\bigr].
    \end{equation*}
    Taking expectation conditional on \(\mathcal{F}^n\) gives
    \begin{equation*}
        \E\biggl[
            \max_{\rvx', \rvg'} \E\bigl[\bar{f}(\rvx', \rvg' \,\big|\, \mathcal{F}^n, \rv_{\tilde{\vx}})\bigr]
        \,\bigg|\,
            \mathcal{F}^n
        \biggr]
        \geq
        \E\bigl[\bar{f}(\vx'', \vg'') \,\big|\, \mathcal{F}^n\bigr]
        = \bar{\mu}^n(\vx'', \vg'').
    \end{equation*}
    Whence, this holds for the maximum over \(\vx''\) and \(\vg''\),
    \begin{gather*}
        \E\biggl[
            \max_{\rvx', \rvg'} \E\bigl[\bar{f}(\rvx', \rvg') \,\big|\, \mathcal{F}^n, \rv_{\tilde{\vx}}\bigr]
        \,\bigg|\,
            \mathcal{F}^n
        \biggr]
        \geq
        \max_{\rvx'', \rvg''} \bar{\mu}^n(\rvx'', \rvg'') \\
        \Rightarrow\qquad \alpha_\text{jKG}^n(\tilde{\vx})
        = \E\biggl[
            \max_{\rvx', \rvg'} \E\bigl[\bar{f}(\rvx', \rvg') \,\big|\, \mathcal{F}^n, \rv_{\tilde{\vx}}\bigr]
        \,\bigg|\,
            \mathcal{F}^n
        \biggr]
        - \max_{\rvx'', \rvg''} \bar{\mu}^n(\rvx'', \rvg'')
        \geq 0.
    \end{gather*}
\end{proof}

\subsection{\texorpdfstring{Convergence of \(\alpha_\text{jKG}^n\) to zero}{Convergence of joint KG to zero}}

In this section we will prove that almost surely, as we collect more data, the joint knowledge gradient converges uniformly to zero. This is the first step in proving that the recommendations converge to optimal.
In the field of sequential uncertainty reduction \citep{bect2019supermartingale}, the concept of residual uncertainty is central.
\begin{definition} \label{defn:app-residual-uncert}
    For \(n \in \mathbb{N}_0\), the \emph{residual uncertainty} associated with the joint knowledge gradient is the expected difference between the maximum of the conditional expectation of \(\bar{f}\) and the maximum of \(\bar{f}\) itself,
    \begin{equation} \label{eq:app-residual-uncert}
        H^n = \E\Biggl[\, \max_{\substack{\rvx' \in \sX \\ \rvg' : \sU \to \sY}} \bar{f}(\rvx', \rvg') \,\Bigg|\, \mathcal{F}^n \Biggr] - \max_{\substack{\rvx' \in \sX \\ \rvg' : \sU \to \sY}} \underbrace{\E\bigl[\,\bar{f}(\rvx', \rvg') \,\big|\, \mathcal{F}^n \bigr]}_{\bar{\mu}^n(\rvx', \rvg')}.
    \end{equation}
    It measures how well the maximum of the posterior mean \(\bar{\mu}^n\) approximates the maximum of \(\bar{f}\).
\end{definition}
\begin{remark} \label{rmk:app-joint-kg-and-resudual-uncertainty}
    For each \(n \geq 0\), the joint knowledge gradient at \(\tilde{\rvx}^{n+1}\) is the expected reduction in residual uncertainty,
    \begin{equation} \label{eq:app-joint-kg-and-residual-uncertainty}
        \alpha_\text{jKG}^n(\tilde{\rvx}^{n+1}) = H^n - \E[H^{n+1} \,|\, \mathcal{F}^n].
    \end{equation}
    Indeed, the first term, \(\E[\max_{\rvx', \rvg'} \bar{f}(\rvx', \rvg') \,|\, \mathcal{F}^n]\), cancels and we have
    \begin{align*}
        H^n - \E[ H^{n+1} \,|\, \mathcal{F}^n ] = \E\biggl[ \max_{\rvx', \rvg'} \bar{\mu}^{n+1}(\rvx', \rvg') \,\bigg|\, \mathcal{F}^n \biggr] - \max_{\rvx', \rvg'} \bar{\mu}^n(\rvx', \rvg') = \alpha_\text{jKG}^n(\tilde{\rvx}^{n+1}).
    \end{align*}
    The last equality here uses the fact that \(\tilde{\rvx}^{n+1}\) is \(\mathcal{F}^n\)-measurable.
\end{remark}
\begin{lemma} \label{thm:resid-uncert-well-def}
    The residual uncertainty in \cref{defn:app-residual-uncert} is well defined and non-negative.
    That is, for all \(n\), \(0 \leq H^n < \infty\) almost surely.
\end{lemma}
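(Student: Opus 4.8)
The plan is to separate the two claims---finiteness and non-negativity---and to lean on the two lemmas just proved. The second term of $H^n$, namely $\max_{\rvx', \rvg'} \bar\mu^n(\rvx', \rvg')$, is finite and attained by \cref{thm:app-mubar-attains-bound}, so the real work is to control the first term, the conditional expectation of $\max_{\rvx', \rvg'} \bar f(\rvx', \rvg')$.

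First I would confirm that $\max_{\rvx', \rvg'} \bar f(\rvx', \rvg')$ is a bona fide integrable random variable. Using the pointwise decoupling of the inner maximization over $\vg$ established in the proof of \cref{thm:app-fbar-attains-bound}, we may write
\begin{equation*}
    \max_{\substack{\rvx' \in \sX \\ \rvg' : \sU \to \sY}} \bar f(\rvx', \rvg')
    = \max_{\rvx' \in \sX} \E_\rvu\Bigl[ \max_{\rvy' \in \sY} f(\rvx', \rvy', \rvu) \Bigr].
\end{equation*}
That proof already shows this maximum is attained with continuous sample paths in $\rvx'$, and measurability then follows from separability of $\sX$. For integrability I would sandwich it between the global extremes of $f$: writing $m = \inf_{\tilde{\vx} \in \tilde{\sX}} f(\tilde{\vx})$ and $M = \sup_{\tilde{\vx} \in \tilde{\sX}} f(\tilde{\vx})$, the pointwise bounds $m \le \max_{\rvy'} f(\rvx', \rvy', \rvu) \le M$ survive integration over $\rvu$ and maximization over $\rvx'$, giving
\begin{equation*}
    m \;\le\; \max_{\rvx', \rvg'} \bar f(\rvx', \rvg') \;\le\; M.
\end{equation*}
Since $\E|m|, \E|M| < \infty$ by the same application of \citep[Theorem~2.9]{azais2009levelsets} used for \cref{thm:app-fbar-attains-bound}, this random variable is integrable, so its conditional expectation given $\mathcal{F}^n$ is almost surely finite. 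Together with finiteness of the second term this yields $H^n < \infty$ almost surely.

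Non-negativity I would obtain by the same Jensen-type step as in the proof of \cref{thm:app-jointkg-nonneg}: for any fixed $\vx'' \in \sX$ and $\vg'' : \sU \to \sY$ the pointwise inequality $\max_{\rvx', \rvg'} \bar f(\rvx', \rvg') \ge \bar f(\vx'', \vg'')$ holds, and applying $\E[\,\cdot \mid \mathcal{F}^n]$ gives $\E[\max_{\rvx', \rvg'} \bar f(\rvx', \rvg') \mid \mathcal{F}^n] \ge \bar\mu^n(\vx'', \vg'')$; taking the maximum of the right-hand side over $\vx''$ and $\vg''$ delivers $H^n \ge 0$. The only delicate point is the measurability and integrability of the supremum of $\bar f$, but this is essentially inherited from the machinery already in place for \cref{thm:app-fbar-attains-bound}, so I anticipate no serious obstacle.
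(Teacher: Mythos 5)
Your proposal is correct and follows essentially the same route as the paper: the paper likewise reduces finiteness of both terms of \(H^n\) to integrability of the global extremes of \(f\) over the compact \(\tilde{\sX}\) (bounding \(|H^n| \leq 2\,\E[\sup_{\tilde{\vx}}|f(\tilde{\vx})|\,|\,\mathcal{F}^n]\) and invoking \cref{thm:app-supf-finite-moments} with \(p=1\), which is the same integrability input as your sandwich \(m \leq \max \bar{f} \leq M\) via \citep[Theorem~2.9]{azais2009levelsets}), and it obtains non-negativity by exactly your Jensen-type argument mirroring \cref{thm:app-jointkg-nonneg}. Your use of the bi-level decoupling and \cref{thm:app-mubar-attains-bound} for the second term is only a cosmetic repackaging, and your steps are sound.
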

\begin{proof}
    Let \(n \in \mathbb{N}_0\).
    We will first show that \(|H^n| < \infty\) almost surely.
    We will bound each term separately by the expected maximum of the absolute value of \(f\), then show that this is finite.
    Recalling that \(\bar{f}(\vx', \vg') = \E_{\rvu'}[f(\vx', \vy', \rvu')]\), we have for the first term in \cref{eq:app-residual-uncert},
    \begin{align*}
        \biggl| \E\Bigl[\, \max_{\rvx', \rvg'} \bar{f}(\rvx', \rvg') \,\Big|\, \mathcal{F}^n \Bigr] \biggr|
        &\leq \E\Bigl[ \max_{\rvx', \rvg'} \bigl| \bar{f}(\rvx', \rvg') \bigr| \,\Big|\, \mathcal{F}^n \Bigr] \\
        &\leq \E\Bigl[\, \max_{\rvx', \rvg'} \E_{\rvu'} \bigl[ | f(\rvx', \rvg'(\rvu'), \rvu')| \bigr] \,\Big|\, \mathcal{F}^n \Bigr] \\
        &\leq \E\Bigl[\, \max_{\rvx', \rvy', \rvu'} |f(\rvx', \rvy', \rvu')| \,\Big|\, \mathcal{F}^n \Bigr].
    \end{align*}
    Similarly, for the second term,
    \begin{align*}
        \biggl| \max_{\rvx', \rvg'} \E\bigl[ \bar{f}(\rvx', \rvg') \,\big|\, \mathcal{F}^n \bigr] \biggr|
        &\leq \max_{\rvx', \rvg'} \E\Bigl[ \bigl| \bar{f}(\rvx', \rvg') \bigr| \,\Big|\, \mathcal{F}^n \Bigr] \\
        &\leq \max_{\rvx', \rvg'} \E\Bigl[ \E_{\rvu'} \bigl[ | f(\rvx', \rvg'(\rvu'), \rvu')| \bigr] \,\Big|\, \mathcal{F}^n \Bigr] \\
        &\leq \max_{\rvx', \rvy'} \E\Bigl[ \max_{\rvu'} | f(\rvx', \rvy', \rvu')| \,\Big|\, \mathcal{F}^n \Bigr] \\
        &\leq \E\Bigl[\, \max_{\rvx', \rvy', \rvu'} |f(\rvx', \rvy', \rvu')| \,\Big|\, \mathcal{F}^n \Bigr].
    \end{align*}
    Thus,
    \begin{equation*}
        |H^n| \leq \biggl| \E\Bigl[ \max_{\rvx', \rvg'} \bar{f}(\rvx', \rvg') \,\Big|\, \mathcal{F}^n \Bigr] \biggr| + \biggl| \max_{\rvx', \rvg'} \E[ \bar{f}(\rvx', \rvg') \,|\, \mathcal{F}^n ] \biggr| \leq 2\, \E\biggl[ \max_{\rvx', \rvy', \rvu'} |f(\rvx', \rvy', \rvu')| \,\bigg|\, \mathcal{F}^n \biggr].
    \end{equation*}
    This upper bound is non-negative, so it suffices to show that its expectation is finite.
    That is, we wish to show
    \begin{equation*}
        \E\biggl[ \max_{\rvx', \rvy', \rvu'} |f(\rvx', \rvy', \rvu')| \biggr] < \infty.
    \end{equation*}
    This follows from \cref{thm:app-supf-finite-moments} which follows this proof (setting \(p=1\)).
    
    It remains to show that \(H^n > 0\) almost surely.
    This follows because the expectation of the maximum must be at least the maximum of the expectation, by a very similar argument to \cref{thm:app-jointkg-nonneg}.
\end{proof}

\begin{remark} \label{rmk:nonneg-sup-mart}
    It is possible to show that \((H^n)_{n=0}^\infty\) form a non-negative supermartingale with respect to \(\mathcal{F}^n\), and this is the route taken in \citep{bect2019supermartingale}. However, we will be able to get by with \cref{thm:app-jointkg-nonneg} and \cref{rmk:app-joint-kg-and-resudual-uncertainty}.
\end{remark}

\begin{lemma} \label{thm:app-supf-finite-moments}
    For any \(1 \leq p < \infty\),
    \begin{equation*}
        \E\biggl[ \biggl( \sup_{\tilde{\vx} \in \tilde{\sX}} |f(\tilde{\vx})| \biggr)^p \biggr] < \infty.
    \end{equation*}
\end{lemma}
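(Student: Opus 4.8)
The plan is to reduce the claim to the classical Gaussian concentration of the supremum via the Borell--TIS inequality. First I would split off the mean: write $f = \mu + f_0$, where $\mu$ is the deterministic continuous prior mean and $f_0$ is a centred Gaussian process with continuous sample paths. Since $\tilde{\sX}$ is compact and $\mu$ is continuous, $\sup_{\tilde{\vx}} |\mu(\tilde{\vx})| =: C_\mu < \infty$, so $\sup_{\tilde{\vx}} |f(\tilde{\vx})| \leq C_\mu + \sup_{\tilde{\vx}} |f_0(\tilde{\vx})|$ and, by the elementary inequality $(a+b)^p \leq 2^{p-1}(a^p + b^p)$, it suffices to prove $\E[(\sup_{\tilde{\vx}} |f_0(\tilde{\vx})|)^p] < \infty$.

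Next I would reduce from $\sup|f_0|$ to the one-sided supremum. Writing $M = \sup_{\tilde{\vx}} f_0(\tilde{\vx})$, we have $\sup_{\tilde{\vx}} |f_0(\tilde{\vx})| = \max(M,\, -\inf_{\tilde{\vx}} f_0(\tilde{\vx}))$, and since $f_0$ and $-f_0$ have the same law, $-\inf f_0 = \sup(-f_0)$ is equal in distribution to $M$. Using $\max(a,b) \leq |a| + |b|$ together with convexity once more gives $\E[(\sup|f_0|)^p] \leq 2^p\, \E[|M|^p]$, so the entire problem reduces to bounding the single quantity $\E[|M|^p]$.

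The key step is then to apply the Borell--TIS inequality to $f_0$. Its hypotheses are satisfied: $f_0$ is centred, and since its sample paths are continuous on the compact set $\tilde{\sX}$ it is almost surely bounded, while $\sigma_T^2 := \sup_{\tilde{\vx}} \Var(f_0(\tilde{\vx})) = \sup_{\tilde{\vx}} k(\tilde{\vx}, \tilde{\vx}) < \infty$ because $k$ is continuous on the compact set $\tilde{\sX} \times \tilde{\sX}$. The inequality guarantees that $m := \E[M]$ is finite (which also follows from the finiteness argument already used in the Fubini lemma above) and that $\mathbb{P}(|M - m| > u) \leq 2\, e^{-u^2/(2\sigma_T^2)}$ for all $u > 0$.

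Finally I would convert this sub-Gaussian tail into a moment bound: $\E[|M - m|^p] = \int_0^\infty p\, u^{p-1}\, \mathbb{P}(|M - m| > u)\, \mathrm{d}u \leq \int_0^\infty 2p\, u^{p-1} e^{-u^2/(2\sigma_T^2)}\, \mathrm{d}u < \infty$, the integral converging because the Gaussian factor dominates any polynomial. Hence $\E[|M|^p] \leq 2^{p-1}(\E|M - m|^p + |m|^p) < \infty$, which closes the chain of reductions. The main obstacle is conceptual rather than computational: recognising that the correct tool is Gaussian concentration (Borell--TIS) rather than a direct chaining or entropy estimate, and verifying that the supremum has finite mean so that the inequality applies; once the tail bound is in hand, the moment computation is routine.
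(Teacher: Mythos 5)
Your proof is correct and follows essentially the same route as the paper: both decompose \(f = \mu + f_0\) with \(f_0\) centred, exploit the symmetry \(f_0 \overset{d}{=} -f_0\) to reduce to the one-sided supremum, and then invoke finiteness of the moments of \(\sup f_0\) for an a.s.\ bounded Gaussian process. The only difference is that where the paper cites this last ingredient directly from Aza\"is and Wschebor (their Equation~2.34), you derive it explicitly from the Borell--TIS concentration inequality and the standard tail-to-moment integration, which makes the argument self-contained but is otherwise the same mathematics.
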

\begin{proof}
    Let \(1 \leq p < \infty\).
    By Jensen's inequality,
    \begin{equation*}
        \left( \sup_{\tilde{\vx}' \in \tilde{\sX}} |f(\tilde{\vx}')| \right)^p
        = \left(\max\left\{ \sup_{\tilde{\vx}' \in \tilde{\sX}} f(\tilde{\vx}'),\, \sup_{\tilde{\vx}' \in \tilde{\sX}} -f(\tilde{\vx}') \right\} \right)^p
        \leq \left|\sup_{\tilde{\vx}'} f(\tilde{\vx}')\right|^p + \left|\sup_{\tilde{\vx}'} -f(\tilde{\vx}')\right|^p.
    \end{equation*}
    Assume first that \(f\) has mean zero.
    Then \(f\) and \(-f\) are identically distributed, and
    \begin{equation*}
        \E\left[ \left( \sup_{\tilde{\vx}' \in \tilde{\sX}} |f(\tilde{\vx}')| \right)^p \right] \leq 2 \E\left[ \left| \sup_{\tilde{\vx}'} f(\tilde{\vx}') \right|^p \right].
    \end{equation*}
    This upper bound is finite, by \citep[Equation~2.34]{azais2009levelsets}.
    
    In the case where \(f\) does not have mean zero, write \(f(\tilde{\vx}) = \mu(\tilde{\vx}) + f_0(\tilde{\vx})\) where \(f_0\) is a zero mean GP and \(\mu\) is deterministic.
    Then \(\sup_{\tilde{\vx}'} |f(\tilde{\vx}')| \leq \sup_{\tilde{\vx}'} |\mu(\tilde{\vx}')| + \sup_{\tilde{\vx}'} |f_0(\tilde{\vx}')|\) and so the integer moments of \(\sup_{\tilde{\vx}'} |f(\tilde{\vx}')|\) are bounded above by a linear combination of the integer moments of \(\sup_{\tilde{\vx}'} |f_0(\tilde{\vx}')|\) of degree at most \(p\), which are finite. For non-integer \(p\), the result follows by rounding up to the next largest integer.
\end{proof}

\begin{theorem} \label{thm:app-joint-kg-to-zero}
    The joint knowledge gradient converges almost surely, uniformly to zero. That is,
    \begin{equation*}
        \mathbb{P}\biggl(\, \sup_{\tilde{\rvx} \in \tilde{\sX}} \alpha_\text{jKG}^n(\tilde{\rvx}) \to 0 \quad\text{as}\quad n \to 0 \biggr) = 1.
    \end{equation*}
\end{theorem}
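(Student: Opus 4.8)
The plan is to combine the supermartingale structure of the residual uncertainty $(H^n)$ with the approximate-maximization assumption in \cref{eq:app-maximize-acqf}. The key realization is that \cref{rmk:app-joint-kg-and-resudual-uncertainty} and \cref{thm:app-jointkg-nonneg} together turn the sequence $(H^n)$ into a non-negative supermartingale, whose decrements are exactly the joint knowledge gradient evaluated at the sampled points. Convergence of the supermartingale then pins those decrements to zero, and the near-optimality assumption transfers this from the sampled points to the supremum over all of $\tilde{\sX}$.

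First I would verify that $(H^n)_{n \geq 0}$ is a non-negative supermartingale. Non-negativity and integrability are supplied by \cref{thm:resid-uncert-well-def} (whose finiteness rests on \cref{thm:app-supf-finite-moments}). For the supermartingale property, I substitute the non-negativity of $\alpha_\text{jKG}^n$ from \cref{thm:app-jointkg-nonneg}, evaluated at the chosen point $\tilde{\rvx}^{n+1}$, into the identity of \cref{rmk:app-joint-kg-and-resudual-uncertainty}, giving
\[
    \E[H^{n+1} \mid \mathcal{F}^n] = H^n - \alpha_\text{jKG}^n(\tilde{\rvx}^{n+1}) \leq H^n \qquad \text{a.s.}
\]

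Next I would deduce convergence at the sampled points. Doob's convergence theorem gives an almost-sure limit $H^\infty \geq 0$, while the expectations $\E[H^n]$ are non-increasing and bounded below by zero, hence converge. Summing the identity of \cref{rmk:app-joint-kg-and-resudual-uncertainty} over $n$, taking expectations, and interchanging sum and expectation by monotone convergence (the summands are non-negative by \cref{thm:app-jointkg-nonneg}) telescopes to
\[
    \E\!\left[ \sum_{n = n_0}^\infty \alpha_\text{jKG}^n(\tilde{\rvx}^{n+1}) \right] = \E[H^{n_0}] - \lim_{n \to \infty} \E[H^n] < \infty.
\]
The finiteness of the right-hand side forces the series to converge almost surely, so $\alpha_\text{jKG}^n(\tilde{\rvx}^{n+1}) \to 0$ almost surely.

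Finally I would upgrade this pointwise statement to the uniform one. Rearranging \cref{eq:app-maximize-acqf} and using \cref{thm:app-jointkg-nonneg} for the lower bound yields, for every outcome $\omega$ and every $n \geq n_0$,
\[
    0 \leq \sup_{\tilde{\rvx} \in \tilde{\sX}} \alpha_\text{jKG}^n(\tilde{\rvx}) < \alpha_\text{jKG}^n(\tilde{\rvx}^{n+1}) + \eta^n .
\]
Since $\alpha_\text{jKG}^n(\tilde{\rvx}^{n+1}) \to 0$ almost surely and $\eta^n \to 0$ by assumption, a squeeze gives $\sup_{\tilde{\rvx}} \alpha_\text{jKG}^n(\tilde{\rvx}) \to 0$ almost surely, as required.

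I expect the main obstacle to be this last transfer step rather than the supermartingale machinery. Convergence of the acquisition function at the design points says nothing on its own about the global maximum of $\alpha_\text{jKG}^n$; it is precisely the near-optimality assumption \cref{eq:app-maximize-acqf} that bridges the gap. The delicate point is to carry out the squeeze path-by-path in $\omega$, so that one never needs measurability of $\sup_{\tilde{\rvx}} \alpha_\text{jKG}^n$ — which could genuinely fail in the noiseless case $\sigma = 0$, where the joint knowledge gradient is discontinuous at the observed locations and the supremum need not even be attained.
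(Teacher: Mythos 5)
Your proposal is correct and follows essentially the same route as the paper's proof: telescoping the conditional decrements of the residual uncertainty \(H^n\), applying monotone convergence with non-negativity (\cref{thm:app-jointkg-nonneg}, \cref{thm:resid-uncert-well-def}) to conclude \(\alpha_\text{jKG}^n(\tilde{\rvx}^{n+1}) \to 0\) almost surely, and then using the near-optimality assumption \cref{eq:app-maximize-acqf} to squeeze the supremum. The only difference is cosmetic: your explicit appeal to Doob's supermartingale convergence theorem is superfluous, since \(H^n \geq 0\) already gives \(\E[H^{n_0}] - \E[H^{N+1}] \leq \E[H^{n_0}]\) directly --- indeed \cref{rmk:nonneg-sup-mart} notes the paper deliberately avoids that machinery.
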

\begin{proof}
    For each integer \(n \geq 0\), let \(\Delta^{n+1} = H^n - H^{n+1}\) and \(\overline{\Delta}^{n+1} = \E[\Delta^{n+1} \,|\, \mathcal{F}^n]\).
    Therefore,
    \begin{equation*}
        \overline{\Delta}^{n+1} = \alpha_\text{jKG}^n(\tilde{\rvx}^{n+1}) \geq 0
    \end{equation*}
    by \cref{rmk:app-joint-kg-and-resudual-uncertainty} and \cref{thm:app-jointkg-nonneg}.
    Observe that the telescopic sum \(\sum_{j=1}^n \Delta^j = H^0 - H^n\).
    Therefore,
    \begin{equation*}
        \E\biggl[ \sum_{j=1}^n \overline{\Delta}^{j} \biggr] = \E\biggl[ \sum_{j=1}^n \Delta^j \biggr] = \E[H^0 - H^n] \leq \E[H^0] < \infty
    \end{equation*}
    by \cref{thm:resid-uncert-well-def}.
    Hence, since \(\overline{\Delta}^n\) is almost surely non-negative for all \(n\), we have \(\E[\sum_{n=1}^\infty \overline{\Delta}^n] < \infty\) by the monotone convergence theorem. Whence, \(\overline{\Delta}^n \to 0\) as \(n \to \infty\) almost surely.

    We chose our design points to maximize the knowledge gradient as in \cref{eq:app-maximize-acqf}. Therefore, for all \(n \geq n_0\),
    \begin{equation*}
        0 \leq \sup_{\tilde{\rvx} \in \tilde{\sX}} \alpha_\text{jKG}^n(\tilde{\rvx})
        < \alpha_\text{jKG}^n(\tilde{\rvx}^{n+1}) + \eta^n
        = \overline{\Delta}^{n+1} + \eta^n \to 0 \quad \text{as} \quad n \to \infty \quad \text{a.s.}
    \end{equation*}
    which establishes the result.
\end{proof}

\subsection{Uniform convergence of the posterior mean and covariance functions}
In this section, we establish the uniform convergence of the posterior mean and covariance functions of \(f\) and \(\bar{f}\).
This will be an important ingredient in proving convergence of the values of the recommendations.

Let \(\mathcal{F}^\infty = \sigma(\cup_{n=0}^\infty \mathcal{F}^n)\) be the sigma-algebra generated by the \(\mathcal{F}^n\).
Define processes
\begin{subequations}
\begin{alignat}{2}
    \mu^\infty &: \tilde{\sX} \to \R &\qquad \tilde{\vx} &\mapsto \E[f(\tilde{\vx}) \,|\, \mathcal{F}^\infty], \\
    k^\infty &: \tilde{\sX} \times \tilde{\sX} \to \R &\qquad (\tilde{\vx}, \tilde{\vx}') &\mapsto \mathrm{Cov}[f(\tilde{\vx}), f(\tilde{\vx}') \,|\, \mathcal{F}^\infty].
\end{alignat}
\end{subequations}

The proof uses the theory of martingales in Banach spaces.
The Gaussian process, \(f\), is defined on the probability space \((\Omega, \mathcal{F}, \mathbb{P})\) and can be viewed as a function \(f : \Omega \times \tilde{\sX} \to \R,\, (\omega, \tilde{\vx}) \mapsto f(\tilde{\vx})(\omega)\).
The sigma-algebra \(\mathcal{F}\) contains all the events generated by the finite-dimensional distributions of \(f\) and, since sigma-algebras are closed under countable intersections, \(\mathcal{F}\) also contains all events which can be described by the value of \(f\) at countably many locations in \(\tilde{\sX}\).
We are considering the version of \(f\) with continuous sample paths and \(\tilde{\sX}\) is separable, and so \(\mathcal{F}\) contains all events described by the sample paths of \(f\).
That is, we can view \(f\) as a measurable function \(f: \Omega \to \mathcal{C}(\tilde{\sX}),\, \omega \mapsto (\tilde{\vx} \mapsto f(\omega, \tilde{\vx}))\) taking values in the space \(\mathcal{C}(\tilde{\sX})\) of continuous, real-valued functions on \(\tilde{\sX}\).
In other words, \(f\) is a random element in \(\mathcal{C}(\tilde{\sX})\), defined on \((\Omega, \mathcal{F}, \mathbb{P})\).

Since \(\tilde{\sX}\) is compact, the space \(\mathcal{C}(\tilde{\sX})\) of continuous functions \(\tilde{\sX} \to \R\) forms a Banach space when equipped with the supremum norm, \(\|\cdot\|_\infty\).
For \(1 \leq p < \infty\), we denote by \(L_p(\Omega, \mathcal{F}, \mathbb{P};\, \mathcal{C}(\tilde{\sX}))\), the space of (Bochner-) measurable functions, \(f\), with \(\int \|f\|_\infty^p 
 \mathrm{d}\mathbb{P} = \E[\|f\|_\infty^p] < \infty\).
 Note that, since \(\mathcal{C}(\tilde{\sX})\) is separable, the notion of Bochner-measurability coincides with that of Borel-measurability, by Pettis' theorem \citep[Theorem~1.47]{vanNeerven2022funcanal}.

\begin{proposition} \label{thm:app-mu-k-uniform-convergence}
    For any choice of design points \(\tilde{\rvx}^1, \tilde{\rvx}^2, \dots\), the sequences of stochastic processes \(\mu^n \to \mu^\infty\) and \(k^n \to k^\infty\) converge uniformly as \(n \to \infty\), both almost surely and in \(L^p\) for all \(1 \leq p < \infty\). Furthermore, the limits \(\mu^\infty\) and \(k^\infty\) are continuous.
\end{proposition}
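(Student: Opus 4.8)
The plan is to realise both $\mu^n$ and $k^n$ as closed (regular) martingales in suitable Banach spaces of continuous functions and then invoke the Banach-space martingale convergence theorem. Since $\tilde{\sX}$ is compact, $\mathcal{C}(\tilde{\sX})$ is a separable Banach space under $\|\cdot\|_\infty$, and by \cref{thm:app-supf-finite-moments} we have $\E[\|f\|_\infty^p] < \infty$ for every $1 \le p < \infty$, so $f$ is a random element of $L^p(\Omega, \mathcal{F}, \mathbb{P};\, \mathcal{C}(\tilde{\sX}))$. First I would identify the pointwise posterior mean with the Bochner conditional expectation: for each fixed $\tilde{\vx}$ the evaluation functional $\delta_{\tilde{\vx}} : \mathcal{C}(\tilde{\sX}) \to \R$ is bounded and linear, and conditional expectation commutes with bounded linear functionals, so $\delta_{\tilde{\vx}}\bigl(\E[f \mid \mathcal{F}^n]\bigr) = \E[f(\tilde{\vx}) \mid \mathcal{F}^n] = \mu^n(\tilde{\vx})$. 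As $\E[f \mid \mathcal{F}^n]$ is $\mathcal{C}(\tilde{\sX})$-valued it has continuous sample paths, and since two continuous functions agreeing on the separable set $\tilde{\sX}$ coincide, $\mu^n$ is almost surely continuous and equals the Bochner conditional expectation; the same holds for $\mu^\infty = \E[f \mid \mathcal{F}^\infty]$. The sequence $(\mu^n)_n = (\E[f \mid \mathcal{F}^n])_n$ is then a closed martingale, and the convergence theorem for closed $L^p$-martingales in Banach spaces (see, e.g., \citep{vanNeerven2022funcanal}) gives $\mu^n \to \mu^\infty$ uniformly, both almost surely and in $L^p$, with $\mu^\infty$ continuous.

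The covariance requires more care, and this is where I expect the main obstacle to lie: $k^n$ is a \emph{quadratic} functional of $f$, not a conditional expectation of $f$, so the martingale argument does not apply to it directly. To circumvent this I would introduce the second-moment process $g = f \otimes f$, viewed as a random element of $\mathcal{C}(\tilde{\sX} \times \tilde{\sX})$ via $g(\tilde{\vx}, \tilde{\vx}') = f(\tilde{\vx}) f(\tilde{\vx}')$. Since $\|g\|_\infty = \|f\|_\infty^2$, \cref{thm:app-supf-finite-moments} with exponent $2p$ gives $g \in L^p(\Omega, \mathcal{F}, \mathbb{P};\, \mathcal{C}(\tilde{\sX} \times \tilde{\sX}))$. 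Arguing exactly as above, $S^n := \E[g \mid \mathcal{F}^n]$ is the closed martingale of a continuous-function-valued element with $S^n(\tilde{\vx}, \tilde{\vx}') = \E[f(\tilde{\vx}) f(\tilde{\vx}') \mid \mathcal{F}^n]$, so $S^n \to S^\infty := \E[g \mid \mathcal{F}^\infty]$ uniformly, almost surely and in $L^p$. Writing $(\mu^n \otimes \mu^n)(\tilde{\vx}, \tilde{\vx}') = \mu^n(\tilde{\vx}) \mu^n(\tilde{\vx}')$, we have $k^n = S^n - \mu^n \otimes \mu^n$ and, matching the definition of conditional covariance under $\mathcal{F}^\infty$, $k^\infty = S^\infty - \mu^\infty \otimes \mu^\infty$.

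It then remains to transfer the convergence of $\mu^n$ through the nonlinear product term. Almost-sure uniform convergence follows from the elementary bound $\|\mu^n \otimes \mu^n - \mu^\infty \otimes \mu^\infty\|_\infty \le \|\mu^n - \mu^\infty\|_\infty \bigl(\|\mu^n\|_\infty + \|\mu^\infty\|_\infty\bigr)$, since along an almost surely uniformly convergent sequence the right-hand factor is almost surely bounded. For the $L^p$ statement I would apply Hölder's inequality to the same bound, using that $\mu^n \to \mu^\infty$ in $L^{2p}$ (available because $f \in L^{2p}$) together with the uniform bound $\E[\|\mu^n\|_\infty^{2p}] \le \E\bigl[\E[\|f\|_\infty \mid \mathcal{F}^n]^{2p}\bigr] \le \E[\|f\|_\infty^{2p}] < \infty$, which holds by conditional Jensen; this shows $\mu^n \otimes \mu^n \to \mu^\infty \otimes \mu^\infty$ in $L^p$. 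Combining with the convergence of $S^n$ yields $k^n \to k^\infty$ uniformly, almost surely and in $L^p$. Finally, $\mu^\infty$ and $S^\infty$ are continuous as values of $\mathcal{C}$-valued random elements, so $k^\infty = S^\infty - \mu^\infty \otimes \mu^\infty$ is continuous, completing the argument. The principal technical points are the clean identification of the scalar posterior quantities with Bochner conditional expectations, so that the Banach-space martingale theorem applies, and the quadratic bookkeeping needed to pass $L^p$-convergence through the product $\mu^n \otimes \mu^n$.
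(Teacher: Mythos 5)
Your proposal is correct and follows essentially the same route as the paper's proof: both realise \(\mu^n\) and the second-moment process \(\E[f(\tilde{\vx})f(\tilde{\vx}')\mid\mathcal{F}^n]\) as closed martingales in \(\mathcal{C}(\tilde{\sX})\) and \(\mathcal{C}(\tilde{\sX}\times\tilde{\sX})\), invoke the Banach-space martingale convergence theorem, and recover \(k^n\) as the second moment minus the product of means. If anything, your explicit H\"older/conditional-Jensen handling of the product term \(\mu^n\otimes\mu^n\), and your identification of the pointwise posterior mean with the Bochner conditional expectation via evaluation functionals, fill in steps the paper leaves implicit.
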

\begin{proof}
    For any \(1 \leq p < \infty\), \(\E[\|f\|_\infty^p] < \infty\) by \cref{thm:app-supf-finite-moments} and so \(f \in L_p(\Omega, \mathcal{F}, \mathbb{P};\, \mathcal{C}(\tilde{\sX}))\).
    The conditional means \(\mu^n\) and \(\mu^\infty\) have continuous sample paths since \(L_p(\Omega, \mathcal{F}, \mathbb{P};\, \mathcal{C}(\tilde{\sX}))\) is closed under conditional expectation \citep[Proposition 1.10]{pisier2016martingalesbanach}, and so \((\mu^n)_{n=0}^\infty\) is a martingale in \(\mathcal{C}(\tilde{\sX})\).
    Therefore, \(\mu^n \to \mu^\infty\) in \(L^p(\Omega, \mathcal{F}, \mathbb{P};\, \mathcal{C}(\tilde{\sX}))\) and almost surely by Theorems 1.14 and 1.30 in \citep{pisier2016martingalesbanach}, respectively.
    Since this space uses the supremum norm, this corresponds to uniform convergence of the processes.

    We can use the same argument to show convergence of the second moments, \(\mu_{(2)}^n(\tilde{\vx}, \tilde{\vx}') = \E[ f(\tilde{\vx}) f(\tilde{\vx}') \,|\, \mathcal{F}^n]\) in the Banach space of continuous, real-valued functions \(\mathcal{C}(\tilde{\sX} \times \tilde{\sX})\).
    The process \(f_{(2)}(\tilde{\vx}, \tilde{\vx}') = f(\tilde{\vx}) f(\tilde{\vx}')\) has \(\E[\|f_{(2)}\|_\infty^p] < \infty\) for all \(1 \leq p < \infty\) and so \(f_{(2)} \in L_p(\Omega, \mathcal{F}, \mathbb{P},\;\mathcal{C}(\tilde{\sX} \times \tilde{\sX}))\).
    Thus, by \citep[Theorems 1.14 and 1.30]{pisier2016martingalesbanach}, \(\mu_{(2)}^n \to \mu_{(2)}^\infty\) almost surely and in \(L_p(\Omega, \mathcal{F}, \mathbb{P},\; \mathcal{C}(\tilde{\sX} \times \tilde{\sX}))\), where \(\mu_{(2)}^\infty(\tilde{\vx}, \tilde{\vx}') = \E[f(\tilde{\vx}) f(\tilde{\vx}') \,|\, \mathcal{F}^\infty]\).
    Therefore, \(k^n \to k^\infty\) uniformly both almost surely and in \(L_p(\Omega, \mathcal{F}, \mathbb{P},\; \mathcal{C}(\tilde{\sX} \times \tilde{\sX}))\), since \(k^n(\tilde{\vx}, \tilde{\vx}') = \mu_{(2)}^n(\tilde{\vx}, \tilde{\vx}') - \mu^n(\tilde{\vx})\mu^n(\tilde{\vx}')\) for all \(n \in \mathbb{N}_0 \cup \{\infty\}\).
    Finally, \(k^\infty\) has continuous sample paths, since it is a random element in \(\mathcal{C}(\tilde{\sX} \times \tilde{\sX})\).
\end{proof}

We now prove the analogous result of \cref{thm:app-mu-k-uniform-convergence} for the average performance objective, \(\bar{f}\).
For each \(n \in \mathbb{N}_0 \cup \{\infty\}\), \(\vx, \vx' \in \sX\) and \(\vg, \vg': \sU \to \sY\), define the posterior mean and covariance functions of \(\bar{f}\) as
\begin{subequations}
    \begin{align}
        \bar{\mu}^n(\vx, \vg) &= \E[\bar{f}(\vx, \vg) \,|\, \mathcal{F}^n], \\
        \bar{k}^n(\vx, \vg;\, \vx', \vg') &= \mathrm{Cov}[\bar{f}(\vx, \vg), \bar{f}(\vx', \vg') \,|\, \mathcal{F}^n].
    \end{align}
\end{subequations}
Note that we do not include a statement of continuity of \(\bar{\mu}^\infty\) and \(\bar{k}^\infty\) since we have not defined a topology on the set \(\sX \times \{\sU \to \sY\}\).
\begin{proposition} \label{thm:app-mubar-kbar-uniform-convergence}
    For any choice of design points \(\tilde{\rvx}^1, \tilde{\rvx}^2, \dots\), the sequences of stochastic processes \(\bar{\mu}^n \to \bar{\mu}^\infty\) and \(\bar{k}^n \to \bar{k}^\infty\) converge uniformly as \(n \to \infty\), both almost surely and in \(L^p\) for all \(1 \leq p < \infty\).
\end{proposition}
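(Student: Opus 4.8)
The plan is to express the posterior statistics of \(\bar{f}\) as averages of those of \(f\) over the environmental variables, and then transfer the uniform convergence already established in \cref{thm:app-mu-k-uniform-convergence} by observing that averaging is a contraction for the supremum norm. For the mean, the interchange lemma proved earlier gives \(\bar{\mu}^n(\vx, \vg) = \E_\rvu[\mu^n(\vx, \vg(\rvu), \rvu)]\), and I would define \(\bar{\mu}^\infty(\vx, \vg) = \E_\rvu[\mu^\infty(\vx, \vg(\rvu), \rvu)]\) analogously, which is well defined since \(\mu^\infty\) is continuous, hence bounded, on the compact set \(\tilde{\sX}\). For the covariance, introducing an independent copy \(\rvu'\) of the environmental variable and applying Fubini to the product \(f(\vx, \vg(\rvu), \rvu)\, f(\vx', \vg'(\rvu'), \rvu')\), I would show
\begin{equation*}
    \bar{k}^n(\vx, \vg;\, \vx', \vg') = \E_{\rvu, \rvu'}\bigl[ k^n\bigl((\vx, \vg(\rvu), \rvu),\, (\vx', \vg'(\rvu'), \rvu')\bigr) \bigr],
\end{equation*}
using the identity \(\E[f(\tilde{\vx}) f(\tilde{\vx}') \mid \mathcal{F}^n] = k^n(\tilde{\vx}, \tilde{\vx}') + \mu^n(\tilde{\vx}) \mu^n(\tilde{\vx}')\); the cross term integrates to \(\bar{\mu}^n(\vx,\vg)\, \bar{\mu}^n(\vx',\vg')\) and so cancels exactly in the covariance. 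The limit \(\bar{k}^\infty\) is defined by the same formula with \(k^n\) replaced by \(k^\infty\). The Fubini step is justified because the relevant integrands have finite expectation: for the mean this is the content of the earlier interchange lemma, and for the product it follows from \(\E[(\sup_{\tilde{\vx}} |f(\tilde{\vx})|)^2] < \infty\), which is \cref{thm:app-supf-finite-moments} with \(p = 2\).

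With these formulae in hand, the core estimate is that averaging over \(\rvu\) cannot increase the supremum deviation. For every \(\vx \in \sX\) and \(\vg: \sU \to \sY\) the point \((\vx, \vg(\rvu), \rvu)\) lies in \(\tilde{\sX}\), so by Jensen's inequality and the triangle inequality,
\begin{equation*}
    \sup_{\vx, \vg} \bigl| \bar{\mu}^n(\vx, \vg) - \bar{\mu}^\infty(\vx, \vg) \bigr|
    \leq \sup_{\vx, \vg} \E_\rvu\bigl[ \bigl| \mu^n(\vx, \vg(\rvu), \rvu) - \mu^\infty(\vx, \vg(\rvu), \rvu) \bigr| \bigr]
    \leq \| \mu^n - \mu^\infty \|_\infty ,
\end{equation*}
and the analogous bound \(\sup | \bar{k}^n - \bar{k}^\infty | \leq \| k^n - k^\infty \|_\infty\) holds for the covariance, now using that both \((\vx, \vg(\rvu), \rvu)\) and \((\vx', \vg'(\rvu'), \rvu')\) range over \(\tilde{\sX}\).

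Finally I would conclude by applying \cref{thm:app-mu-k-uniform-convergence}: the right-hand sides \(\| \mu^n - \mu^\infty \|_\infty\) and \(\| k^n - k^\infty \|_\infty\) tend to zero almost surely and in \(L^p\) for every \(1 \leq p < \infty\), and the pathwise domination above immediately transfers both modes of convergence to \(\bar{\mu}^n\) and \(\bar{k}^n\). This domination also sidesteps the only real technical worry, namely the measurability of the supremum over the uncountable family of policies \(\vg: \sU \to \sY\) (for which no topology has been fixed): rather than arguing that \(\sup_{\vx,\vg}|\bar{\mu}^n - \bar{\mu}^\infty|\) is itself a well-behaved random variable, I control it from above by the genuinely measurable quantity \(\| \mu^n - \mu^\infty \|_\infty\), whose convergence is already known. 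The main obstacle is therefore not the limiting argument but getting the covariance identity right — in particular keeping the two environmental variables \(\rvu\) and \(\rvu'\) independent and verifying the second-moment integrability needed for Fubini.
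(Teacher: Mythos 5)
Your proof is correct and takes essentially the same route as the paper's: both arguments bound \(\sup_{\vx,\vg}\bigl|\bar{\mu}^n-\bar{\mu}^\infty\bigr|\) by \(\sup_{\tilde{\vx}}\bigl|\mu^n-\mu^\infty\bigr|\) and \(\sup\bigl|\bar{k}^n-\bar{k}^\infty\bigr|\) by \(\sup_{\tilde{\vx},\tilde{\vx}'}\bigl|k^n-k^\infty\bigr|\), then transfer both modes of convergence from \cref{thm:app-mu-k-uniform-convergence}. Your explicit derivation of the independent-copy identity \(\bar{k}^n(\vx,\vg;\,\vx',\vg')=\E_{\rvu,\rvu'}\bigl[k^n\bigl((\vx,\vg(\rvu),\rvu),(\vx',\vg'(\rvu'),\rvu')\bigr)\bigr]\) and the Fubini justification is sound detail that the paper leaves implicit here (it uses the same identity without comment later, in the proof of \cref{thm:app-kbar-constant}).
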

\begin{proof}
    Let \(n \in \mathbb{N}_0\).
    Then
    \begin{equation*}
        \sup_{\substack{\rvx \in \sX \\ \rvg : \sU \to \sY}} \bigl|\bar{\mu}^n(\rvx, \rvg) - \bar{\mu}^\infty(\rvx, \rvg)\bigr| \leq \sup_{\substack{\rvx \in \sX \\ \rvy \in \sY \\ \rvu \in \sU}} \bigl|\mu^n(\rvx, \rvy, \rvu) - \mu^\infty(\rvx, \rvy, \rvu)\bigr| \to 0
    \end{equation*}
    as \(n \to \infty\), both almost surely and in \(L^p\) for all \(1 \leq p < \infty\) by \cref{thm:app-mu-k-uniform-convergence}.
    Similarly,
    \begin{equation*}
        \sup_{\substack{\rvx, \rvx' \in \sX \\ \rvg, \rvg' : \sU \to \sY}} \bigl| \bar{k}^n(\rvx, \rvg;\, \rvx', \rvg') - \bar{k}^\infty(\rvx, \rvg;\, \rvx', \rvg') \bigr|
        \leq
        \sup_{\tilde{\rvx}, \tilde{\rvx}' \in \tilde{\sX}} \bigl| k^n(\tilde{\rvx}, \tilde{\rvx}') - k^\infty(\tilde{\rvx}, \tilde{\rvx}') \bigr|
        \to 0
    \end{equation*}
    as \(n \to \infty\), both almost surely and in \(L^p\) for all \(1 \leq p < \infty\) again by \cref{thm:app-mu-k-uniform-convergence}.
\end{proof}

We finish this section with a lemma which will be useful later, and which is also proved using \cref{thm:app-supf-finite-moments}.
\begin{lemma} \label{thm:app-supfbar-finite-expectation}
    All moments of the supremum of \(|\bar{f}|\) are finite. That is,
    \begin{equation*}
        \forall 1 \leq p < \infty \qquad \E\Biggl[ \biggl( \sup_{\substack{\rvx \in \sX \\ \rvg : \sU \to \sY}} |\bar{f}(\rvx, \rvg)| \biggr)^p \Biggr] < \infty.
    \end{equation*}
\end{lemma}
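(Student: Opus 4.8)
The plan is to reduce the statement to \cref{thm:app-supf-finite-moments} by dominating the supremum of \(|\bar{f}|\) over all fixed designs and control policies by the supremum of \(|f|\) over the whole domain \(\tilde{\sX}\). This is exactly the device already used in the proof of \cref{thm:resid-uncert-well-def}, now carried out for a general exponent \(p\) rather than only the first moment.

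First I would fix an arbitrary \(\rvx \in \sX\) and \(\rvg : \sU \to \sY\) and bound \(\bar{f}\) pointwise. Since \(\bar{f}(\rvx, \rvg) = \E_\rvu[f(\rvx, \rvg(\rvu), \rvu)]\), Jensen's inequality applied to the convex map \(|\cdot|\) gives
\[
    |\bar{f}(\rvx, \rvg)| \leq \E_\rvu\bigl[ |f(\rvx, \rvg(\rvu), \rvu)| \bigr] \leq \sup_{\tilde{\vx}' \in \tilde{\sX}} |f(\tilde{\vx}')|,
\]
where the last step uses that \((\rvx, \rvg(\rvu), \rvu) \in \tilde{\sX}\) for every realization of \(\rvu\), and that the right-hand side is constant with respect to \(\rvu\). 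Because this bound is uniform in \(\rvx\) and \(\rvg\), taking the supremum on the left preserves it:
\[
    \sup_{\substack{\rvx \in \sX \\ \rvg : \sU \to \sY}} |\bar{f}(\rvx, \rvg)| \leq \sup_{\tilde{\vx}' \in \tilde{\sX}} |f(\tilde{\vx}')| \qquad \text{a.s.}
\]

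Raising both (non-negative) sides to the power \(p \geq 1\) and taking expectations then yields
\[
    \E\Biggl[ \biggl( \sup_{\substack{\rvx \in \sX \\ \rvg : \sU \to \sY}} |\bar{f}(\rvx, \rvg)| \biggr)^p \Biggr] \leq \E\biggl[ \biggl( \sup_{\tilde{\vx}' \in \tilde{\sX}} |f(\tilde{\vx}')| \biggr)^p \biggr] < \infty,
\]
the finiteness coming directly from \cref{thm:app-supf-finite-moments}. I do not anticipate a genuine obstacle here: the argument is essentially mechanical, and the only point requiring a little care is to confirm that \(\omega \mapsto \sup_{\rvx, \rvg} |\bar{f}(\rvx, \rvg)|\) is a bona fide random variable, so that the outer expectation is well defined. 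This measurability is supplied by \cref{thm:app-fbar-attains-bound}, which guarantees that the supremum is attained by a measurable pair \((\rvx^*, \rvg^*)\), together with the dominating bound established above.
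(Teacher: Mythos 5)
Your proof is correct and takes essentially the same route as the paper's: you dominate \(|\bar{f}(\rvx, \rvg)| \leq \E_\rvu[|f(\rvx, \rvg(\rvu), \rvu)|] \leq \sup_{\tilde{\vx} \in \tilde{\sX}} |f(\tilde{\vx})|\) uniformly in \((\rvx, \rvg)\), raise to the \(p\)-th power, and invoke \cref{thm:app-supf-finite-moments}. The only difference is presentational -- you make the Jensen step and the measurability of the supremum (via \cref{thm:app-fbar-attains-bound}) explicit, both of which the paper leaves implicit in its one-line chain of inequalities.
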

\begin{proof}
    Let \(1 \leq p < \infty\).
    From \cref{thm:app-supf-finite-moments} we have \(\E[\sup_{\rvx, \rvy, \rvu} |f(\rvx, \rvy, \rvu)|^p] < \infty\).
    Therefore, 
    \begin{equation*}
        \E\Biggl[ \biggl( \sup_{\substack{\rvx \in \sX \\ \rvg : \sU \to \sY}} |\bar{f}(\rvx, \rvg)| \biggr)^p \Biggr]
        =
        \E\Biggl[ \biggl( \sup_{\substack{\rvx \in \sX \\ \rvg : \sU \to \sY}} \Bigl| \E_\rvu \bigl[ f(\rvx, \rvg(\rvu), \rvu) \bigr] \Bigr| \biggr)^p \Biggr]
        \leq
        \E\biggl[ \sup_{\rvx, \rvy, \rvu} |f(\rvx, \rvy, \rvu)|^p \biggr]
        < \infty.
    \end{equation*}
\end{proof}

\subsection{Convergence to optimal recommendations}
The main result in this section is to prove \cref{thm:jointkg-consistency} from the main text. That is, that the objective values of the recommendations generated using joint knowledge gradient converge to the optimal values.
The lions share of the work is done by the following proposition, which shows that the limiting covariance function \(\bar{k}^\infty\) is pointwise almost surely constant.
This is then used in \cref{thm:app-fbar-minus-mubar-const-pointwise} to show that the limiting posterior mean \(\bar{\mu}^\infty\) is pointwise almost surely equal to \(\bar{f}\) up to a constant.
The result is extended to show that the sample paths are almost surely equal up to a constant in \cref{thm:app-fbar-minus-mubar-const-simultaneous} and from this we deduce the convergence of the values of the recommendations the the supremum of \(\bar{f}\).

\begin{proposition} \label{thm:app-kbar-constant}
    There exists a random variable \(\rc_1 \in \R\) such that for all \(\vx, \vx' \in \sX\) and \(\vg, \vg': \sU \to \sY\),
    \begin{equation*}
        \mathbb{P}\biggl( \bar{k}^\infty(\vx, \vg;\, \vx', \vg') = \rc_1 \biggr) = 1.
    \end{equation*}
\end{proposition}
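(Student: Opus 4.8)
The plan is to convert the uniform decay of the joint knowledge gradient (\cref{thm:app-joint-kg-to-zero}) into the collapse of the limiting covariance by means of a two-candidate lower bound on \(\alpha_\text{jKG}^n\). Fix a candidate solution \(a = (\vx', \vg')\) with \(\vx' \in \sX\), \(\vg' : \sU \to \sY\), and a query location \(\tilde{\vx} \in \tilde{\sX}\). Since \(\bar{f}(a)\) is a Gaussian linear functional of \(f\), the pair \((\bar{f}(a), \rv_{\tilde{\vx}})\) is jointly normal given \(\mathcal{F}^n\), so the one-step update of the posterior mean is linear in a standard normal \(Z\): writing \(\tilde{\sigma}_n(\tilde{\vx})^2 = k^n(\tilde{\vx}, \tilde{\vx}) + \sigma^2\) and \(\bar{k}^n(a;\, \tilde{\vx}) = \Cov[\bar{f}(a), f(\tilde{\vx}) \mid \mathcal{F}^n]\), one has \(\bar{\mu}^{n+1}(a) = \bar{\mu}^n(a) + \beta_n(a;\, \tilde{\vx}) Z\) with \(\beta_n(a;\, \tilde{\vx}) = \bar{k}^n(a;\, \tilde{\vx}) / \tilde{\sigma}_n(\tilde{\vx})\). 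Let \(a^{*n} = (\rvx^{*n}, \rvg^{*n})\) be the current maximiser of \(\bar{\mu}^n\), which exists by \cref{thm:app-mubar-attains-bound}. Keeping only the two arms \(a^{*n}\) and \(a\) inside the outer maximum gives
\[ \alpha_\text{jKG}^n(\tilde{\vx}) \geq \E\bigl[ \max(\bar{\mu}^{n+1}(a^{*n}), \bar{\mu}^{n+1}(a)) \mid \mathcal{F}^n \bigr] - \bar{\mu}^n(a^{*n}) = g\bigl( d_n(a),\, s_n(a;\, \tilde{\vx}) \bigr), \]
where \(d_n(a) = \bar{\mu}^n(a^{*n}) - \bar{\mu}^n(a) \geq 0\), \(s_n(a;\, \tilde{\vx}) = |\beta_n(a;\, \tilde{\vx}) - \beta_n(a^{*n};\, \tilde{\vx})|\), and \(g(d, s) = \E[(-d + sZ)_+]\) is continuous and nonnegative, strictly positive whenever \(s > 0\). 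The subtracted term equals \(\bar{\mu}^n(a^{*n})\) precisely because \(a^{*n}\) is the global maximiser, which is what makes this a clean two-arm expression (and why \(a^{*n}\) must be one of the two arms).

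Next I would show \(\sup_{\tilde{\vx}} s_n(a;\, \tilde{\vx}) \to 0\) almost surely. By \cref{thm:app-joint-kg-to-zero}, \(\sup_{\tilde{\vx}} \alpha_\text{jKG}^n(\tilde{\vx}) \to 0\), hence \(\sup_{\tilde{\vx}} g(d_n(a), s_n(a;\, \tilde{\vx})) \to 0\). Since \(\bar{\mu}^n \to \bar{\mu}^\infty\) uniformly (\cref{thm:app-mubar-kbar-uniform-convergence}) and \(\bar{\mu}^\infty\) is bounded, \(d_n(a)\) is almost surely bounded uniformly in \(n\). A short real-analysis argument then forces \(\sup_{\tilde{\vx}} s_n \to 0\): if along a subsequence \(s_n(a;\, \tilde{\vx}_n) \geq \epsilon\) for some \(\epsilon > 0\), then, because \(g\) is strictly positive for \(s > 0\), continuous, and tends to infinity as \(s \to \infty\) for bounded \(d\) while \(d_n\) stays bounded, the value \(g(d_n, s_n)\) would be bounded below by a positive constant, contradicting \(g \to 0\). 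Crucially, multiplying by the uniformly bounded factor \(\tilde{\sigma}_n(\tilde{\vx})\) converts this into \(\sup_{\tilde{\vx}} |\bar{k}^n(a;\, \tilde{\vx}) - \bar{k}^n(a^{*n};\, \tilde{\vx})| \to 0\), which avoids any division by \(\tilde{\sigma}_n\) and so covers the noiseless case \(\sigma = 0\).

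The moving incumbent \(a^{*n}\) is then eliminated by a triangle inequality: for any two fixed solutions \(a = (\vx', \vg')\) and \(b = (\vx'', \vg'')\),
\[ \sup_{\tilde{\vx}} \bigl| \bar{k}^n(a;\, \tilde{\vx}) - \bar{k}^n(b;\, \tilde{\vx}) \bigr| \leq \sup_{\tilde{\vx}} \bigl| \bar{k}^n(a;\, \tilde{\vx}) - \bar{k}^n(a^{*n};\, \tilde{\vx}) \bigr| + \sup_{\tilde{\vx}} \bigl| \bar{k}^n(a^{*n};\, \tilde{\vx}) - \bar{k}^n(b;\, \tilde{\vx}) \bigr| \to 0 \]
almost surely. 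As \(\bar{k}^n \to \bar{k}^\infty\) uniformly (from \cref{thm:app-mu-k-uniform-convergence}), this yields \(\bar{k}^\infty(a;\, \tilde{\vx}) = \bar{k}^\infty(b;\, \tilde{\vx})\) for all \(\tilde{\vx}\) almost surely; that is, the cross-covariance \(\Cov[\bar{f}(a), f(\tilde{\vx}) \mid \mathcal{F}^\infty]\) does not depend on the solution. Integrating this identity over \(\rvu'\) (legitimate since \(\rvu'\) is independent of \(f\), with integrability from \cref{thm:app-supfbar-finite-expectation}) and using \(\bar{k}^\infty(a;\, b) = \E_{\rvu'}[\bar{k}^\infty(a;\, (\vx'', \vg''(\rvu'), \rvu'))]\) shows \(\bar{k}^\infty(a;\, b)\) is independent of \(a\); by symmetry of covariance it is also independent of \(b\). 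Fixing a reference solution \(a_0\) and setting \(\rc_1 = \bar{k}^\infty(a_0;\, a_0)\), a composition of finitely many such almost-sure equalities gives \(\bar{k}^\infty(a;\, b) = \rc_1\) almost surely for each fixed pair, which is the claim.

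The main obstacle is the interaction of the \emph{moving} incumbent \(a^{*n}\) with the passage from ``the knowledge gradient vanishes'' to ``covariance differences vanish''. The two-arm bound is the decisive device: choosing one arm to be \(a^{*n}\) makes the current-maximum term cancel exactly, and the implication \(g(d_n, s_n) \to 0 \Rightarrow s_n \to 0\) must be made uniform in \(\tilde{\vx}\) and robust both to \(d_n\) bounded away from zero (sub-optimal \(a\)) and to \(d_n\) near zero. Care is also needed to keep every estimate uniform in \(\tilde{\vx}\) so that the final integration over \(\rvu'\) is valid, and to work with the unnormalised quantity \(s_n \tilde{\sigma}_n\) rather than \(s_n\) itself, so that a possibly vanishing predictive variance never enters a denominator.
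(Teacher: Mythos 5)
Your proposal is correct and follows essentially the same route as the paper's proof: both compare a fixed candidate against the current incumbent \((\rvx^{*n}, \rvg^{*n})\) through the one-step posterior-mean update, note that the relevant difference is conditionally Gaussian with mean \(-d_n\) (bounded via uniform convergence of \(\bar{\mu}^n\) to the bounded limit \(\bar{\mu}^\infty\)) and scale \(s_n\), use the uniform decay of \(\alpha_\text{jKG}^n\) from \cref{thm:app-joint-kg-to-zero} to force the unnormalized covariance difference \(s_n \tilde{\sigma}_n\) to vanish uniformly in \(\tilde{\vx}\) (with the degenerate case \(\tilde{\sigma}_n = 0\) handled by Cauchy--Schwarz in both treatments), integrate over \(\rvu\), pass to the uniform limit \(\bar{k}^\infty\), and remove the incumbent to conclude constancy. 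The only differences are cosmetic: you lower-bound \(\alpha_\text{jKG}^n\) by the two-arm value \(g(d_n, s_n) = \E[(-d_n + s_n Z)_+]\) and exploit its monotonicity, where the paper applies Markov's inequality followed by the explicit Gaussian tail formula \(1 - \Phi(\cdot)\), and you eliminate the moving incumbent by a triangle inequality where the paper substitutes \((\vx', \vg') = (\vx, \vg)\) and subtracts.
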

Intuitively, we might want to show that we can take \(\rc_1 = 0\) here, but this turns out not to be necessary for the proof of \cref{thm:jointkg-consistency}.
\begin{proof}
By \cref{thm:app-mubar-attains-bound}, the posterior mean \(\bar{\mu}^n(\vx, \vg) = \E_\rvu[f(\vx, \vg(\rvu), \rvu)]\) attains its bounds, for each \(n\).
Thus, for each \(n \in \mathbb{N}_0\), let \(\rvx^{*n}, \rvg^{*n} \in \argmax_{\rvx, \rvg} \bar{\mu}^n(\rvx, \rvg)\) and for each \(\tilde{\vx} \in \tilde{\sX}\), \(\vx' \in \sX\) and \(\vg': \sU \to \sY\) let
\begin{equation*}
    \rw_{\tilde{\vx}}^n(\vx', \vg') = \E\Bigl[ \bar{f}(\vx', \vg') - \bar{f}(\rvx^{*n}, \rvg^{*n}) \,\Big|\, \mathcal{F}^n, \rv_{\tilde{\vx}} \Bigr]
\end{equation*}
where \(\rv_{\tilde{\vx}} = f(\tilde{\vx}) + \varepsilon\) is a hypothesized noisy observation at \(\tilde{\vx}\) with \(\varepsilon \sim \mathcal{N}(0, \sigma^2)\).
Thus, for each \(\tilde{\vx} \in \tilde{\sX}\), \(\alpha_\text{jKG}^n(\tilde{\vx}) = \E[\sup_{\vx', \vg'} \rw_{\tilde{\vx}}^n(\vx', \vg') \,|\, \mathcal{F}^n]\).

We will show that in the limit \(n \to \infty\), the \(\rw_{\tilde{\vx}}^n(\vx', \vg')\) are uniformly not very positive, and use this to deduce the proposition statement.

\begin{claim}
    Let \(\delta > 0\). There exists a random sequence \((\rho_\delta^n)_{n=0}^\infty\) with \(\rho_\delta^n \to 0\) almost surely as \(n \to \infty\), such that for all \(\tilde{\vx} \in \tilde{\sX}\), \(\vx' \in \sX\) and \(\vg' : \sU \to \sY\) and for all \(n \in \mathbb{N}_0\), the conditional probabilities \(\mathbb{P}(\rw_{\tilde{\vx}}^n(\vx', \vg') > \delta \,|\, \mathcal{F}^n) \leq \rho_\delta^n\) almost surely.
\end{claim}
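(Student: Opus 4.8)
The plan is to deduce the tail bound directly from the almost-sure uniform convergence of the joint knowledge gradient to zero established in \cref{thm:app-joint-kg-to-zero}, via a conditional Markov inequality. I would take \(\rho_\delta^n = \delta^{-1}\sup_{\tilde{\rvx}\in\tilde{\sX}}\alpha_\text{jKG}^n(\tilde{\rvx})\) and work from the identity \(\alpha_\text{jKG}^n(\tilde{\vx}) = \E[\sup_{\vx',\vg'}\rw_{\tilde{\vx}}^n(\vx',\vg')\mid\mathcal{F}^n]\) noted just before the claim. The first step is to record that the inner supremum is non-negative for every value of the hypothesised observation \(\rv_{\tilde{\vx}}\): the maximisers \(\rvx^{*n},\rvg^{*n}\) exist by \cref{thm:app-mubar-attains-bound}, and substituting \((\vx',\vg')=(\rvx^{*n},\rvg^{*n})\) gives \(\rw_{\tilde{\vx}}^n(\rvx^{*n},\rvg^{*n})=0\), so \(\sup_{\vx',\vg'}\rw_{\tilde{\vx}}^n(\vx',\vg')\ge 0\).

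The core is a domination step followed by Markov's inequality. For any fixed \(\tilde{\vx}\), \(\vx'\) and \(\vg'\), the positive part obeys \((\rw_{\tilde{\vx}}^n(\vx',\vg'))_+\le\sup_{\vx'',\vg''}\rw_{\tilde{\vx}}^n(\vx'',\vg'')\), because the right-hand side dominates both \(\rw_{\tilde{\vx}}^n(\vx',\vg')\) and \(0\). Taking conditional expectations and using the identity for \(\alpha_\text{jKG}^n\) gives \(\E[(\rw_{\tilde{\vx}}^n(\vx',\vg'))_+\mid\mathcal{F}^n]\le\alpha_\text{jKG}^n(\tilde{\vx})\le\sup_{\tilde{\rvx}\in\tilde{\sX}}\alpha_\text{jKG}^n(\tilde{\rvx})\). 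Since \(\{\rw_{\tilde{\vx}}^n(\vx',\vg')>\delta\}=\{(\rw_{\tilde{\vx}}^n(\vx',\vg'))_+>\delta\}\) for \(\delta>0\), the conditional Markov inequality then yields
\[
\mathbb{P}\bigl(\rw_{\tilde{\vx}}^n(\vx',\vg')>\delta\mid\mathcal{F}^n\bigr)\le\frac{1}{\delta}\,\E\bigl[(\rw_{\tilde{\vx}}^n(\vx',\vg'))_+\mid\mathcal{F}^n\bigr]\le\frac{1}{\delta}\sup_{\tilde{\rvx}\in\tilde{\sX}}\alpha_\text{jKG}^n(\tilde{\rvx})=\rho_\delta^n
\]
almost surely. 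As \(\rho_\delta^n\) involves neither \(\tilde{\vx}\), \(\vx'\) nor \(\vg'\), and \(\rho_\delta^n\to 0\) almost surely by \cref{thm:app-joint-kg-to-zero}, this is exactly the asserted bound.

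The calculation is short once the knowledge gradient is brought in; the only delicate point is the uniformity over the uncountable family of triples \((\tilde{\vx},\vx',\vg')\). The dominating sequence \(\rho_\delta^n\) is a single \(\mathcal{F}^n\)-measurable random variable not depending on the triple, so the inequality holds for each triple outside a (triple-dependent) null set while sharing the common bound \(\rho_\delta^n\); I expect this per-triple form with a uniform bound is precisely what the pointwise continuation of \cref{thm:app-kbar-constant} requires, so no simultaneous-in-triple refinement is needed. The main obstacle, such as it is, is thus conceptual rather than technical: recognising that the positive part of each \(\rw_{\tilde{\vx}}^n(\vx',\vg')\) is controlled in conditional \(L^1\) by the knowledge gradient, which converts the variance-reduction content of \cref{thm:app-joint-kg-to-zero} into a uniform tail bound.
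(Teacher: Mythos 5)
Your proposal is correct and matches the paper's own argument essentially line for line: the same choice \(\rho_\delta^n = \delta^{-1}\sup_{\tilde{\rvx} \in \tilde{\sX}} \alpha_\text{jKG}^n(\tilde{\rvx})\), the same domination of \(\max\{\rw_{\tilde{\vx}}^n(\vx', \vg'), 0\}\) by \(\sup_{\rvx'', \rvg''} \rw_{\tilde{\vx}}^n(\rvx'', \rvg'')\), and the same conditional Markov inequality combined with \cref{thm:app-joint-kg-to-zero}. The only differences are cosmetic: you write the positive part where the paper writes the maximum with zero, and you spell out (by substituting the maximiser \((\rvx^{*n}, \rvg^{*n})\)) the non-negativity of the supremum that the paper asserts with ``certainly''.
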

Let \(\tilde{\vx} \in \tilde{\sX}\), \(\vx' \in \sX\), \(\vg' : \sU \to \sY\) and \(\delta > 0\).
Certainly, \(\sup_{\rvx'', \rvg''} \rw_{\tilde{\vx}}^n(\rvx'', \rvg'') \geq 0\) and so \(\sup_{\rvx'', \rvg''} \rw_{\tilde{\vx}}^n(\rvx'', \rvg'') = \sup_{\rvx'', \rvg''} \max \{\rw_{\tilde{\vx}}^n(\rvx'', \rvg''), 0\}\) almost surely.
Therefore, by Markov's inequality,
\begin{align*}
    0 &\leq \delta\, \mathbb{P}\Bigl(\rw_{\tilde{\vx}}^n(\vx', \vg') > \delta \,\Big|\, \mathcal{F}^n \Bigr) \\
    &\leq \E\Bigl[ \max\{\rw_{\tilde{\vx}}^n(\vx', \vg'), 0\} \,\Big|\, \mathcal{F}^n \Bigr] \\
    &\leq \E\Bigl[ \sup_{\rvx'', \rvg''} \max\{\rw_{\tilde{\vx}}^n(\rvx'', \rvg''), 0\} \,\Big|\, \mathcal{F}^n \Bigr] \\
    &= \E\Bigl[ \sup_{\rvx'', \rvg''} \rw_{\tilde{\vx}}^n(\rvx'', \rvg'') \,\Big|\, \mathcal{F}^n \Bigr]
    = \alpha_\text{jKG}^n(\tilde{\vx}) \leq \sup_{\tilde{\rvx}} \alpha_\text{jKG}^n(\tilde{\rvx}) \to 0 \quad \text{a.s.}
\end{align*}
as \(n \to \infty\) by \cref{thm:app-joint-kg-to-zero}.
That is, \(\mathbb{P}(\rw_{\tilde{\vx}}^n(\vx', \vg') > \delta \,|\, \mathcal{F}^n) \leq \rho_\delta^n \to 0\)  almost surely as \(n \to \infty\), where \(\rho_\delta^n = \delta^{-1} \sup_{\tilde{\rvx}} \alpha_\text{jKG}^n(\tilde{\rvx})\). This establishes the claim.

As in the claim, let \(\tilde{\vx} \in \tilde{\sX}\), \(\vx' \in \sX\), \(\vg' : \sU \to \sY\). Also, let \(\rvu'\) be an independent copy of \(\rvu\).
Write \(\tilde{\rvx}' = (\vx', \vg'(\rvu'), \rvu')\) and \(\tilde{\rvx}^{*n} = (\vx^{*n}, \vg^{*n}(\rvu'), \rvu')\).

We will first show that \(\E_{\rvu'}[k^n(\tilde{\rvx}', \tilde{\vx})] - \E_{\rvu'}[k^n(\tilde{\rvx}^{*n}, \tilde{\vx})] \to 0\) uniformly in \(\tilde{\vx}\) as \(n \to 0\), almost surely.

If for some \(n \in \mathbb{N}\), \(k^n(\tilde{\vx}, \tilde{\vx}) = 0\), then \(k^{n'}(\tilde{\vx}, \tilde{\vx}) = 0\) for all \(n' > n\) as well.
Therefore, by Cauchy-Schwarz, \(k^{n'}(\tilde{\rvx}', \tilde{\vx}) = k^{n'}(\tilde{\rvx}^{*n'}, \tilde{\vx}) = 0\) almost surely and so \(\E_{\rvu'}[k^{n'}(\tilde{\rvx}', \tilde{\vx})] = \E_{\rvu'}[k^{n'}(\tilde{\rvx}^{*n'}, \tilde{\vx})] = 0\) for all \(n' > n\).

Otherwise assume that, for all \(n\), \(k^n(\tilde{\vx}, \tilde{\vx}) > 0\) and hence \(k^n(\tilde{\vx}, \tilde{\vx}) + \sigma^2 > 0\).
The posterior mean of \(f(\tilde{\rvx}') = f(\vx', \vg'(\rvu'), \rvu')\) conditional on \(\mathcal{F}^n\), \(\rv_{\tilde{\vx}}\) and \(\rvu'\) is \citep[Equation~2.38]{rasmussen2006gpml}
\begin{equation*}
    \E[f(\tilde{\rvx}') \,|\, \mathcal{F}^n, \rv_{\tilde{\vx}}, \rvu'] = \mu^n(\tilde{\rvx}') + k^n(\tilde{\rvx}', \tilde{\vx}) \bigl( k^n(\tilde{\vx}, \tilde{\vx}) + \sigma^2 \bigr)^{-1} \bigl(\rv_{\tilde{\vx}} - \mu^n(\tilde{\vx})\bigr).
\end{equation*}
Therefore,
\begin{align*}
    \rw_{\tilde{\vx}}^n(\vx', \vg') &= \E\Bigl[ \bar{f}(\vx', \vg') - \bar{f}(\vx^{*n}, \vg^{*n}) \,\Big|\, \mathcal{F}^n, \rv_{\tilde{\vx}} \Bigr] \\
    &= \E_{\rvu'} \left[ \mu^n(\tilde{\rvx}') - \mu^n(\tilde{\rvx}^{*n}) + \frac{k^n(\tilde{\rvx}', \tilde{\vx}) - k^n(\tilde{\rvx}^{*n}, \tilde{\vx})}{k^n(\tilde{\vx}, \tilde{\vx}) + \sigma^2} \bigl( \rv_{\tilde{\vx}} - \mu^n(\tilde{\vx}) \bigr) \right] \\
    &= \bar{\mu}^n(\vx', \vg') - \bar{\mu}^n(\rvx^{*n}, \rvg^{*n}) + \frac{\E_{\rvu'}[k^n(\tilde{\rvx}', \tilde{\vx})] - \E_{\rvu'}[k^n(\tilde{\rvx}^{*n}, \tilde{\vx})]}{k^n(\tilde{\vx}, \tilde{\vx}) + \sigma^2} \bigl( \rv_{\tilde{\vx}} - \mu^n(\tilde{\vx}) \bigr).
\end{align*}
Since \(\rv_{\tilde{\vx}} \,|\, \mathcal{F}^n \sim \mathcal{N}(\mu^n(\tilde{\vx}), k^n(\tilde{\vx}, \tilde{\vx}) + \sigma^2)\), we therefore have that \(\rw_{\tilde{\vx}}^n(\vx', \vg') \,|\, \mathcal{F}^n\) is Gaussian with
\begin{align*}
    \E[\rw_{\tilde{\vx}}^n(\vx', \vg') \,|\, \mathcal{F}^n] &= \bar{\mu}^n(\vx', \vg') - \bar{\mu}^n(\rvx^{*n}, \rvg^{*n}), \\
    \mathrm{Var}[\rw_{\tilde{\vx}}^n(\vx', \vg') \,|\, \mathcal{F}^n] &= \frac{\Bigl( \E_{\rvu'}[k^n(\tilde{\rvx}', \tilde{\vx})] - \E_{\rvu'}[k^n(\tilde{\rvx}^{*n}, \tilde{\vx})] \Bigr)^2}{k^n(\tilde{\vx}, \tilde{\vx}) + \sigma^2}.
\end{align*}
Thus, writing \(\Phi(\cdot)\) for the cumulative density function of a standard normal variable, and letting \(\delta > 0\), either \(\E_{\rvu'}[k^n(\tilde{\rvx}', \tilde{\vx})] - \E_{\rvu'}[k^n(\tilde{\rvx}^{*n}, \tilde{\vx})] = 0\) or
\begin{equation*}
    \mathbb{P}(\rw_{\tilde{\vx}}^n(\vx', \vg') > \delta \,|\, \mathcal{F}^n) = 1 - \Phi\left(
        \frac{\sqrt{k^n(\tilde{\vx}, \tilde{\vx}) + \sigma^2}}{\bigl|\E_{\rvu'}[k^n(\tilde{\rvx}', \tilde{\vx})] - \E_{\rvu'}[k^n(\tilde{\rvx}^{*n}, \tilde{\vx})]\bigr|} \Bigl[ 
\delta - \bigl(\bar{\mu}^n(\vx', \vg') - \bar{\mu}^n(\rvx^{*n}, \rvg^{*n})\bigr) \Bigr]
    \right).
\end{equation*}
By \cref{thm:app-mubar-kbar-uniform-convergence}, \(\bar{\mu}^n \to \bar{\mu}^\infty\) uniformly, almost surely as \(n \to \infty\).
Further, the same argument as in \cref{thm:app-mubar-attains-bound} shows that \(\bar{\mu}^\infty\) is bounded and attains its bounds.
Therefore, there exists a negative random variable \(\rb\) (not depending on \(n\)) such that for sufficiently large \(n\), \(\bar{\mu}^n(\vx', \vg') - \bar{\mu}^n(\rvx^{*n}, \rvg^{*n}) > \rb\).
For example, take any \(\rb < \min \bar{\mu}^\infty - \max \bar{\mu}^\infty\).
Then, almost surely,
\begin{equation*}
    \mathbb{P}(\rw_{\tilde{\vx}}^n(\vx', \vg') > \delta \,|\, \mathcal{F}^n) \geq
    1 - \Phi\left(
        \frac{\sqrt{k^n(\tilde{\vx}, \tilde{\vx}) + \sigma^2}}{\bigl|\E_{\rvu'}[k^n(\tilde{\rvx}', \tilde{\vx})] - \E_{\rvu'}[k^n(\tilde{\rvx}^{*n}, \tilde{\vx})]\bigr|} (\delta - \rb)
    \right).
\end{equation*}
In the claim, we showed that the left hand side here converges to zero uniformly in \(\tilde{\vx}\) almost surely as \(n \to \infty\).
Therefore, so must the right hand side and we must have \(\E_{\rvu'}[k^n(\tilde{\rvx}', \tilde{\vx})] - \E_{\rvu'}[k^n(\tilde{\rvx}^{*n}, \tilde{\vx})] \to 0\) uniformly in \(\tilde{\vx}\), almost surely.

Thus, in all cases, we have shown \(\E_{\rvu'}[k^n(\tilde{\rvx}', \tilde{\vx})] - \E_{\rvu'}[k^n(\tilde{\rvx}^{*n}, \tilde{\vx})] \to 0\) uniformly in \(\tilde{\vx}\), almost surely.
Let \(\vg: \sU \to \sY\). Since convergence is uniform in \(\tilde{\vx}\), we may integrate over \(\rvu\) in \(\tilde{\rvx} = (\vx, \vg(\rvu), \rvu)\) to give
\begin{equation*}
    \bar{k}^n(\vx', \vg';\, \vx, \vg) - \bar{k}^n(\rvx^{*n}, \rvg^{*n};\, \vx, \vg)
    = \E_{\rvu, \rvu'}[k^n(\tilde{\rvx}', \tilde{\rvx}) - k^n(\tilde{\rvx}^{*n}, \tilde{\rvx})]
    \to 0
\end{equation*}
almost surely as \(n \to \infty\).
Further, from \cref{thm:app-mubar-kbar-uniform-convergence}, \(\bar{k}^n \to \bar{k}^\infty\) uniformly, almost surely, so
\begin{equation*}
    \bar{k}^\infty(\vx', \vg';\, \vx, \vg) - \bar{k}^\infty(\rvx^{*n}, \rvg^{*n};\, \vx, \vg) \to 0 \qquad \text{a.s.}
\end{equation*}
This holds for any \(\vx', \vg'\), including \(\vx' = \vx\) and \(\vg' = \vg\), so
\begin{multline*}
    \bar{k}^\infty(\vx', \vg';\, \vx, \vg) - \bar{k}^\infty(\vx, \vg;\, \vx, \vg) = \\
    \Bigl( \bar{k}^\infty(\vx', \vg';\, \vx, \vg) - \bar{k}^\infty(\rvx^{*n}, \rvg^{*n};\, \vx, \vg) \Bigr) - \Bigl( \bar{k}^\infty(\vx, \vg;\, \vx, \vg) - \bar{k}^\infty(\rvx^{*n}, \rvg^{*n};\, \vx, \vg) \Bigr)
    \to 0 \quad \text{a.s.}
\end{multline*}
as \(n \to \infty\).
The left hand side is independent of \(n\), so therefore must equal zero almost surely.
Thus, by symmetry of \(\bar{k}^\infty\), we conclude that for any \(\vx_A, \vx_A', \vx_B, \vx_B' \in \sX\) and \(\vg_A, \vg_A', \vg_B, \vg_B' : \sU \to \sY\),
\begin{equation*}
    \bar{k}^\infty(\vx_A, \vg_A;\, \vx_A', \vg_A') = \bar{k}^\infty(\vx_B, \vg_B;\, \vx_B', \vg_B')
\end{equation*}
almost surely.
Taking \(\rc_1 = \bar{k}^\infty(\vx_A, \vg_A;\, \vx_A', \vg_A')\) for any choice of \(\vx_A, \vg_A, \vx_A', \vg_A'\) completes the proof.
\end{proof}

\begin{proposition} \label{thm:app-fbar-minus-mubar-const-pointwise}
    There exists a random variable \(\rc_2\) such that for all \(\vx \in \sX\) and all \(\vg : \sU \to \sY\), we have
    \begin{equation*}
        \mathbb{P}\biggl(\bar{f}(\vx, \vg) = \bar{\mu}^\infty(\vx, \vg) + \rc_2 \biggr) = 1.
    \end{equation*}
\end{proposition}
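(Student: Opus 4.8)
The plan is to leverage \cref{thm:app-kbar-constant}, which states that the limiting conditional covariance $\bar{k}^\infty$ is almost surely equal to a single random constant $\rc_1$, independent of its four arguments. The guiding idea is that a zero conditional variance forces a random variable to coincide with its conditional mean: if $\bar{f}(\vx,\vg) - \bar{f}(\vx',\vg')$ has zero conditional variance given $\mathcal{F}^\infty$, then it equals its conditional mean $\bar{\mu}^\infty(\vx,\vg) - \bar{\mu}^\infty(\vx',\vg')$ almost surely. Rearranging shows that $\bar{f} - \bar{\mu}^\infty$ takes the same value at $(\vx,\vg)$ and at $(\vx',\vg')$, and fixing a reference point then extracts the constant $\rc_2$.

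Concretely, I would first fix two pairs $(\vx,\vg)$ and $(\vx',\vg')$ and compute the conditional variance of their difference. Since $\bar{k}^\infty$ is by definition the conditional covariance of $\bar{f}$ given $\mathcal{F}^\infty$, bilinearity gives
\[
\Var\bigl(\bar{f}(\vx,\vg) - \bar{f}(\vx',\vg') \,\big|\, \mathcal{F}^\infty\bigr)
= \bar{k}^\infty(\vx,\vg;\vx,\vg) - 2\,\bar{k}^\infty(\vx,\vg;\vx',\vg') + \bar{k}^\infty(\vx',\vg';\vx',\vg').
\]
Applying \cref{thm:app-kbar-constant} to each of the three covariance terms and intersecting the three almost-sure events shows that this expression equals $\rc_1 - 2\rc_1 + \rc_1 = 0$ almost surely.

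Next I would convert zero conditional variance into an almost-sure identity. By \cref{thm:app-supfbar-finite-expectation} (with $p=2$), $\bar{f}(\vx,\vg)$ and $\bar{f}(\vx',\vg')$ are square-integrable, so the random variable $D = \bigl(\bar{f}(\vx,\vg) - \bar{\mu}^\infty(\vx,\vg)\bigr) - \bigl(\bar{f}(\vx',\vg') - \bar{\mu}^\infty(\vx',\vg')\bigr)$ has conditional mean zero by the definition of $\bar{\mu}^\infty$, and conditional variance zero by the previous step. Hence $\E[D^2] = \E\bigl[\Var(D \mid \mathcal{F}^\infty)\bigr] = 0$, so $D = 0$ almost surely, i.e.
\[
\bar{f}(\vx,\vg) - \bar{\mu}^\infty(\vx,\vg) = \bar{f}(\vx',\vg') - \bar{\mu}^\infty(\vx',\vg') \qquad \text{a.s.}
\]
Fixing an arbitrary reference pair $(\vx_0,\vg_0)$ and setting $\rc_2 = \bar{f}(\vx_0,\vg_0) - \bar{\mu}^\infty(\vx_0,\vg_0)$, the displayed identity with $(\vx',\vg') = (\vx_0,\vg_0)$ yields the claimed pointwise statement for every $(\vx,\vg)$.

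The one delicate point is precisely the passage from zero conditional variance to the almost-sure equality, which requires $\bar{f}$ to be square-integrable so that the conditional variance is genuinely the $L^2$ distance to the conditional mean; this is exactly what \cref{thm:app-supfbar-finite-expectation} supplies. Notably, no Gaussianity of $\bar{f}$ is needed at this stage, since all the Gaussian structure has already been distilled into the constancy of $\bar{k}^\infty$. I would also flag that the conclusion here is only pointwise, with a separate null set permitted for each $(\vx,\vg)$ — this is all that is claimed, and upgrading to a statement holding simultaneously along sample paths is handled separately in \cref{thm:app-fbar-minus-mubar-const-simultaneous}.
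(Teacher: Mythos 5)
Your proposal is correct and follows essentially the same route as the paper: both reduce the claim to the vanishing of the conditional variance of \(\rz = \bigl(\bar{f}(\vx, \vg) - \bar{\mu}^\infty(\vx, \vg)\bigr) - \bigl(\bar{f}(\vx', \vg') - \bar{\mu}^\infty(\vx', \vg')\bigr)\) via the constancy of \(\bar{k}^\infty\) from \cref{thm:app-kbar-constant}, combine this with the zero conditional mean through the law of total variance to conclude \(\rz = 0\) almost surely, and then fix a reference pair to extract \(\rc_2\). Your explicit appeal to \cref{thm:app-supfbar-finite-expectation} with \(p=2\) to guarantee square-integrability is a sound refinement the paper leaves implicit, but it does not change the argument.
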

\begin{proof}
    Let \(\vx, \vx' \in \sX\) and \(\vg, \vg': \sU \to \sY\).
    Then
    \begin{multline*}
        \mathrm{Var}\Bigl[\bigl(\bar{f}(\vx, \vg) - \bar{\mu}^\infty(\vx, \vg)\bigr) - \bigl(\bar{f}(\vx', \vg') - \bar{\mu}^\infty(\vx', \vg')\bigr) \,\Big|\, \mathcal{F}^\infty \Bigr] \\
        = \underbrace{k^\infty(\vx, \vg;\, \vx, \vg) - k^\infty(\vx, \vg;\, \vx', \vg')}_{0 \text{ a.s.}}
         + \underbrace{k^\infty(\vx', \vg';\, \vx', \vg') - k^\infty(\vx', \vg';\, \vx, \vg)}_{0 \text{ a.s.}}
        = 0
    \end{multline*}
    almost surely by \cref{thm:app-kbar-constant}.
    Also, \(\E[(\bar{f}(\vx, \vg) - \bar{\mu}^\infty(\vx, \vg)) - (\bar{f}(\vx', \vg') - \bar{\mu}^\infty(\vx', \vg')) \,|\, \mathcal{F}^\infty] = 0\) almost surely.
    Write \(\rz = (\bar{f}(\vx, \vg) - \bar{\mu}^\infty(\vx, \vg)) - (\bar{f}(\vx', \vg') - \bar{\mu}^\infty(\vx', \vg'))\).
    The law of total variance gives
    \begin{equation*}
        \mathrm{Var}[\rz] = \E\bigl[\mathrm{Var}[\rz | \mathcal{F}^\infty] \bigr] + \mathrm{Var}\bigl[\E[ \rz \,|\, \mathcal{F}^\infty]\bigr] = 0.
    \end{equation*}
    Therefore, \(\rz\) is a random variable with zero mean and zero variance, and so is almost surely zero.
    That is, \(\bar{f}(\vx, \vg) - \bar{\mu}^\infty(\vx, \vg) = \bar{f}(\vx', \vg') - \bar{\mu}^\infty(\vx', \vg')\) almost surely.
    Taking \(\rc_2 = \bar{f}(\vx, \vg) - \bar{\mu}^\infty(\vx, \vg)\) completes the proof.
\end{proof}

Having established that \(\bar{f}\) and \(\bar{\mu}^\infty\) are pointwise almost surely equal up to a constant, we now extend this to show that this holds for all points \emph{simultaneously}. That is, that their sample paths are almost surely equal up to a constant.
\begin{proposition} \label{thm:app-fbar-minus-mubar-const-simultaneous}
    There exists a random variable \(c_2\) such that
    \begin{equation*}
        \mathbb{P}\biggl( \forall \vx \in \sX\; \forall \vg: \sU \to \sY \text{ measurable} \qquad \bar{f}(\vx, \vg) = \bar{\mu}^\infty(\vx, \vg) + \rc_2 \biggr) = 1.
    \end{equation*}
\end{proposition}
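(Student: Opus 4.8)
The plan is to deduce the simultaneous statement from the pointwise \cref{thm:app-fbar-minus-mubar-const-pointwise} by a separability argument, after rewriting the difference \(\bar{f} - \bar{\mu}^\infty\) as a single integral of a continuous, bounded function. I would set \(D = f - \mu^\infty\), which has continuous sample paths by \cref{thm:app-mu-k-uniform-convergence} and is therefore bounded on the compact set \(\tilde{\sX}\), say \(\|D\|_\infty < \infty\) almost surely. The Fubini-type lemma at the start of this appendix applies verbatim to \(\mu^\infty = \E[f \mid \mathcal{F}^\infty]\) as well as to \(f\), since the same integrability bound \(\E[\sup_{\tilde{\vx}}|f|] < \infty\) is used, so for every \(\vx\) and every measurable \(\vg\),
\begin{equation*}
    \bar{f}(\vx, \vg) - \bar{\mu}^\infty(\vx, \vg) = \E_\rvu\bigl[ D(\vx, \vg(\rvu), \rvu) \bigr] =: \bar{\phi}(\vx, \vg).
\end{equation*}
With \(\rc_2\) the single random constant furnished by \cref{thm:app-fbar-minus-mubar-const-pointwise}, the goal becomes to show that, almost surely, \(\bar{\phi}(\vx, \vg) = \rc_2\) for all \((\vx, \vg)\) at once. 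The obstruction is that \cref{thm:app-fbar-minus-mubar-const-pointwise} supplies, for each fixed \((\vx, \vg)\), only a null exceptional set, and the family of admissible policies is uncountable.

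The first key step is a continuity property of \(\bar{\phi}\). Suppose \(\vx_i \to \vx\) in \(\sX\) and \(\vg_i \to \vg\) almost everywhere with respect to the law of \(\rvu\). Then for a.e.\ \(\vu\) the arguments \((\vx_i, \vg_i(\vu), \vu) \to (\vx, \vg(\vu), \vu)\), so \(D(\vx_i, \vg_i(\vu), \vu) \to D(\vx, \vg(\vu), \vu)\) by continuity of \(D\); since the integrand is dominated by \(\|D\|_\infty\), dominated convergence gives \(\bar{\phi}(\vx_i, \vg_i) \to \bar{\phi}(\vx, \vg)\). Convergence of the \(\vg_i\) in \(\rvu\)-measure suffices here, since it admits an almost-everywhere convergent subsequence.

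The second step is to exhibit a countable determining family. I would take \(\{\vx_i\}_i\) countable and dense in \(\sX\), and for the policies take the countable family \(\mathcal{G}_0 = \{\vg_j\}_j\) of simple maps \(\vg = \sum_k \vy_{m_k}\, \mathbf{1}_{A_k}\), where the \(A_k\) range over finite partitions of \(\sU\) drawn from the countable algebra of dyadic boxes and the values \(\vy_{m_k}\) range over a countable dense subset of \(\sY\). Standard approximation of measurable functions by simple functions shows that every measurable \(\vg : \sU \to \sY\) is a limit in \(\rvu\)-measure of a sequence from \(\mathcal{G}_0\), hence admits a sequence from \(\mathcal{G}_0\) converging almost everywhere. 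Applying \cref{thm:app-fbar-minus-mubar-const-pointwise} to each of the countably many pairs \((\vx_i, \vg_j)\) and intersecting the corresponding null sets yields an event \(\Omega_0\) of probability one on which \(\bar{\phi}(\vx_i, \vg_j) = \rc_2\) for all \(i\) and all \(\vg_j \in \mathcal{G}_0\).

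Finally, on \(\Omega_0\) I would fix any \(\vx \in \sX\) and any measurable \(\vg : \sU \to \sY\), choose \(\vx_i \to \vx\) from \(\{\vx_i\}\) and \(\vg_i \in \mathcal{G}_0\) with \(\vg_i \to \vg\) almost everywhere, and conclude from the continuity property that \(\bar{\phi}(\vx, \vg) = \lim_i \bar{\phi}(\vx_i, \vg_i) = \rc_2\). Since \((\vx, \vg)\) is arbitrary, the equality holds simultaneously for all pairs on \(\Omega_0\), completing the proof. I expect the main obstacle to be the second step: verifying that the space of measurable control policies is separable for convergence in \(\rvu\)-measure, so that a single countable family \(\mathcal{G}_0\) independent of \(\vg\) determines \(\bar{\phi}\) everywhere. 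The interchange of limit and expectation in the first step, and the extension of the Fubini lemma to \(\mu^\infty\), are routine given the boundedness of \(D\) on the compact domain.
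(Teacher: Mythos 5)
Your proof is correct, and it follows the same overall architecture as the paper's: establish the identity on a countable determining family via \cref{thm:app-fbar-minus-mubar-const-pointwise} and countable subadditivity, then extend to arbitrary pairs \((\vx, \vg)\) by approximation plus dominated convergence over \(\rvu\) (the paper likewise uses \(\sX \cap \mathbb{Q}^{d_x}\), rational-valued simple policies, continuity of \(f - \mu^\infty\), and the conditional dominated convergence theorem with \(\max|f - \mu^\infty|\) as the dominating variable). The one genuine difference is in how the countable family of policies is constructed, and here your version is actually the more careful one. The paper takes ``the set of simple functions \(\vg : \sU \to \sY \cap \mathbb{Q}^{d_y}\)'' and asserts it is countable; as literally stated this is false, because the level sets of a simple function range over the full Borel sigma-algebra of \(\sU\), which is uncountable, and the Klenke-style approximants \(\vg^i\) of an arbitrary measurable \(\vg\) have level sets of the form \(\vg^{-1}(\cdot)\) that depend on \(\vg\) and so need not lie in any fixed countable collection. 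You repair exactly this point by restricting the partition sets to the countable algebra generated by dyadic boxes, which makes \(\mathcal{G}_0\) genuinely countable, at the price of weakening pointwise approximation to approximation in \(\rvu\)-measure; you then correctly recover almost-everywhere convergence along a subsequence, which is all the dominated convergence step needs. The trade is thus: the paper gets pointwise convergence of the approximants for free but leaves the countability of its family unjustified, while you verify separability of the space of policies under convergence in measure (routine for a Borel probability space with countably generated sigma-algebra and compact, hence bounded, \(\sY\)) and obtain a fully airtight intersection of null sets. Your extension of the Fubini lemma to \(\mu^\infty = \E[f \mid \mathcal{F}^\infty]\) is also needed and is justified by the same integrability bound used for finite \(n\), as you say. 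In short: same strategy, but your handling of the countable family closes a small lacuna in the paper's own write-up.
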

\begin{proof}
Our strategy for this proof will be to first extend the \cref{thm:app-fbar-minus-mubar-const-pointwise} to hold for countably many pairs \((\vx, \vg)\) simultaneously.
For any pair \((\vx, \vg)\), we will construct a sequence \((\vx^i, \vg^i) \to (\vx, \vg)\) as \(i \to \infty\) but only taking values in a countable set not depending on \((\vx, \vg)\).
We will then use continuity of \(f\) and \(\mu^\infty\) to conclude that the result must hold for all pairs \((\vx, \vg)\) simultaneously, which is the theorem statement.

The intersection of countably many events with probability 1 has probability 1 by countable subadditivity of measures.
Applying this to \cref{thm:app-fbar-minus-mubar-const-pointwise} with the Cartesian product of \(\sX \cap \mathbb{Q}^{d_x}\) and the set of simple functions \(\vg : \sU \to \sY \cap \mathbb{Q}^{d_y}\) (which is countable), we obtain
\begin{equation} \label{eq:app-fbar-minus-mubar-const-simultaneous-countable}
    \mathbb{P}\biggl(
        \forall \vx \in \sX \cap \mathbb{Q}^{d_x}\;
        \forall \vg : \sU \to \sY \cap \mathbb{Q}^{d_y} \text{ simple}
        \qquad
        \bar{f}(\vx, \vg) = \bar{\mu}^\infty(\vx, \vg) + \rc_2
    \biggr) = 1.
\end{equation}
Write \(A\) for this event.

Let \(\vx \in \sX\) and let \(\vg : \sU \to \sY\) be measurable.

Since \(\sX \cap \mathbb{Q}^{d_x}\) is dense in \(\sX\), we can take a sequence \(\vx^1, \vx^2, \dots \in \sX \cap \mathbb{Q}^{d_x}\) with \(\vx^i \to \vx\) as \(i \to \infty\)
Furthermore, since \(\vg\) is measurable, we can write \(\vg(\vu) = \lim_{i \to \infty} \vg^i(\vu)\) as a pointwise limit of simple functions, \((\vg^i)_{i=1}^ \infty\), by applying \citep[Theorem 1.96]{klenke2020probability} to the positive and negative parts of each component of \(\vg\).
Since \(\sY \cap \mathbb{Q}^{d_y}\) is dense in \(\sY\), we can assume the \(\vg^1, \vg^2, \dots\) take values in \(\sY \cap \mathbb{Q}^{d_y}\).
The sample paths of \(f - \mu^\infty\) are continuous, and therefore for any \(\vu \in \sU\),
\begin{equation*}
    f(\vx^i, \vg^i(\vu), \vu) - \mu^\infty(\vx^i, \vg^i(\vu), \vu) \to f(\vx, \vg(\vu), \vu) - \mu^\infty(\vx, \vg(\vu), \vu)
\end{equation*}
almost surely as \(i \to \infty\).
The expectation \(\E_\rvu\) is formally an expectation conditional on the sigma algebra \(\sigma(f, \mathcal{F}^\infty)\) (i.e. on everything except \(\rvu\)).
Therefore, the Dominated Convergence Theorem for conditional expectation \citep[Theorem 8.14 (viii)]{klenke2020probability} with \(\max |f - \mu^\infty|\) as a (constant) dominating function, the expectation over \(\rvu\) also converges almost surely
\begin{equation*}
    \underbrace{\E_\rvu\Bigl[ f(\vx^i, \vg^i(\vu), \vu) - \mu^\infty(\vx^i, \vg^i(\vu), \vu) \Bigr]}_{\bar{f}(\vx^i, \vg^i) - \bar{\mu}^\infty(\vx, \vg^i)} \to \underbrace{\E_\rvu\Bigl[ f(\vx, \vg(\vu), \vu) - \mu^\infty(\vx, \vg(\vu), \vu) \Bigr]}_{\bar{f}(\vx, \vg) - \bar{\mu}^\infty(\vx, \vg)}.
\end{equation*}
On the event \(A\) considered in \cref{eq:app-fbar-minus-mubar-const-simultaneous-countable}, each term in the sequence on the left hand side is equal to \(\rc_2\), and therefore the limit on the right must equal \(\rc_2\) as well.
Whence, on \(A\), \(\bar{f}(\vx, \vg) - \bar{\mu}^\infty(\vx, \vg) = \rc_2\).
Our choice of \(\vx\) and \(\vg\) was arbitrary, and so
\begin{equation*}
    \mathbb{P}\biggl(
        \forall \vx \in \sX\;
        \forall \vg : \sU \to \sY \text{ measurable}
        \qquad
        \bar{f}(\vx, \vg) = \bar{\mu}^\infty(\vx, \vg) + \rc_2
    \biggr) = 1.
\end{equation*}
\end{proof}

We can now prove \cref{thm:jointkg-consistency} from the main text, which we restate here for convenience.
\begin{theorem}[Consistency of Joint KG] \label{thm:app-jointkg-consistent}
    When using joint knowledge gradient as an acquisition function, the objective values associated with the recommendations converge to the optimal value both almost surely and in mean.
    That is,
    \begin{equation*}
        \bar{f}(\rvx^{*n}, \rvg^{*n}) \to \sup_{\substack{\rvx \in \sX \\ \rvg : \sU \to \sY}} \bar{f}(\rvx, \rvg) \qquad\text{as}\qquad n \to \infty
    \end{equation*}
    almost surely and in mean.
\end{theorem}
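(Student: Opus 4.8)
The plan is to combine the simultaneous equality $\bar{f} = \bar{\mu}^\infty + \rc_2$ from \cref{thm:app-fbar-minus-mubar-const-simultaneous} with the uniform convergence $\bar{\mu}^n \to \bar{\mu}^\infty$ from \cref{thm:app-mubar-kbar-uniform-convergence}. The key observation is that since $\bar{f}$ and $\bar{\mu}^\infty$ differ by a single random constant across the \emph{entire} domain simultaneously, a maximiser of $\bar{\mu}^\infty$ is automatically a maximiser of $\bar{f}$, and the two suprema agree up to that constant. It therefore suffices to show that the values $\bar{\mu}^\infty(\rvx^{*n}, \rvg^{*n})$ of the recommendations, measured under the limiting mean $\bar{\mu}^\infty$, converge up to $\sup \bar{\mu}^\infty$.

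First I would establish the almost sure convergence. Recall the recommendations satisfy $\rvx^{*n}, \rvg^{*n} \in \argmax \bar{\mu}^n$, which is well defined with $\rvg^{*n}$ Borel-measurable by \cref{thm:app-mubar-attains-bound}. Using the elementary bound $|\sup g - \sup h| \leq \sup|g - h|$ together with \cref{thm:app-mubar-kbar-uniform-convergence}, the maxima converge, so that $\bar{\mu}^n(\rvx^{*n}, \rvg^{*n}) = \max \bar{\mu}^n \to \sup \bar{\mu}^\infty$ almost surely. A second application of uniform convergence gives $|\bar{\mu}^\infty(\rvx^{*n}, \rvg^{*n}) - \bar{\mu}^n(\rvx^{*n}, \rvg^{*n})| \leq \sup|\bar{\mu}^\infty - \bar{\mu}^n| \to 0$, whence $\bar{\mu}^\infty(\rvx^{*n}, \rvg^{*n}) \to \sup \bar{\mu}^\infty$ almost surely. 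Finally, evaluating the simultaneous identity of \cref{thm:app-fbar-minus-mubar-const-simultaneous} at the measurable random pair $(\rvx^{*n}, \rvg^{*n})$ gives $\bar{f}(\rvx^{*n}, \rvg^{*n}) = \bar{\mu}^\infty(\rvx^{*n}, \rvg^{*n}) + \rc_2$, and taking the same identity to the supremum yields $\sup \bar{f} = \sup \bar{\mu}^\infty + \rc_2$. Combining these, $\bar{f}(\rvx^{*n}, \rvg^{*n}) \to \sup \bar{f}$ almost surely.

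Convergence in mean then follows by dominated convergence. The difference is uniformly dominated, $|\bar{f}(\rvx^{*n}, \rvg^{*n}) - \sup \bar{f}| \leq 2 \sup_{\vx, \vg}|\bar{f}(\vx, \vg)|$, and the dominating random variable is integrable by \cref{thm:app-supfbar-finite-expectation} with $p = 1$. Since the difference has already been shown to tend to zero almost surely, the Dominated Convergence Theorem gives convergence in $L^1$, i.e. in mean.

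The main subtlety, and the reason the preceding propositions were arranged as they were, is that the recommendation $(\rvx^{*n}, \rvg^{*n})$ is itself a random element of $\sX \times \{\sU \to \sY\}$ depending on the data. A merely pointwise almost sure identity $\bar{f}(\vx, \vg) = \bar{\mu}^\infty(\vx, \vg) + \rc_2$ for each fixed $(\vx, \vg)$ would not suffice, since the null set could depend on the argument while the recommendations range over uncountably many pairs. It is precisely the simultaneous (all points at once) statement of \cref{thm:app-fbar-minus-mubar-const-simultaneous} that licenses evaluating the identity at the random recommendations; with that in hand, the argument above is a short assembly of the established ingredients rather than a fresh obstacle.
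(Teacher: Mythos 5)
Your proposal is correct and is essentially the paper's own proof: both arguments rest on the simultaneous identity of \cref{thm:app-fbar-minus-mubar-const-simultaneous} evaluated at the random, measurable recommendations (whose measurability comes from \cref{thm:app-mubar-attains-bound}), the uniform convergence \(\bar{\mu}^n \to \bar{\mu}^\infty\) of \cref{thm:app-mubar-kbar-uniform-convergence}, and the Dominated Convergence Theorem with dominating variable \(\sup|\bar{f}|\), integrable by \cref{thm:app-supfbar-finite-expectation}. The only difference is a cosmetic rearrangement: the paper bounds \(\limsup_n \bigl[\bar{f}(\rvx^*, \rvg^*) - \bar{f}(\rvx^{*n}, \rvg^{*n})\bigr]\) via a three-term split whose final term is \(\leq 0\) by optimality of the recommendations under \(\bar{\mu}^n\), whereas you show \(\bar{\mu}^\infty(\rvx^{*n}, \rvg^{*n}) \to \sup \bar{\mu}^\infty\) directly and transfer by the constant \(\rc_2\) --- the same estimates assembled in a different order.
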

\begin{proof}
    By \cref{thm:app-fbar-attains-bound}, \(\bar{f}\) is bounded and attains its bounds, so let \(\rvx^*, \rvg^* \in \argmax_{\rvx, \rvg} \bar{f}(\rvx, \rvg)\) be random elements maximizing \(\bar{f}\).
    For any \(n\), \(\bar{f}(\rvx^{*n}, \rvg^{*n}) \leq \sup_{\rvx, \rvg} \bar{f}(\rvx, \rvg)\) almost surely.
    Therefore, it suffices to prove that \(\limsup_{n \to \infty} \bar{f}(\rvx^*, \rvg^*) - \bar{f}(\rvx^{*n}, \rvg^{*n}) = 0\).
    
    Splitting this limit, we have
    \begin{align*}
        \limsup_{n \to \infty} &\; \bar{f}(\rvx^*, \rvg^*) - \bar{f}(\rvx^{*n}, \rvg^{*n}) \\
        \leq & \limsup_{n \to \infty}
              \Bigl( \bar{f}(\rvx^*, \rvg^*) - \bar{\mu}^\infty(\rvx^*, \rvg^*) \Bigr)
            - \Bigl( \bar{f}(\rvx^{*n}, \rvg^{*n}) - \bar{\mu}^\infty(\rvx^{*n}, \rvg^{*n}) \Bigr) \\
        + & \limsup_{n \to \infty}
              \Bigl( \bar{\mu}^\infty(\rvx^*, \rvg^*) - \bar{\mu}^n(\rvx^*, \rvg^*) \Bigr)
            - \Bigl( \bar{\mu}^\infty(\rvx^{*n}, \rvg^{*n}) - \bar{\mu}^n(\rvx^{*n}, \rvg^{*n}) \Bigr) \\
        + & \limsup_{n \to \infty}
              \bar{\mu}^n(\rvx^*, \rvg^*) - \bar{\mu}^n(\rvx^{*n}, \rvg^{*n}).
    \end{align*}
    By \cref{thm:app-fbar-attains-bound,thm:app-mubar-attains-bound}, \(\rvg^*\) and the \(\rvg^{*n}\) have measurable sample paths, and so by \cref{thm:app-fbar-minus-mubar-const-simultaneous}, for each \(n\),
    \begin{equation*}
        \Bigl( \bar{f}(\rvx^*, \rvg^*) - \bar{\mu}^\infty(\rvx^*, \rvg^*) \Bigr)
            - \Bigl( \bar{f}(\rvx^{*n}, \rvg^{*n}) - \bar{\mu}^\infty(\rvx^{*n}, \rvg^{*n}) \Bigr)
        = \rc_2 - \rc_2 = 0
    \end{equation*}
    almost surely.
    Then by countable subadditivity of measures, this holds for all \(n\) simultaneously, and so the \(\limsup\) in the first line is almost surely zero.
    
    Since \(\bar{\mu}^n \to \bar{\mu}^\infty\) uniformly, almost surely, as \(n \to \infty\) (\cref{thm:app-mubar-kbar-uniform-convergence}), the second line is also almost surely zero.
    Finally, the third line line is at most zero by optimality of \(\rvx^{*n}, \rvg^{*n}\).
    Therefore,
    \begin{equation*}
        \limsup_{n \to \infty} \bar{f}(\rvx^*, \rvg^*) - \bar{f}(\rvx^{*n}, \rvg^{*n}) \leq 0
    \end{equation*}
    and since \(\bar{f}(\rvx^*, \rvg^*) - \bar{f}(\rvx^{*n}, \rvg^{*n}) \geq 0\) almost surely as well, we conclude that \(\bar{f}(\rvx^{*n}, \rvg^{*n}) \to \bar{f}(\rvx^*, \rvg^*)\) almost surely as \(n \to \infty\).

    To establish convergence in mean, we appeal to the Dominated Convergence Theorem with dominating variable \(\sup_{\rvx, \rvg}|\bar{f}(\rvx, \rvg)|\). Indeed, this has finite expectation by \cref{thm:app-supfbar-finite-expectation}.
\end{proof}

\section{Caching of the covariances} \label{app:caching}
Computation of the discrete knowledge gradient naively requires recomputing many covariances.
It is common in inference with Gaussian processes to cache the covariances between the training data to improve performance.
Performance can be further improved by caching the cross covariances between the training data and the discretization used to compute the discrete knowledge gradient, which is not done automatically by libraries such as GPyTorch \citep{gardner2018gpytorch}.

Recall the formula for the discrete knowledge gradient approximation applicable to standard (not two-stage) optimization problem \(h : \sX \to \R\) from \cref{eq:bg-kg-discrete} was given in the background section as
\begin{equation}
    \hat{\alpha}_\text{KG}^n(\vx)
    = \frac{1}{N_v} \sum_{i=1}^{N_v} \max_{\vx' \in \sXd} \mu^{n+1}(\vx' ; \vx, \tilde{v}^{n+1,i}) - \max_{\vx' \in \sXd} \mu^n(\vx').
\end{equation}
Here, \(\tilde{v}^{n+1, 1}, \dots, \tilde{v}^{n+1, N_v}\) are independent samples of the next observation \(\rv^{n+1} = f(\vx) + \varepsilon^{n+1}\), conditional on the the \(n\) observations made so far.
For any \(\vx' \in \sXd\) in the discretization, the posterior mean is computed using \citep[Equation~2.38]{rasmussen2006gpml} as
\begin{equation}
    \mu^{n+1}(\vx';\, \vx, \tilde{v}^{n+1,i}) = \mu^n(\vx') + k^n(\vx', \vx) \bigl(k^n(\vx, \vx) + \sigma^2 \bigr)^{-1} (\tilde{v}^{n+1, i} - \mu^n(\vx)).
\end{equation}
As the acquisition function is optimized, \(\hat{\alpha}_\text{KG}^n(\vx)\) and its gradient are evaluated at many different \(\vx \in \sX\), and there are three potentially costly computations involved: \(\mu^n(\vx')\), \(k^n(\vx', \vx)\) and \(k^n(\vx, \vx)\).

From Equation~2.24 in \citep{rasmussen2006gpml}
\begin{equation}
    k^n(\vx, \vx) = k(\vx, \vx) - \vk_\mX(\vx)^T \bigl( \mK_{\mX \mX} + \sigma^2 \mI \bigr)^{-1} \vk_\mX(\vx),
\end{equation}
where \(\vk_\mX(\vx)\) is the column vector with entries \([\vk_\mX(\vx)]_i = k(\vx^i, \vx)\), and where \(\mK_{\mX \mX}\) is the matrix with elements \([\mK_{\mX \mX}]_{ij} = k(\vx^i, \vx^j)\).
Recall \(\vx^1, \dots, \vx^n \in \sX\) are the observation locations.
The inversion \((\mK_{\mX \mX} + \sigma^2 \mI)^{-1}\) is potentially costly, and does not depend on the test input \(\vx\).
It is cached by default in GPyTorch.

The other terms are computed as \citep[equations 2.24 and 2.38]{rasmussen2006gpml}
\begin{align}
    \mu^n(\vx') &= \mu(\vx') + \vk_\mX(\vx')^T \bigl( \mK_{\mX \mX} + \sigma^2 \mI \bigr)^{-1} (\vv - \vmu_\mX), \\
    k^n(\vx', \vx) &= k(\vx', \vx) - \vk_\mX(\vx')^T \bigl( \mK_{\mX \mX} + \sigma^2 \mI \bigr)^{-1} \vk_\mX(\vx).
\end{align}
Here, \(\vmu_\mX = (\mu(\vx^1), \dots, \mu(\vx^n))\) is the column vector of prior means and \(\vv = (\evv^1, \dots, \evv^n)\) is the vector of noisy observations of \(f\).
As already mentioned, the inversion \((\mK_{\mX \mX} + \sigma^2 \mI)^{-1}\) does not depend on \(\vx\) and so should be cached.
In fact, for the computation of \(\mu^n(\vx')\), the whole product \((\mK_{\mX \mX} + \sigma^2 \mI)^{-1} (\vv - \vmu_\mX)\) is cached.

Further, the cross-covariances \(\vk_\mX(\vx')\) also do not depend on \(\vx\) and are computed for many \(\vx'\).
Repeating this computation every time the query location \(\vx\) is updated is costly and wasteful.
Writing \(\mX'\) for the matrix of a discretization points \(\vx' \in \sXd\), we may cache the matrix \(\mK_{\mX' \mX} = [k(\vx'^i, \vx^j)]_{i=1,\dots,N_x}^{j=1, \dots, n}\) during the optimization of \(\hat{\alpha}_\text{KG}^n(\vx)\).
Due to the design of the API, this is not done by default in GPyTorch.

In two-stage problems modeled by a GP \(f: \tilde{\sX} \to \R\) with \(\tilde{\sX} = \sX \times \sY \times \sU\), the quantities to be cached are simply \(\bigl(\mK_{\tilde{\mX} \tilde{\mX}} + \sigma^2 \mI \bigr)^{-1}\), \(\bigl(\mK_{\tilde{\mX} \tilde{\mX}} + \sigma^2 \mI \bigr)^{-1} (\vv - \vmu_{\tilde{\mX}})\) and \(\mK_{\tilde{\mX}' \tilde{\mX}}\).
In the experiments in this paper, these quantities were cached in the implementation of the joint and alternating knowledge gradient acquisition functions.

\section{Optimization of the acquisition functions using multi-start L-BFGS-B} \label{app:optimization-of-acqf}
The acquisition functions for joint, alternating and two-step KG in \cref{eq:acqf-joint-kg-approx,eq:acqf-alternating-kg-adjustable-approx,eq:acqf-alternating-kg-fixed-approx,eq:acqf-twostep-kg-approx} are optimized using multi-start L-BFGS-B using the heuristics implemented in the function \texttt{optimize\_acqf} in BoTorch \citep{balandat2020botorch}. The function is called with \texttt{nonnegative=True} since knowledge gradient is non-negative (\cref{thm:app-jointkg-nonneg}).

In the following description of the BoTorch implementation, we use \(\tilde{\sX} = \sX \times \sY \times \sU\) for the search space, which is appropriate for the joint and alternating KG.
For two-step KG, the reader should mentally replace this with \(\sY \times \sU\) or \(\sX \times \sU\) for the first and second steps respectively.
We simply write \(\alpha(\cdot)\) for the acquisition function, dropping the superscript \(n\) so that the number of samples collected so far is implicit.

The start points for L-BFGS-B are generated by first evaluating the acquisition function at 256 locations in \(\tilde{\sX}_\text{raw} \subset \tilde{\sX}\) determined by a scrambled Sobol' sequence.
These are referred to as `raw samples'.
The raw samples are subsampled to those at least 0.01\% of the maximum, \(\alpha_\text{max} = \max_{\tilde{\vx} \in \tilde{\sX}_\text{raw}} \alpha(\tilde{\vx})\), to remove values close to zero.
This gives, \(\tilde{\sX}_{\text{raw}, \gg 0} = \{\tilde{\vx} \in \tilde{\sX}_\text{raw} \,:\, \alpha(\tilde{\vx}) \ge 10^{-4} \alpha_\text{max}\}\).
Finally, 10 restart locations are chosen from \(\tilde{\sX}_{\text{raw}, \gg 0}\) using Boltzmann sampling.
That is, a subsample is taken independently, without replacement using probabilities
\begin{equation*}
    \forall \tilde{\vx} \in \tilde{\sX}_{\text{raw}, \gg 0} \quad p(\tilde{\vx}) \propto e^{\alpha(\tilde{\vx}) / \alpha_\text{max}}.
\end{equation*}
If the largest raw sample is not in the subsample then the last restart is replaced by this value.
If \(\tilde{\sX}_{\text{raw}, \gg 0}\) does not contain at least 10 values, then the threshold of 0.01\% is repeatedly reduced by a factor of 10 until it does.

\Cref{tab:params-acqf-optim} summarizes the important parameters.

There are some differences to this approach for the supply chain problem, which are detailed separately in \cref{app:further-exp-details-supply-chain-constrained-optim}.

\begin{table}[ht]
    \centering
    \begin{tabular}{cc}
    \toprule
        Parameter & Value \\
    \midrule
        Number of restarts & 10 \\
        Number of raw samples & 256 \\
        Max number of L-BFGS-B iterations & 200 \\
    \bottomrule
    \end{tabular}
    \caption{Parameters for optimization of the knowledge gradient acquisition functions.}
    \label{tab:params-acqf-optim}
\end{table}

\section{Optimization for the final recommendation} \label{app:optimization-of-recommendation}
Knowledge gradient acquisition functions are one-step Bayes-optimal when the final recommendation is given by the maximum of the posterior mean.
For the joint and alternating knowledge gradient, this is given by \cref{eq:recommendation-bilevel}.
The optimization is done in two stages.
First, the optimal fixed design, \(\hat{\vx}^{*n}\), is chosen to maximize \cref{eq:recommendation-bilevel-x-approx} using multi-start L-BFGS-B.
Then, the optimal control policy, \(\hat{\vg}^{*n}\), is optimized for each \(\vu\) of interest using single-start L-BFGS-B.
This section explains in detail how these optimizations are performed.

Note that this approach is not used for the supply chain problem, where instead recommendations are generated using an exhaustive search following \citep{xie2021globallocal}.

\subsection{One-shot optimization of fixed design via multi-start L-BFGS-B} \label{app:optimization-of-recommendation-x}
The optimal fixed design is approximated by rewriting \cref{eq:recommendation-bilevel-x-approx} as
\begin{equation}
    \hat{\vx}^{*n}
    \in \argmax_{\vx \in \sX} \sum_{\vu \in \sU_{\mathrm{MC}, \text{rec}}} \max_{\vy \in \sY} \mu^{n}(\vx, \vy, \vu)
    = \argmax_{\substack{\vx \in \sX \\ \vy^1, \dots, \vy^{N_{u,\text{rec}}} \in \sY}} \sum_{i=1}^{N_{u,\text{rec}}} \mu^n(\vx, \vy^i, \vu^i),
\end{equation}
where we have enumerated \(\sU_{\mathrm{MC}, \text{rec}} = \{\vu^1, \dots, \vu^{N_{u,\text{rec}}}\}\).

This is optimized as one large \(d_x d_y^{N_{u,\text{rec}}}\)-dimensional problem, which we term a `one-shot' optimization after the one-shot knowledge gradient \cite{balandat2020botorch}. Here \(d_x\) and \(d_y\) are the dimensions of \(\sX \subset \R^{d_x}\) and \(\sY \subset \R^{d_y}\), respectively.
It is optimized using multi-start L-BFGS-B with 10 restarts, as implemented in \texttt{optimize\_acqf} in BoTorch.
A \texttt{batch\_limit} of 4 is used, meaning up to 4 restarts are grouped in a single \(4 d_x d_y^{N_{u,\text{rec}}}\) optimization.

Care must be taken over the choice of the starting point for each of the 10 restarts.
Simply scattering a large number of points over \(\sX \times \sY^{N_{u,\text{rec}}}\) using a Sobol' sequence will lead the algorithm to converge to a local rather than global minimum.
Indeed, it is highly unlikely that the starts will be well spread over \(\sY\) for all of the \(N_{u,\text{rec}}\) realizations of \(\rvu\).
Instead, we take Sobol' samples \(\sX_\text{raw} \subset \sX\) and \(\sY_\text{raw} \subset \sY\) of sizes \(N_{\text{raw},x}\) and \(N_{\text{raw},y}\), respectively, and combine them in a Cartesian fashion.
Finally, Boltzmann sampling is used to select promising initial values for the 10 restarts.
Explicitly, the steps are
\begin{enumerate}
    \item Take Sobol' samples \(\sX_\text{raw} \subset \sX\) and \(\sY_\text{raw} \subset \sY\) of sizes \(N_{\text{raw},x}\) and \(N_{\text{raw},y}\), respectively;
    \item For each pair \((\vx, \vu) \in \sX \times \sU_{\mathrm{MC}, \text{rec}}\), find the best \(\vy \in \sY_\text{raw}\),
    \begin{equation*}
        \vy(\vx, \vu) \in \argmax_{\vy \in \sY_\text{raw}} \mu^n(\vx, \vy(\vx, \vu), \vu);
    \end{equation*}
    \item Assign a score to each \(\vx \in \sX_\text{raw}\) as
    \begin{equation*}
        s(\vx) = \frac{1}{N_u} \sum_{\vu \in \sU_{\mathrm{MC}, \text{rec}}} \mu^n(\vx, \vy(\vx, \vu), \vu);
    \end{equation*}
    \item Use Boltzmann sampling to randomly choose 10 initial locations from \(\sX_\text{raw}\) from the restarts without replacement, with independent probabilities
    \begin{equation*}
        \forall \vx \in \sX_\text{raw} \quad p(\vx) \propto e^{z(x)}
        \qquad\text{where}\qquad
        \forall \vx \in \sX_\text{raw} \quad z(\vx) = \frac{s(\vx) - \overline{s}}{\sigma_s},
    \end{equation*}
    where \(\overline{s}\) and \(\sigma_s\) are the mean and sample standard deviation of the \(s(\vx)\).
    If the \(\vx \in \sX_\text{raw}\) with the highest score was not selected, then the last restart is replaced by this value.
\end{enumerate}

The number of raw samples used depends on the dimension of the problem.
In the test problems with \(d_x = 1, 2\), we used \(N_{\text{raw},x} = 32\), while when \(d_x = 4\) we used \(N_{\text{raw},x} = 128\).
Similarly, we used \(N_{\text{raw},y} = 32\) when \(d_y=1,2\) and \(N_{\text{raw},y} = 128\) when \(d_y=4\).
We use an \texttt{init\_batch\_limit} of \(2^{17}\), meaning we will evaluate \(\mu^n(\vx, \vy, \vu)\) for all \((\vx, \vy, \vu) \in \sX_\text{raw} \times \sY_\text{raw} \times \sU_{\mathrm{MC}, \text{rec}}\) in batches of \(2^{17}\) points, which is significantly faster than using a for loop in python.
In our experiments with the largest dimensions, we have \(|\sX_\text{raw} \times \sY_\text{raw} \times \sU_{\mathrm{MC}, \text{raw}}| = 32 \times 128 \times 128 = 2^{19}\), so this equates to four batches.
Limits on memory prevent us from increasing the \texttt{init\_batch\_limit} arbitrarily high.
\Cref{tab:params-recx-optim} summarizes the key parameters for the optimization of \(\hat{\vx}^{*n}\).

\begin{table}[ht]
    \centering
    \begin{tabular}{cc}
    \toprule
        Parameter & Value \\
    \midrule
        Number of restarts & 10 \\
        Number of raw samples for \(\sX\) & 32 or 128 (dimension dependent) \\
        Number of raw samples for \(\sY\) & 32 or 128 (dimension dependent) \\
        Batch limit & 4 \\
        Batch limit for raw samples & \(2^{17}\) \\
        Max number of L-BFGS-B iterations & 200 \\
    \bottomrule
    \end{tabular}
    \caption{Parameters for optimization of the recommended fixed design, \(\hat{\vx}^{*n}\).}
    \label{tab:params-recx-optim}
\end{table}

\subsection{Single-start L-BFGS-B for optimization of control policy} \label{app:optimization-of-recommendation-y}
Once the recommended fixed design \(\hat{\vx}^{*n}\) has been found, we must optimize to find the recommended control policy \(\hat{\vg}^{*n}(\vu) \in \sY\) for the \(\vu \in \sU\) of interest.
For example, when computing the simple regret from \cref{eq:simple-regret-approx}, we wish to compute \(\hat{\vg}^{*n}(\vu^j)\) for a sample \(\vu^1, \dots, \vu^{N_u}\) of the environmental variable \(\rvu\), independent of the one used to optimize \(\hat{\vx}^{*n}\).
Empirically, we found that a single start L-BFGS-B is sufficient for this problem.
Again, we use \texttt{optimize\_acqf} from BoTorch, with \(N_{\text{raw},y} = 32\) or \(128\) raw samples depending on the problem dimension, as in \cref{app:optimization-of-recommendation-x}.
The best of the raw samples is used as the initial point for the L-BFGS-B.
Optimizations for the different \(\vu\) are run separately (corresponding to a \texttt{batch\_limit} of 1).

\begin{table}[ht]
    \centering
    \begin{tabular}{cc}
    \toprule
        Parameter & Value \\
    \midrule
        Number of restarts & 1 \\
        Number of raw samples & 32 or 128 (dimension dependent) \\
        Batch limit & 1 \\
        Batch limit for raw samples & \(> N_{\text{raw},y}\) \\
        Max number of L-BFGS-B iterations & 200 \\
    \bottomrule
    \end{tabular}
    \caption{Parameters for optimization of the recommended control policy, \(\hat{\vg}^{*n}(\vu)\).}
    \label{tab:params-recy-optim}
\end{table}

\section{Further Experimental Details} \label{app:further-exp-details}
In this appendix we give details and parameters used in the experiments necessary for reproducing the results.

\subsection{Experimental parameters} \label{app:further-exp-details-params}
\Cref{tab:experiment-parameters} summarizes the parameters used for the optimization of the acquisition function and estimation of the the (average) simple regret in the empirical studies. These are in addition to the parameters which have already been specified in \cref{tab:params-acqf-optim,tab:params-recx-optim,tab:params-recy-optim}.

\begin{table}[ht]
    \centering
    \begin{tabular}{p{8cm}cc}
        \toprule
        Parameter & Symbol & Value \\
        \midrule
         Number of fantasy samples & \(N_v\) & 64 \\
         Fixed design discretization size & \(N_x\) & 20 \\
         Adjustable variable discretization size & \(N_y\) & 20 \\
         Number of qMC points for environmental variable when optimizing acquisition functions & \(N_u\) & 64 \\
         Number of qMC points for environmental variable when optimizing \(\vx^{*n}\) and estimating simple regret & \(N_{u,\text{rec}}\) & 128 \\
        \bottomrule
    \end{tabular}
    \caption{Parameters used for the optimization of the acquisition function and estimation of the simple regret in the empirical studies.}
    \label{tab:experiment-parameters}
\end{table}

\Cref{tab:experiment-dimensions-lengthscales} summarizes the dimensions and length scales used for the different GP sampled synthetic problems. All GP generated test problems used a Mat\'ern-5/2 kernel with an output scale of 10, defined on the unit hypercube, \([0, 1]^d\). They were generated with a random Fourier feature approximation with 1024 features, as described in \citep{rahimi2007rff} and implemented in BoTorch \citep{balandat2020botorch}.

For the problems with a combined dimension of \(d = d_x + d_y + d_u = 6\) an initial design of \(n_0 = 50\) points was used, with a total budget of \(n_\mathrm{tot} = 400\) evaluations.
The two-step algorithms used \(n_1 = n_2 = 50\) points for the initial design of the first and second phases and had a total evaluation budget of \(n_{\mathrm{tot},1} = n_{\mathrm{tot},2} = 200\) for each phase.
These problems were used for the examining the effects of dimension and observation noise, and used a length scale of \(0.4\) in all dimensions.

The problems used to examine the effect of length scale had a combined dimension of \(d = d_x + d_y + d_u = 3\).
For these, an initial design of \(n_0 = 10\) points was used, with a total budget of \(n_\mathrm{tot} = 100\) evaluations.
The two-step algorithms used \(n_1 = n_2 = 10\) points for the initial design and had a total evaluation budget of \(n_{\mathrm{tot},1} = n_{\mathrm{tot},2} = 50\) for each step.

For all GP generated test problems, the fixed design used by the two-step algorithm in the first step is \(\vx_\text{step-1} = (0.5, \dots, 0.5) \in \R^{d_x}\).

For the optical table problem, the fixed design is the spring constant \(k\), the adjustable variable is the damping coefficient \(c\), and the uncertain environmental parameter is the angular frequency of the floor vibration \(\omega\).
The search space for the optimization is \(12\,N\,mm^{-1} \leq k \leq 50\,N\,mm^{-1}\), \(1\,Ns\,mm^{-1} \leq c \leq 10\,Ns\,mm^{-1}\) and \(1\,Hz \leq \omega/2\pi \leq 100\,Hz\).
The fixed design used in the first step of the two-step algorithms is \(k_\text{step-1} = 31\,N\,mm^{-1}\).

The optical table test problem used an initial design of \(n_0 = 6\) points and a total budget of \(n_\mathrm{tot} = 100\) evaluations. The two-step algorithms used \(n_1 = n_2 = 6\) points in the initial design and had a total evaluation budget of  \(n_{\mathrm{tot},1} = n_{\mathrm{tot},2} = 50\) for each step.

\begin{table}[ht]
    \centering
    \begin{tabular}{ccccccc}
        \toprule
        \multirow{2}*{Experiment} & \multicolumn{3}{c}{Dimensions} & \multicolumn{3}{c}{Length scales} \\
        \cmidrule(r){2-4} \cmidrule(lr){5-7}
        & \(d_x\) & \(d_y\) & \(d_u\) & \(\ell_x\) & \(\ell_y\) & \(\ell_u\) \\
        \midrule
        Dimensions        & 2 & 2 & 2 & 0.4 & 0.4 & 0.4 \\
                          & 4 & 1 & 1 & 0.4 & 0.4 & 0.4 \\
                          & 1 & 4 & 1 & 0.4 & 0.4 & 0.4 \\
                          & 1 & 1 & 4 & 0.4 & 0.4 & 0.4 \\
        Length scales     & 1 & 1 & 1 & 0.1 & 2   & 2   \\
                          & 1 & 1 & 1 & 2   & 0.1 & 2   \\
                          & 1 & 1 & 1 & 2   & 2   & 0.1 \\
        Observation noise & 2 & 2 & 2 & 0.4 & 0.4 & 0.4 \\
        \bottomrule
    \end{tabular}
    \caption{Dimensions and length scales used in the GP sampled test problems in the empirical studies. All problems were generated with a Mat\'ern-5/2 kernel with an output scale of 10.}
    \label{tab:experiment-dimensions-lengthscales}
\end{table}

\subsection{Analytical solution for the optical table test problem} \label{app:further-exp-details-optical-table}
The optical table system described in \cref{sec:experiments-optical-table} obeys the following differential equation
\begin{equation} \label{eq:app-optical-table-ode}
    m \ddot{y} + c \dot{y} + 4ky = c\dot{y}_f + 4ky_f + 4k\ell - gm
\end{equation}
where \cref{tab:optical-table-params} summarizes the different constants and variables used here.
We assume that the floor is undergoing simple harmonic motion
\begin{equation} \label{eq:app-optical-table-floorshm}
    y_f = A\cos(\omega t + \phi_f).
\end{equation}
Then the solution to the differential equation is the superposition of a steady state vibration, a constant offset and a transient response which depends on the initial conditions and decays over time,
\begin{equation} \label{eq:app-optical-table-solution}
    y = \underbrace{B\cos(\omega t + \phi)}_\text{steady state vibration} + \underbrace{\ell - \frac{gm}{2k}}_\text{constant offset} + \underbrace{y_c}_\text{transient}.
\end{equation}
The transient part is given by
\begin{equation} \label{eq:app-optical-table-solution-transient}
    y_c = \begin{cases}
        C_1 e^{\alpha_1 t} + C_2 e^{\alpha_2 t} & \text{if } c^2 > 8km \text{ (overdamped)}, \\
        (C_1 + C_2 t) e^{-ct / 2m} & \text{if } c^2 = 8km \text{ (critically damped)}, \\
        C e^{-ct/2m} \cos(\beta t + \phi_c) & \text{if } c^2 < 8km \text{ (underdamped)}.
    \end{cases}
\end{equation}
Here \(\alpha_1, \alpha_2 = \frac{-c \pm \sqrt{c^2 - 8km}}{2m}\) are the roots of the characteristic equation and \(\beta = \frac{\sqrt{8km - c^2}}{2m}\) is the imaginary part of \(\alpha_1\) when it has one. The constants \(C_1\), \(C_2\), \(C\) and \(\phi_c\) are arbitrary.

Thus, the amplitude ratio for the steady state solution is
\begin{equation} \label{eq:app-optical-table-ampratio}
    \frac{B}{A} = \sqrt{\frac{16k^2 + c^2 \omega^2}{(4k - m \omega^2)^2 + c^2 \omega^2}}.
\end{equation}

\begin{table}[ht]
    \centering
    \begin{tabular}{cp{10cm}}
        \toprule
        Parameter & Description \\
        \midrule
        \(k\) & The spring constant associated with each of the four springs \\
        \(c\) & The coefficient of the damper \\
        \(y\) & The vertical displacement of the table \\
        \(y_f\) & The vertical displacement of the floor \\
        \(\omega\) & The frequency of the simple harmonic motion of the floor \\
        \(\phi_f\) & The phase offset of the simple harmonic motion of the floor \\
        \(\phi\) & The phase offset of the steady state vibration of the table \\
        \(A\) & The amplitude of the simple harmonic motion of the floor \\
        \(B\) & The amplitude of the steady state vibration of the table \\
        \(m = m_1 + m_2\) & The combined mass of the table, \(m_1\), and the equipment, \(m_2\) \\
        \(\ell\) & The natural height of the table above the floor (when the springs are at their natural length, in the absence of gravity) \\
        \(g\) & The local coefficient of gravity \\
        \bottomrule
    \end{tabular}
    \caption{Constants and variables used in the model of the optical table.}
    \label{tab:optical-table-params}
\end{table}

\subsection{Supply chain management problem} \label{app:further-exp-details-supply-chain}

The supply chain problem has several features not considered in the other test problems. Namely, it has normally distributed environmental variables, it has constraints coupling the fixed and adjustable variables, and as defined in \citep{xie2021globallocal} it has a finite search space (\(\vx\) and \(\vy\) can only take integer values and \(\vx\) must be a multiple of \(20\)). These lead to a some natural modifications to the implementation of the algorithm.

\subsubsection{Normally distributed environmental variables} \label{app:further-exp-details-supply-chain-normal-u}
The approximations to the knowledge gradient acquisition functions in \cref{eq:acqf-joint-kg-approx,eq:acqf-alternating-kg-fixed-approx,eq:acqf-alternating-kg-adjustable-approx,eq:acqf-twostep-kg-approx} are defined using an MC or qMC estimate of the expectation \(\E_\rvu[\cdot]\) over the environmental variables. Since qMC estimates are lower variance unbiased estimators, we continue to use a qMC estimate for the expectation over the normally distributed environmental variables \(\rvu\). Specifically, we generate the sample by first generating a Sobol' sample of \(N_u\) points in \([0, 1]^{d_u}\), and then pass these point-wise and coordinate-wise through the inverse CDF \(F_u^{-1}\) of the normal distribution. That is,
\begin{gather*}
    (\vz^1, \dots, \vz^{N_u}) \sim \mathrm{Sobol}([0, 1]^{d_u}, N_u), \\
    F_u(u') = F_{u,j}(u') = \mathbb{P}(\ervu_j \leq u') \quad \forall j, \\
    \forall i=1,\dots,N_u \, \forall j=1,\dots,4 \quad \evu_j^i = F_u^{-1}(\evz_j^i).
\end{gather*}
We use the implementation in BoTorch \citep{balandat2020botorch} and use the same method for the qMC estimates over \(\rvu\) when generating recommendations and estimating regret.

In addition to requiring a qMC estimate of the expectation over the normal environmental variables, we also choose to use normally distributed \(\rvu\) in the initial sample. For the other test problems the initial sample is a Sobol' sample in \([0, 1]^{d_x + d_y + d_u}\), rescaled to the bounds of the test problem in the case of the optical table.
For the supply chain problem, we instead independently generate a uniform Sobol' sample on \([0, 1]^{d_x + d_y}\) for \(\sX \times \sY\) and a normal Sobol' sample on \(\R^d_u\) for \(\sU\). These are concatenated to give a sample for \([0, 1]^{d_x + d_y} \times  \sR^{d_u}\).
Since the problem has constraints, we discard infeasible points from the initial design and continue to generate samples using the Sobol' sequences until sufficiently many feasible initial points have been sampled.

Finally, we observe that, unlike uniformly distributed environmental variables, normally distributed \(\rvu\) have no natural bounds. When optimizing the acquisition function, we use the \(1\%\) and \(99\%\) marginal quantiles to give bounds on \(\sU\).
When normalizing the input space before modeling with a GP, we use a box with edges one standard deviation either side of the mean. This is preferable to using the \(1\%\) and \(99\%\) quantiles which were used for the bounds, because the latter would concentrate the majority of probability mass of \(\rvu\) in a very small volume relative to the priors used for the length scales of the GP.

\subsubsection{Optimization of the acquisition function} \label{app:further-exp-details-supply-chain-constrained-optim}
The usual process we have followed for optimizing the acquisition function is explained in \cref{app:optimization-of-acqf}. We first generate promising initial samples by evaluating the acquisition function at a large number of points, then selecting a random sample of the best with Boltzmann sampling, always ensuring the best evaluated point is among the chosen ones. Finally, we optimize each of these using L-BFGS-B and select the best.

However, for the supply chain problem we must satisfy the constraint \(y_1 \leq x/20\), we must restrict to only 10 different \((s, S)\)-ordering policies for the raw chemical, and we must ensure the soy \(x\) and daily production \(y_1\) are chosen from among the finite set detailed in \cref{tab:supply-chain-variables} of the main text.
We achieve this with the following modifications to the process in \cref{app:optimization-of-acqf}. First, when generating `raw samples', we use the MCMC approach implemented in BoTorch \citep{balandat2020botorch} to generate a sample for over all but the \((s, S)\) variables. We then take the Cartesian product with the 10 possible values of \((s, S)\) and proceed with the usual Boltzmann sampling to choose promising candidates for the 10 restarts, again ensuring the best of the raw samples is one of the restarts.
Instead of L-BFGS-B, we use the SLSQP implementation in scipy to optimize these restarts subject to the constraint \(y_1 \leq x/20\). We keep \((s, S)\) fixed during this optimization.
Finally, the best of these optimized candidates is rounded to give the next sample location as follows. First the soy quantity \(x\) is rounded to the nearest 20, then the daily production \(y_1\) is clipped to the nearest integer in the range \([0, x/20]\).

\subsubsection{Modifications to the two-step algorithms}
The two-step knowledge gradient and random sampling algorithms generate a control policy in the first step which is then held fixed in the second step. For our other experiments, this policy is evaluated by running an optimization using L-BFGS-B to find the maximum of the final posterior mean from step 1. For the supply chain problem, we instead use an exhaustive search to choose the best \(\rvy = (y_1, s, S)\) for each \(\vu\).

However, the soy \(x\) is optimized in the second step, which affects the feasible region for \(y_1\). For convenience of the implementation, if the chosen \(y_1\) lies outside the feasible region, then we simply round it to the nearest integer in \([0, x/20]\).

\subsection{Fitting the GP Surrogate Model} \label{app:further-exp-details-gpsurrogate}

The Gaussian process \(f \sim \mathcal{GP}(\mu, k)\) used to model the unknown, expensive function \(h : \sX \times \sY \times \sU \to \R\) (or a restriction of \(h\) in the case of the two-step algorithms) uses a constant mean, \(\mu\), and a Mat\'ern-5/2 kernel, \(k\), with separate length scales for each input dimension (known as \emph{automatic relevance determination}, \citep{williams1995gp}).
The constant mean, length scales, output scale and standard deviation of the observation noise of the GP are collectively referred to as the hyperparameters of the GP.

The hyperparameters are fitted to the observations using maximum a posterior (MAP) estimation.
Gamma prior distributions are placed on the length scales, output scale and noise standard deviation, while the mean uses an improper, uninformative prior equivalent to a Gaussian with infinite variance.

The hyperparameters are updated at the start of each iteration, to incorporate the new observations.
The data is normalized to have zero mean and unit variance, and MAP estimates are generated for all hyperparameters to fit the normalized observations.
When using the GP in the acquisition function or when making recommendations, the GP is untransformed.
That is, if \(f_\text{norm} \sim \mathcal{GP}(\mu_\text{norm}, k_\text{norm})\) is the GP fitted to the normalized data, then \(f = a + b f_\text{norm}\) is the GP referred to in the main text, where \(a\) and \(b\) are the mean and sample standard deviation of the observations.

On problems where the observation noise is known to be zero, the observation noise variance, \(\sigma^2\), is fixed at \(10^{-8}\) instead of being estimated from the observations.
In our experiments, the observation noise variance is only fitted for the experiment in \cref{fig:results-noisy}.

The Gamma distributions \(\mathrm{Gamma}(\alpha, \beta)\) in \cref{tab:gp-hyperparameters} are parameterized in terms of a concentration parameter, \(\alpha\), and rate parameter, \(\beta\), and have density function
\begin{equation} \label{eq:app-gamma-pdf}
    p(z) = \frac{\beta^\alpha}{\Gamma(\alpha)} z^{\alpha - 1} e^{-\beta z}.
\end{equation}

\begin{table}[ht]
    \centering
    \begin{tabular}{ccp{7.8cm}}
        \toprule
        Hyperparameter & Prior distribution & Notes \\
        \midrule
        Constant mean, \(\mu\) & None & The mean is only fitted on the first iteration \\
        Length scales, \(\ell_1, \dots, \ell_d\) & \(\mathrm{Gamma}(3, 10)\) & A separate length scale is fitted for each dimension \\
        Output scale, \(s\) & \(\mathrm{Gamma}(2, 0.15)\) & \\
        Observation noise variance, \(\sigma^2\) & \(\mathrm{Gamma}(1.1, 0.05)\) & On problems where the noise is known to be zero, this is fixed to zero rather than being fitted \\
        \bottomrule
    \end{tabular}
    \caption{Prior distributions and fitting notes on the GP hyperparameters. Gamma distributions, \(\mathrm{Gamma}(\alpha, \beta)\), are specified in terms of their concentration, \(\alpha\), and rate, \(\beta\), parameters.}
    \label{tab:gp-hyperparameters}
\end{table}

\end{document}